\def\eqref#1{equation~\ref{#1}}
\def\1{\bm{1}}
\DeclareMathAlphabet{\mathsfit}{\encodingdefault}{\sfdefault}{m}{sl}
\SetMathAlphabet{\mathsfit}{bold}{\encodingdefault}{\sfdefault}{bx}{n}
\newcommand{\E}{\mathbb{E}}
\newcommand{\reg}{\lambda}
\DeclareMathOperator*{\argmax}{arg\,max}
\DeclareMathOperator*{\argmin}{arg\,min}
\definecolor{perfblue}{RGB}{64, 114, 175}
\definecolor{expert}{HTML}{008000}
\definecolor{error}{HTML}{f96565}
\definecolor{learner}{HTML}{F79646}
\definecolor{pistar}{HTML}{8064A2}
\definecolor{hgreen}{RGB}{217, 234, 211}
\definecolor{hred}{RGB}{244, 204, 204}
\newcommand{\algcommentlight}[1]{\textcolor{perfblue}{\transparent{0.8}\small{\texttt{\textbf{//\hspace{2pt}#1}}}}}
\newcommand{\algcommentlightright}[1]{\hfill\textcolor{perfblue}{\transparent{0.8}\small{\texttt{\textbf{//\hspace{2pt}#1}}}}}
\DeclareMathOperator*{\Lhat}{\hat{L}} %
\newcommand{\PP}{\mathbb{P}}
\newcommand{\Scal}{\mathcal{S}}
\newcommand{\A}{\mathcal{A}}
\newcommand{\Rcal}{\mathcal{R}}
\newcommand{\mean}{\overline}
\newtheorem{theorem}{Theorem}[section]
\newtheorem{lemma}[theorem]{Lemma}
\newtheorem{corollary}[theorem]{Corollary}
\newtheorem{definition}[theorem]{Definition}
\newcommand{\piStar}{\pi^{\star}} %
\title{Efficient Imitation Under Misspecification}
\author{Nicolas Espinosa-Dice\thanks{Correspondence to: Nicolas Espinosa-Dice \textless ne229@cornell.edu\textgreater.}
, Sanjiban Choudhury, Wen Sun\\
Department of Computer Science\\
Cornell University\\
\texttt{\{ne229,sc2582,ws455\}@cornell.edu}
\And Gokul Swamy 
\\
Robotics Institute\\
Carnegie Mellon University\\
\texttt{gswamy@cmu.edu}}
\begin{document}

\maketitle

\begin{abstract}
We consider the problem of imitation learning under \textit{misspecification}: settings where the learner is fundamentally unable to replicate expert behavior everywhere. This is often true in practice due to differences in observation space and action space expressiveness (e.g. perceptual or morphological differences between robots and humans). Given the learner must make some mistakes in the misspecified setting, interaction with the environment is fundamentally required to figure out which mistakes are particularly costly and lead to \textit{compounding errors}. However, given the computational cost and safety concerns inherent in interaction, we would like to perform as little of it as possible while ensuring we have learned a strong policy. Accordingly, prior work has proposed a flavor of \textit{efficient inverse reinforcement learning} algorithms that merely perform a computationally efficient \textit{local search} procedure with strong guarantees in the realizable setting. We first prove that under a novel structural condition we term \textit{reward-agnostic policy completeness}, these sorts of local-search based IRL algorithms are able to avoid compounding errors. We then consider the question of \textit{where} we should perform local search in the first place, given the learner may not be able to ``walk on a tightrope'' as well as the expert in the misspecified setting. We prove that in the misspecified setting, it is beneficial to \textit{broaden} the set of states on which local search is performed to include states reachable by good policies that the learner can actually play. We then experimentally explore a variety of sources of misspecification and how \textit{offline} data can be used to effectively broaden where we perform local search from.

\end{abstract}

\section{Introduction}
\label{sec:intro}
Interactive imitation learning (IL) is a powerful paradigm for learning to make sequences of decisions from an expert demonstrating how to perform a task.
While offline imitation learning approaches suffer from \textit{covariate shift} between the training distribution (i.e. the expert's state distribution) and the test distribution (i.e. the learner's state distribution)---and the associated compounding of errors---interactive approaches allow the learner to observe states from the test distribution during training via performing rollouts, unlocking the ability to recover from mistakes \citep{ross2011reduction}.

Broadly speaking, the difference between the expert's performance and 
the learned policy's performance can be attributed to three forms of error: 
\begin{enumerate}
    \item \textit{Optimization error}: The error resulting from imperfect search within a policy class (e.g. due to the difficulty of finding a global minima on non-convex problems).
    \item \textit{Finite sample error}: The statistical error arising from limited expert demonstrations.
    \item \textit{Misspecification error}: The irreducible error resulting from the learner's policy class not containing the expert's policy.
\end{enumerate}
Notably, misspecification error is a function of both the expert policy and the learner's policy class; it cannot be improved by a better algorithm or more computation. Much of the prior work in imitation learning has focused on the first two sources of error, while avoiding the misspecification error by imposing
\textit{expert realizability}: the assumption that the expert policy is within the learner's policy class
\citep{kidambi2021mobile,swamy2021moments,swamy2022minimax,xu2023provably,swamy2023inverse,ren2024hybrid}.
In other words, expert realizability is the assumption that the learner can perfectly imitate the expert's actions in \textit{all} states: that more data and computation are all we need for optimality.

However, in practice, 
it is often impossible to imitate the expert perfectly. For example, this is sometimes due to differences in observation spaces.
In self-driving applications,
autonomous vehicles often have distinct perception
compared to human drivers (e.g. the ability to recognize hand gestures at intersections), giving the human expert privileged information over the learner
\citep{swamy2022sequence}.
Similarly, in legged locomotion,
recent work has frequently used an expert policy trained with privileged information which is only known in simulation and therefore
therefore unavailable to the learner that operates in the real world \citep{kumar2021rma,kumar2022adapting,liang2024rapid}.

Realizability can also be an inaccurate assumption due to a mismatch between learner and expert action spaces.
In humanoid robotics,
the morphological differences between robots and humans 
prevent perfect human-to-robot motion re-targeting \citep{zhang2024wococo,he2024omnih2o,al2023locomujoco}.
More generally, physical robots face the problem of changing dynamics
due to wear and tear, as well as manufacturing imperfections,
that cause variations in link lengths and other physical properties over the course of their operation. Thus, even if demonstrations are collected via teleoperation of a robot of the same make and model, expert realizability doesn't necessarily hold.

In response, our paper considers the more general \textit{misspecified} setting, where the 
learner is not \textit{necessarily} capable of perfectly imitating the expert's behavior. We analyze how the misspecification error interrelates with the optimization and finite sample errors. 
More specifically, we ask:
\begin{quote}
\begin{center}
    \textit{Under what condition can interactive imitation learning in the misspecified setting avoid compounding errors while retaining computational efficiency?}
\end{center}
\end{quote}
The last two words of the preceding question are central to our study.
By reducing the problem of imitation learning to reinforcement learning against a learned reward, 
interactive imitation learning approaches like inverse reinforcement learning (IRL; \citet{ziebart2008maximum,ho2016generative})
face a similar \textit{global} exploration problem to 
that of reinforcement learning---
in the worst case, needing to explore all paths through the state space to find one reward (e.g. a tree-structured problem with sparse rewards) \citep{kakade2003sample, swamy2023inverse}.
Thus, in order to focus the exploration on useful states,
efficient imitation algorithms leverage the expert's state distribution.
In particular, rather than resetting the learner to the true starting state distribution, the learner is instead reset to states from the expert's demonstrations, resulting in an exponential decrease in interaction complexity \citep{swamy2023inverse}.
We refer to this family of reset-based techniques as \textit{efficient IRL}. 

At a high level, efficient IRL can be understood as replacing the hard problem of global exploration with a local exploration problem \textit{over the reset distribution}---in this case, the expert states. In other words, the RL subroutine can be thought of as optimizing a policy with a similar state visitation distribution to the expert. 
However, prior work in efficient IRL makes the crucial assumption expert realizability \citep{swamy2023inverse}. In the realizable setting, the best policy in the learner's policy class is the expert policy, so the goal of local policy search is to recover the expert policy, in which case it is natural to perform local policy optimization over the expert states.

However, in the misspecified setting, there may be no policies in the policy class that match the expert's state distribution, leading to two possible pitfalls of performing local policy search over expert states. First, while the learner may be able to optimize the policy \textit{at} expert states (i.e. after being reset to them), the learner may not be able to \textit{reach} the expert states. For example, even if a learned humanoid control policy can complete a backflip from the halfway point, it might not be able to get to the halfway point in the first place. 
Second, even if the learner can approximately match the expert's one-step local action, it may be unable to \textit{follow-through} with the rest of the expert's trajectory. For example, even if a manipulator is able to move a peg close to a hole, it may be unable to insert the peg as dexterously as a person can due to a lack of haptic feedback.

Given the practical importance and relatively limited theoretical study of the misspecified setting, our first contribution is a condition that informs when IRL with expert resets works in the misspecified setting. At a high level, our condition measures how well, with respect to the globally optimal solution, the learner must perform local optimization over the expert states. Critically, expert realizability is not required for our condition to be satisfied, allowing for meaningful misspecification.
\begin{quote}
    \textbf{Contribution 1.} We define a new structural condition for the misspecified setting, 
    \textit{reward-agnostic approximate policy completeness}, under which 
    our efficient IRL algorithm with expert resets can avoid quadratically compounding errors.
\end{quote}

We then extend our analysis of efficient IRL to performing resets to states beyond those seen in the expert demonstration. As mentioned above, if no policy in the learner's policy class can reach an expert state or follow-through as the expert would, policy optimization at such a state doesn't seem particularly useful. This of course begs the question of where we should reset the learner to if we want to speed up the policy search subroutine of inverse RL if we can't uniformly imitate the expert.

Informally speaking, the choice of reset distribution in a local search procedure specifies the set of policies we're searching over: those with similar visitation distributions to the reset distribution. In the misspecified setting, our goal is to compete with the optimal \textit{realizable} policy---the optimal policy the learner can actually choose. Thus, a natural choice for reset distribution is one that covers the states this optimal realizable policy visits, guaranteeing we're doing as well as one could hope.

Unfortunately, we don't know a priori what this optimal realizable policy is -- we'd have already solved the misspecified imitation problem if we did. Intuitively though, one wants to \textit{broaden} the support of the reset distribution to cover policies that are not exactly the expert's. Practically speaking, we might get examples of such states by looking at \textit{offline data}, such as robot play data \citep{lynch2020learning,wang2023mimicplay}, suboptimal robot demonstrations \citep{brown2019extrapolating,chang2021mitigating,yang2021trail,hoang2024sprinql}, or internet data \citep{chang2023learning,chang2024dataset}, all of which are more likely to be realizable. We consider the question of how incorporating the offline data into the reset distribution affects IRL performance, and we show that the performance improvement depends on how well the new reset distribution covers the optimal realizable policy.

\begin{quote}
    \textbf{Contribution 2.} We theoretically show that broadening the reset distribution beyond the expert demonstrations so that it covers the state distribution of the optimal realizable policy, improves efficient IRL performance under misspecification. 
\end{quote}
Finally, we corroborate our theory by empirically investigating several potential sources of misspecification, showing that non-expert reset distributions are preferable under misspecification. 
\begin{quote}
    \textbf{Contribution 3.} We explore several distinct forms of misspecification on continuous control and locomotion tasks, demonstrating that offline data that better covers the optimal realizable policy improves efficient IRL's performance. 
\end{quote}

We begin with a discussion of related work.

\section{Related Work}
\label{sec:rel-work}
\textbf{Reinforcement Learning.} Prior work in reinforcement learning (RL) has examined leveraging exploration distributions to improve learning \citep{kakade2002approximately, bagnell2003policy, ross2011reduction,song2022hybrid}. 
Similar to \citet{song2022hybrid}, we consider access to offline data but differ by considering the imitation learning setting, while \citet{song2022hybrid} considers the known-reward reinforcement learning setting.
We adapt the Policy Search via Dynamic Programming (PSDP) algorithm of \citet{bagnell2003policy} as our RL solver and leverage its performance guarantees in our analysis. We use \citet{jia2024agnostic}'s lower bound on agnostic RL with expert feedback to show why agnostic IRL is hard. 

Prior analyses of policy gradient RL algorithms---such as PSDP \citep{bagnell2003policy}, Conservative Policy Iteration (CPI, \citet{kakade2002approximately}), and Trust Region Policy Optimization (TRPO, \citet{schulman2015trust})---use a \textit{policy completeness} condition to establish a performance guarantee with respect to the \textit{global}-optimal policy \citep{agarwal2019reinforcement, bhandari2024global}. In other words, policy completeness is used when comparing the learned policy to the optimal (i.e. best possible) policy and not simply the best policy in the policy class. 
We generalize the policy completeness condition from the RL setting with known rewards to the imitation learning setting with unknown rewards, resulting in novel structural condition we term reward-agnostic policy completeness.
Our paper also builds on work in statistically tractable agnostic RL \citep{jia2024agnostic}. 

\textbf{Imitation Learning.} Our work examines the issue of distribution shift and compounding errors in IRL, which was introduced by \citet{ross2010efficient}. \citet{ross2011reduction}'s DAgger algorithm is capable of avoiding compounding errors but requires an interactive (i.e. queryable) expert and \textit{recoverability} \citep{rajaraman2021value, swamy2021moments}, which we do not assume in our setting.  

Our algorithm and results are not limited to the tabular and linear MDP settings, differentiating it from prior work in efficient imitation learning \citep{xu2023provably, viano2024imitation}. 
Our work relates to \citet{shani2022online}, who propose a Mirror Descent-based no-regret algorithm for online apprenticeship learning. We similarly use a mirror descent based update to our reward function, but differ from \citet{shani2022online}'s work by leveraging resets to expert and offline data to improve the interaction efficiency of our algorithm. Incorporating structured offline data has been proposed to learn reward functions in IRL \citep{brown2019extrapolating,brown2019deep,poiani2024inverse}, but rely on stronger assumptions about the structure of the offline data.
In contrast, we do not use offline data in learning a reward function, instead using it to accelerate policy optimization via resets.

\textbf{Inverse Reinforcement Learning.} We build upon \citet{swamy2023inverse}'s technique of speeding up IRL by leveraging the expert's state distribution for learner resets. 
Our paper introduces the following key improvements to \citet{swamy2023inverse}'s work. First, while \citet{swamy2023inverse} relies on the impractical assumption of expert realizability, we tackle the more general, misspecified setting. 
Second, instead of assuming access to infinite expert data like \citet{swamy2023inverse}, we consider the finite sample regime and further demonstrate how to incorporate offline data into IRL. 

\section{Imitation Learning in the Misspecified Setting}
\label{sec:agnostic-irl}
\textbf{Notation.}
We consider a finite-horizon Markov Decision Process (MDP),
$\mathcal{M} = \langle \Scal, \A, P_h, r^\star, H, \mu \rangle$ \citep{puterman2014markov}.
$\Scal$ and $\A$ are the state space and action space, respectively.
$P = \{P_h\}^H_{h=1}$ is the time-dependent transition function, where $P_h: \Scal \times \A \rightarrow \Delta (\Scal)$ and $\Delta$ is the probability simplex.
$r^\star: \Scal \times \A \rightarrow [0, 1]$ is the ground-truth reward function, which is unknown, but we assume $r^\star \in \Rcal$, where $\Rcal$ is a class of reward functions such that $r: \Scal \times \A \rightarrow [0, 1]$ for all $r \in \Rcal$.
$H$ is the horizon, and $\mu \in \Delta (\Scal)$ is the starting state distribution.
Let $\Pi=\{\pi: \Scal \rightarrow \Delta(\A)\}$ be the class of stationary policies. We assume $\Pi$ and $\Rcal$ are compact and convex.
Let the class of non-stationary policies be defined by $\Pi^H = \{\pi_h: \Scal \rightarrow \Delta(\A) \}^H_{h=1}$.
A trajectory is given by 
$\xi = \{(s_h, a_h, r_h)\}^H_{h=1}$, where $s_h \in \Scal, a_h \in \A,$ and $r_h = f(s_h, a_h)$ for some $f \in \Rcal$.  
The distribution over trajectories formed by a policy is given by:
$a_h \sim \pi(\cdot \mid s_h)$,
$r_h = R_h(s_h, a_h)$, and
$s_{h+1} \sim P_h(\cdot \mid s_h, a_h)$, for $h=1, \ldots, H$.
Let $d^\pi_{s_0, h} (s) = \PP^\pi[s_h=s \mid s_0]$ and
$d^\pi_{s_0} (s) = \frac{1}{H} \sum_{h=1}^H d^\pi_{s_0, h}(s)$. Overloading notation slightly, we have $d^\pi_\mu = \E_{s_0 \sim \mu} d^\pi_{s_0}$. We index the value function by the reward function, such that 
$V^\pi_r = 
\E_{\xi \sim \pi} \sum_{h=1}^H r(s_h, a_h)$.
We do a corresponding indexing for the advantage function, which is defined as $A^\pi(s, a) := Q^\pi(s,a)-V^\pi(s)$.
We will overload notation such that a state-action pair can be sampled from the visitation distributions, e.g. $(s, a) \sim d_{\mu}^\pi$ and $(s, a) \sim \rho_E$, as well as a state,  e.g.  $s \sim d_{\mu}^\pi$ and $s \sim \rho_E$. 

\textbf{Misspecified Imitation.}
As previous stated, much of the theoretical analysis in IL relies on 
the impractical assumption of a realizable expert policy (i.e. one that lies within the learner's policy class $\Pi$) \citep{kidambi2021mobile, swamy2021moments, swamy2021critique, swamy2022causal, xu2023provably, ren2024hybrid}. 
In contrast to prior work, \textit{we focus on the more realistic misspecified setting}, 
where the expert policy $\pi_E$ is not necessarily in the policy class $\Pi$. 
We consider a known sample of the expert policy's trajectories, where
the dataset of state-action pairs sampled from the expert is 
$D_E = D_1 \cup D_2 \cup \ldots \cup D_H$, where $D_h = \{s_h, a_h\} \sim d^{\pi_E}_{\mu, h}$ and $|D_E| = N$. Let $\rho_h$ be a uniform distribution over the samples in $D_h$, and $\rho_E$ be a uniform distribution over the samples in $D_E$.

\textbf{Goal of IRL.}
We cast IRL as a Nash equilibrium computation \citep{syed2007game, swamy2021moments}, where Sion's minimax theorem guarantees the existence of an equilibrium under the standard assumptions of compactness and convexity of the policy and reward classes.  
The ultimate objective of IRL is to learn a policy that matches expert performance. Because the ground-truth reward is unknown but belongs to the reward class (i.e. $r^{\star} \in \Rcal$), we aim to learn a policy that performs well under \textit{any} reward function in the reward class. For example, in problems like autonomous driving, $\Rcal$ might include functions that capture staying close to the center of lanes and obeying speed limits. 
This is equivalent to finding the best policy under the \textit{worst-case} reward (i.e. the reward function that maximizes the performance difference between the expert and learner). 
Formally, we find an equilibrium strategy for the game
\begin{equation}
    \min_{\pi \in \Pi} \max_{r \in \Rcal} J(\pi_E, r) - J(\pi, r),
\end{equation}
where $J(\pi, r) = \E_{\xi \sim \pi} \left [ \sum_{t=0}^T r(s_t, a_t) \right ]$. By the compact and convex assumptions on $\Pi$ and $\Rcal$, Sion's minimax theorem guarantees that the Nash equilibrium exists.

\textbf{IRL Taxonomy.}
IRL algorithms consist of two steps: a reward update and a policy update.
In the reward update, a \textit{discriminator} is learned with the aim of differentiating the expert and learner trajectories---this is effectively trained as a classifier between the expert and learner trajectories. 
The policy is then optimized by an RL algorithm, with reward labels from the discriminator. 
IRL algorithms can be classified into \textit{primal} and \textit{dual} variants \citep{swamy2021moments}, the latter of which we use in our paper.
An example dual algorithm is shown in Algorithm \ref{alg:irl-dual}. 
In dual IRL algorithms, the discriminator is updated slowly via a no-regret step (e.g. Line \ref{line:irl-dual-no-regret}, Follow The Regularized Leader \citet{mcmahan2011follow}), and the policy is updated via 
using an RL subroutine with reward labels $r$ \citep{ratliff2006maximum, ratliff2009learning, ziebart2008maximum, swamy2021moments}.

\begin{algorithm}[t]
    \caption{Reset-Based IRL (Dual, \citet{swamy2023inverse})}
    \label{alg:irl-dual}
\begin{algorithmic}[1] 
    \State {\bfseries Input:} Expert state-action distributions $\rho_E$, policy class $\Pi$, reward class $\Rcal$
    \State {\bfseries Output:} Trained policy $\pi$
    \For{$i=1$ to $N$}
        \State \text{\algcommentlight{No-regret step over rewards (e.g. FTRL)}}
        \State $r_i \gets \argmax_{r \in \Rcal} J(\pi_E, r) - J(\text{Unif}(\pi_{1:i}), r)$
        \label{line:irl-dual-no-regret}
        \State \text{\algcommentlight{Expert-competitive response by RL algorithm (e.g. PSDP, Alg. \ref{alg:psdp})}}
        \State $\pi_i \gets \texttt{RL}(r=r_i, \rho=\rho_E)$
        \label{line:irl-dual-expert-competitive}
    \EndFor
   \State {\bfseries Return} $\pi_N$
\end{algorithmic}
\end{algorithm}

\textbf{Reset Distribution.} RL algorithms require a reset distribution. 
Often, this is simply the MDP's starting state distribution, $\mu$ \citep{mnih2013playing, schulman2015trust, schulman2017proximal, haarnoja2018soft}. We differentiate between traditional IRL and efficient IRL by their RL subroutine's reset distribution, $\rho$.
In traditional IRL algorithms, the reset distribution remains the true starting state distribution (i.e. $\rho=\mu$). 
In efficient IRL algorithms, the reset distribution is the expert's state distribution (i.e. $\rho = \rho_E$), which changes the RL subroutine from a best response step to an expert-competitive response \citep{swamy2023inverse, ren2024hybrid}, while still maintaining performance guarantees.
Intuitively, efficient IRL can be understood as replacing the \textit{global} search problem inherent in RL with a \textit{local} search over states from the demonstrations, thereby providing a computational speedup.

\subsection{Imitation Under Misspecification is Hard}
\label{subsec:irl-agnostic-setting}

Our paper considers efficient IRL in the misspecified setting, which begs the following question:
\begin{center}
    \textit{Is efficient IRL even possible in the misspecified setting without any assumptions on $\Pi$?}
\end{center}
We first consider \textit{statistical efficiency}, which refers to the number of expert samples required to achieve strong performance guarantees, without any requirements on the \textit{computational efficiency} required to do so. Observe that if statistically efficient imitation under misspecification is not possible, this implies that computationally efficient imitation is also not possible. Thankfully, we now prove that statistically efficient imitation is possible under misspecification with a novel algorithm.

\textbf{Statistically Efficient Imitation.} We present \textbf{S}cheff\'e \textbf{T}ournament \textbf{I}mitation \textbf{LE}arning (\texttt{STILE}), a statistically optimal algorithm in the misspecified setting. \texttt{STILE} assumes access to expert demonstrations (i.e. $D_E$) and known transition dynamics (i.e. given some policy $\pi \in \Pi$,  its induced state-action distribution $d^\pi$ is known). \texttt{STILE}'s objective is to select the policy $\pi$ in the policy class whose induced state-action distribution (i.e. states and actions from $d^\pi$) is closest to the expert's empirical estimate (i.e. states and actions from $D_E$) for any bounded test function $f$. In the imitation setting, it is unknown what function $f$ to use to compare the learner's and expert's policies, so the Scheff\'e tournament-based algorithm considers the worst-case function for each policy $\pi \in \Pi$ (i.e. the function that most distinguishes the selected policy and the expert policy). \texttt{STILE}'s procedure is defined in Appendix~\ref{sec:appendix-agnostic-imitation-bounds}, and we present the sample complexity for the misspecified setting below.\footnote{ We present the sample complexity for the $\pi_E \in \Pi$ case in Appendix~\ref{sec:appendix-agnostic-imitation-bounds}.}
\begin{theorem}[Sample Complexity of \texttt{STILE} under Misspecification]
    \label{theo:agnostic-dm-st}
    Assume $\Pi$ is finite, but $\pi_E \notin \Pi$. 
    With probability at least $1 - \delta$, \texttt{STILE} learns a policy $\hat{\pi}$ such that:
    \begin{align}
        V^{\pi_E} - V^{\hat{\pi}} 
        &\leq \frac{3}{1-\gamma} \min_{\pi \in \Pi} \|d^\pi - d^{\pi_E}\|_1 + \tilde{O}\left( \frac{1}{1-\gamma} \sqrt{\frac{\ln(|\Pi|) + \ln(1/\delta)}{N}} \right)
    \end{align}
\end{theorem}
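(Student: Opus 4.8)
The statement is obtained by combining the standard reduction from value suboptimality to an $L_1$ discrepancy in occupancy measures with the classical Scheff\'e / minimum-distance estimate analysis, applied to the (known) class of state-action visitation distributions $\{d^\pi : \pi \in \Pi\}$. First I would reduce the value gap: since $r^\star(s,a)\in[0,1]$ and $V^\pi_{r^\star} = \tfrac{1}{1-\gamma}\E_{(s,a)\sim d^\pi}[r^\star(s,a)]$ for the normalized visitation distribution, we have $V^{\pi_E} - V^{\hat{\pi}} = \tfrac{1}{1-\gamma}\big(\E_{d^{\pi_E}}[r^\star] - \E_{d^{\hat{\pi}}}[r^\star]\big) \le \tfrac{1}{1-\gamma}\|d^{\pi_E} - d^{\hat{\pi}}\|_1$. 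Hence it suffices to show that, with probability at least $1-\delta$, \texttt{STILE} returns $\hat{\pi}$ with $\|d^{\hat{\pi}} - d^{\pi_E}\|_1 \le 3\min_{\pi\in\Pi}\|d^\pi - d^{\pi_E}\|_1 + \tilde{O}\big(\sqrt{(\ln|\Pi| + \ln(1/\delta))/N}\big)$.

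Next I would set up the tournament. For each ordered pair $(\pi,\pi')\in\Pi\times\Pi$ define the Scheff\'e set $A_{\pi,\pi'} = \{(s,a): d^\pi(s,a) > d^{\pi'}(s,a)\}$; these are computable objects since the dynamics — and thus every $d^\pi$ — are known. Let $\gA = \{A_{\pi,\pi'}\}$, so $|\gA|\le|\Pi|^2$, and let $\hat{\rho}_E(A) = \tfrac1N\sum_{(s,a)\in D_E}\1[(s,a)\in A]$. \texttt{STILE} then selects $\hat{\pi} \in \argmin_{\pi\in\Pi}\max_{A\in\gA}|d^\pi(A) - \hat{\rho}_E(A)|$, which is exactly the ``worst-case bounded test function'' rule described in the text, the test functions being the indicators $\1_A$, $A\in\gA$. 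The crucial point of the construction is that $\hat{\rho}_E$ is only ever evaluated on the at most $|\Pi|^2$ sets of $\gA$. For fixed $A$, $\hat{\rho}_E(A)$ is an average of $N$ bounded i.i.d.\ indicators with mean $d^{\pi_E}(A)$ (the layer-wise construction of $D_E$ affects this only through a fixed mixture, so the mean is unchanged), so Hoeffding plus a union bound over $\gA$ yields, with probability at least $1-\delta$,
\[
\Delta := \max_{A\in\gA}\big|\hat{\rho}_E(A) - d^{\pi_E}(A)\big| \;\le\; \sqrt{\frac{\ln(2|\Pi|^2/\delta)}{2N}} \;=\; \tilde{O}\!\left(\sqrt{\frac{\ln|\Pi| + \ln(1/\delta)}{N}}\right).
\]

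On this event I would run the deterministic Scheff\'e argument. Let $\piStar \in \argmin_{\pi\in\Pi}\|d^\pi - d^{\pi_E}\|_1$ and $\mathrm{opt} := \|d^{\piStar} - d^{\pi_E}\|_1$, and write $T(\pi) := \max_{A\in\gA}|d^\pi(A) - \hat{\rho}_E(A)|$. By the triangle inequality and $\max_{A}|d^{\piStar}(A) - d^{\pi_E}(A)| \le \tfrac12\|d^{\piStar}-d^{\pi_E}\|_1$ we get $T(\piStar) \le \tfrac12\mathrm{opt} + \Delta$, hence $T(\hat{\pi})\le\tfrac12\mathrm{opt}+\Delta$ by optimality of $\hat{\pi}$. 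On the set $A^\star := A_{\hat{\pi},\piStar}\in\gA$, for which $\tfrac12\|d^{\hat{\pi}}-d^{\piStar}\|_1 = d^{\hat{\pi}}(A^\star) - d^{\piStar}(A^\star)$, decomposing $d^{\hat{\pi}}(A^\star) - d^{\piStar}(A^\star) = [d^{\hat{\pi}}(A^\star)-\hat{\rho}_E(A^\star)] + [\hat{\rho}_E(A^\star)-d^{\pi_E}(A^\star)] + [d^{\pi_E}(A^\star)-d^{\piStar}(A^\star)]$ and bounding the three terms by $T(\hat{\pi})$, $\Delta$, and $\tfrac12\mathrm{opt}$ respectively gives $\|d^{\hat{\pi}}-d^{\piStar}\|_1 \le 2\,\mathrm{opt} + 4\Delta$; one more triangle inequality gives $\|d^{\hat{\pi}}-d^{\pi_E}\|_1\le 3\,\mathrm{opt}+4\Delta$. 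Plugging this into the value reduction above yields the claimed bound (with the $\tilde{O}$ absorbing the factor $4$).

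\textbf{Main obstacle.} No single step is deep — this is essentially the textbook minimum-distance estimate. The points that require care are: (i) defining \texttt{STILE}'s selection rule precisely enough that the tournament constant is exactly $3$ and not a larger constant; (ii) the observation, which is really the entire reason the Scheff\'e approach works here, that concentration is needed only over the $O(|\Pi|^2)$ Scheff\'e sets, so the rate scales as $\sqrt{\ln|\Pi|/N}$ rather than with the capacity of all measurable sets (a naive $\|d^\pi - \hat{\rho}_E\|_1$ objective would fail to concentrate at all); and (iii) verifying that the layer-wise construction of $D_E$ still makes each $\hat{\rho}_E(A)$ an unbiased, sub-Gaussian estimate of $d^{\pi_E}(A)$.
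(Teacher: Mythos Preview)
Your proposal is correct and is essentially the same argument as the paper's. The paper phrases the tournament via witness functions $f_{\pi,\pi'}=\arg\max_{\|f\|_\infty\le 1}(\E_{d^\pi}f-\E_{d^{\pi'}}f)$ rather than Scheff\'e sets, but since $f_{\pi,\pi'}=2\cdot\1_{A_{\pi,\pi'}}-1$ (up to ties) the two selection rules and the ensuing triangle-inequality chain are identical, and both arrive at $\|d^{\hat{\pi}}-d^{\pi_E}\|_1\le 3\min_{\pi}\|d^\pi-d^{\pi_E}\|_1+4\sqrt{\ln(|\Pi|^2/\delta)/N}$ before the value-gap reduction.
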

As shown in Theorem~\ref{theo:agnostic-dm-st}, \texttt{STILE} achieves a statistically efficient sample complexity that is linear in the horizon.\footnote{ For convenience in the analysis, we consider an infinite horizon MDP.} However, a tournament algorithm requires comparing every \textit{pair} of policies, which isn't feasible with policy classes like deep networks, making \texttt{STILE} computationally inefficient.

\textbf{Computationally Efficient Imitation.} While \texttt{STILE} achieves a statistically optimal guarantee in the misspecified setting, we ultimately desire an algorithm that is also computationally feasible and can be implemented with standard policy classes used in deep reinforcement learning. To that end, we now consider computational efficiency, specifically focusing on \textit{interaction sample efficiency}---the number of environment interactions needed to obtain strong performance. We consider an algorithm computationally efficient if its interaction complexity is polynomial in the MDP horizon. We present a lower bound showing that without any further assumptions, computationally efficient IRL under misspecification is not possible -- i.e. a clear ``no'' to the question we began this section with.
\begin{theorem}[Lower Bound on Misspecified RL with Expert Feedback \citep{jia2024agnostic}]
\label{theo:agnostic-irl-lower-bound}
For any $H \in \mathbb{N}$ and $C \in [2^H]$, there exists a policy class $\Pi$ with $|\Pi| = C$, expert policy $\pi_E \not\in \Pi$, and a family of MDPs $\mathcal{M}$ with state space $\Scal$ of size $O(2^H)$, binary action space, and horizon $H$ such that any algorithm that returns a 1/4-optimal policy must either use $\Omega(C)$ queries to the expert oracle $O_{\text{exp}}: \Scal \times \A \rightarrow \Rcal$, which returns $Q^{\pi_E}(s, a)$ (i.e. the $Q$ value of expert policy $\pi_E$), or $\Omega(C)$ queries to a generative model, which allows the learner to query the transition and reward associated with a state-action pair on any state.\footnote{ The generative model strictly generalizes the online interaction model, which is limited to playing actions in sequential states in a trajectory.}
\end{theorem}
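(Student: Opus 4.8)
The statement is imported from \citet{jia2024agnostic}, so the plan is to recall the needle-in-a-haystack construction underlying it and the information-theoretic argument that makes it work. I would build a family of MDPs $\{\mathcal{M}_c\}_{c\in[C]}$ on a common depth-$H$ complete binary tree: internal nodes are states, the two actions move to the left/right child, so $|\Scal|=\Theta(2^H)=O(2^H)$, and the horizon is $H$. Designate $C$ ``candidate'' leaves $\ell_1,\dots,\ell_C$ (possible since $C\le 2^H$). The learner's policy class is $\Pi=\{\pi_1,\dots,\pi_C\}$, where $\pi_c$ is the deterministic open-loop plan that walks from the root to $\ell_c$, so $|\Pi|=C$. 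In $\mathcal{M}_c$, reward $1$ is placed only on the action entering $\ell_c$ and $0$ everywhere else, so $V^{\pi_c}=1$ in $\mathcal{M}_c$ while $V^{\pi_{c'}}=0$ for $c'\ne c$; transitions are identical across the family, so the hidden index $c$ is encoded purely in the reward (and, below, in the expert). The expert $\pi_E$ is taken to be a policy that exploits privileged information unavailable in the (aliased) representation underlying $\Pi$ -- e.g. it conditions on a root-revealed copy of $c$ -- reaching $\ell_c$ in every $\mathcal{M}_c$; as a \emph{function} it is index-dependent and hence $\pi_E\notin\Pi$, even though the index-independent plan $\pi_c$ is itself optimal in $\mathcal{M}_c$. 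Thus there is genuine misspecification while a $1/4$-optimal realizable policy still exists in each instance.

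Next I would argue that both oracles are \emph{local}. A generative-model query at a state-action pair reveals the transition (which is index-independent, hence uninformative) and the reward, which is nonzero only at the single hidden leaf; so each such query rules out at most one candidate. For the expert oracle I would engineer $\pi_E$ so that $Q^{\pi_E}(s,a)=0$ at every state off the expert's own trajectory (the expert ``gives up'' into a zero-reward absorbing leaf from any off-path state) and is positive only along the path to $\ell_c$ in $\mathcal{M}_c$. Since the expert's trajectory is tied to the hidden index, a query to $O_{\text{exp}}$ is informative only when it lands on that hidden trajectory -- and an adversary can keep a leaf-level expert query as hard as directly probing the hidden leaf, while intermediate-level queries only rule out a subtree and are no more useful. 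Consequently each expert query, like each generative query, eliminates at most $O(1)$ candidates, with the adversary answering ``generically'' ($0$, or the fixed off-path value) in a way consistent with every not-yet-eliminated index.

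To finish, I would put a uniform prior over $c\in[C]$ and run a standard Yao/Le Cam-style argument: against the adversarial responses above, any (w.l.o.g. deterministic) algorithm issuing fewer than $C/2$ queries in total leaves at least $C/2$ indices consistent with everything it has seen, so it identifies the true $c$ with probability at most $1/2$; on misidentification the returned policy has value $0$ against an optimum of $1$, hence is $\tfrac12$-suboptimal, contradicting $1/4$-optimality ``with high probability.'' Averaging over the prior then yields an instance on which $\Omega(C)$ expert queries \emph{or} $\Omega(C)$ generative queries are required; the generative model is invoked here precisely because it strictly generalizes online rollouts, so the bound transfers. I expect the main obstacle to be the second step -- arranging $\pi_E$ to be \emph{simultaneously} optimal, non-realizable in $\Pi$, and uninformative under $Q^{\pi_E}$-feedback except on its own (index-coupled) trajectory -- since the naive optimal expert would have $Q$-values that point straight at the hidden reward; the ``gives up off-path'' design plus index-independent transitions is what resolves this, and checking that intermediate-depth queries genuinely cannot beat the $\Omega(C)$ rate is the one place where a careful (but routine) case analysis over query depths is needed.
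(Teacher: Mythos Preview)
The paper does not supply its own proof of this statement; it is quoted from \citet{jia2024agnostic} and used as a black box (the appendix only restates and contextualizes it). So there is no in-paper argument to compare against, and your opening observation that the result is imported is exactly right.

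That said, your reconstruction has a real gap in the expert-oracle half. The generative-model half is the standard needle-in-a-haystack argument and is fine. But your proposed expert---optimal in each $\mathcal{M}_c$, walking the root-to-$\ell_c$ path, and ``giving up'' to zero reward off-path---leaks the hidden index through \emph{on-path} queries. The root is always on the expert's path, and under your design $Q^{\pi_E}(\text{root},a)$ equals $1$ for the child toward $\ell_c$ and $0$ for the other child. A single root query (or two, if the adversary delays) therefore halves the candidate set; recursing yields the hidden leaf in $O(H)=O(\log C)$ expert queries, not $\Omega(C)$. Your sentence that intermediate-depth queries ``only rule out a subtree and are no more useful'' is exactly where the argument breaks: ruling out a subtree of size $C/2$ is exponentially more useful than ruling out a single leaf, and the deferred ``routine case analysis over query depths'' does not go through. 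The adversary cannot answer $0$ to both root actions without contradicting the expert's optimality, so it is forced to reveal the direction.

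The missing idea is to decouple $Q^{\pi_E}$ from the hidden reward location so that on-path queries are no longer a binary-search signal. This typically requires either an expert that is \emph{not} instance-optimal (so its $Q$-values do not point at the reward) or additional aliasing that makes on-path and off-path states indistinguishable to the querier; the tree-plus-optimal-expert construction you propose cannot deliver the $\Omega(C)$ rate for the expert oracle as written.
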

From Theorem~\ref{theo:agnostic-irl-lower-bound}, we establish that polynomial sample complexity in the misspecified IRL setting, where $\pi_E \not\in \Pi$, cannot be guaranteed. In other words, \textbf{\textit{computationally efficient IRL is not possible in the setting where no structure is assumed on the MDP}}, even with access to a queryable expert policy like DAgger \citep{ross2011reduction}. The results show that while \textit{statistically} efficient IRL in the misspecified setting is possible, \textit{computationally} efficient IRL isn't without assuming additional structure, begging the question of what assumptions suffice to design practical algorithms.

\section{Approximate Policy Complete Inverse Reinforcement Learning}
\label{sec:policy-complete-irl}
We answer the question from the preceding section through an extension of \textit{policy completeness}---a condition used in the analysis of policy gradient RL algorithms---and a corresponding efficient algorithm. We then show that, without an exponential amount of computation, our algorithm avoids quadratically compounding errors under approximate policy completeness.

\subsection{Approximate Policy Completeness}
At a high-level, we aim to measure the \textit{flexibility} of the policy class in the misspecified setting---a more nuanced metric than the simple binary of whether the expert policy lies in the class.
While the realizability condition is an action level measure---it requires the learner be able to exactly match the expert's actions---policy completeness is a reward-sensitive measure.
Intuitively, the policy completeness condition requires that the policy class contain a policy that can achieve comparable performance to the expert policy---\textit{without} necessarily matching the exact actions of the optimal (i.e. expert) policy.
Importantly, the policy completeness condition of RL algorithms depends on the MDP's reward function, which in the inverse reinforcement learning setting is unknown and is instead learned throughout training. 
In response, we introduce \textit{reward-agnostic policy completeness}, the natural generalization of policy completeness extended to the imitation learning setting.

\begin{definition}[Reward-Indexed Policy Completeness Error]
\label{def:reward-indexed-policy-completeness-error}
    Given the expert's state distribution $\rho_E$, MDP $\mathcal{M}$ with policy class $\Pi$ and reward class $\Rcal$, learned policy $\pi_i$, and learned reward function $r_i$, define the reward-indexed policy completeness error of $\mathcal{M}$ to be
    
\vspace{1em}

\begin{equation}
    \epsilon_\Pi^{\pi_i, r_i, \rho_E}
    :=
    \eqnmarkbox[expert]{p1}{\E_{s \sim \rho_E} }
        \left [ 
        \eqnmarkbox[black]{p5}{\max_{a \in \A}}
        \eqnmarkbox[learner]{p3}{A^{\pi_i}_{r_i} (s, a)}
        \right ] 
    -  
        \eqnmarkbox[black]{p6}{\max_{\pi' \in \Pi}}
        \eqnmarkbox[expert]{p2}{\E_{s \sim \rho_E} }
        \eqnmarkbox[black]{p7}{\E_{a \sim \pi' (\cdot \mid s)}}
        \left [ 
        \eqnmarkbox[learner]{p4}{A^{\pi_i}_{r_i} (s, a)}
        \right ]
\end{equation}
\annotatetwo[yshift=-1em,xshift=-4em]{below,left}{p1}{p2}{Expert states}
\annotatetwo[yshift=-3em,xshift=2em]{below,right}{p3}{p4}{Advantage over current policy}
\annotatetwo[yshift=1em]{above,right}{p6}{p7}{Realizable optimal}
\annotate[yshift=1em]{above,left}{p5}{Unrealizable optimal}

\vspace{3em}

\end{definition}

We first present \textit{reward-indexed policy completeness error}, which measures the policy class's ability to approximate the maximum possible advantage over the current policy. Intuitively, we can think of the second term as the learner's ability to improve the policy based on its policy class, since we consider the maximum over \textit{policies in the learner's policy class}. The first term measures the maximum possible improvement over the current policy by taking the maximum over \textit{all possible actions}, including the actions not realizable by the policies in the learner's policy class. We drop the $\rho_E$ notation when it is clear from context.

Because the learned policies and reward functions are not fixed throughout the algorithm (i.e. the policy and reward are updated each iteration), we consider the worst-case policy completeness error over all possible learned policies and reward functions in their respective classes. We define this worst-case policy completeness error as \textit{reward-agnostic policy completeness error} below.

\begin{definition}[Reward-Agnostic Policy Completeness Error]
\label{def:policy-completeness-error}
    Given some expert state distribution $\rho_E$ and MDP $\mathcal{M}$ with policy class $\Pi$ and reward class $\Rcal$, define the reward-agnostic policy completeness error of $\mathcal{M}$ to be
\begin{equation}
    \epsilon_\Pi^{\rho_E} := 
        \max_{\pi \in \Pi, r \in \Rcal} 
        \epsilon_{\Pi}^{\pi, r, {\rho_E}}
        \label{eq:epsilon-pi}
\end{equation}
\end{definition}
Note that $0 \leq \epsilon_\Pi^{\pi_i, r_i} \leq \epsilon_\Pi \leq H$ for any $\pi_i \in \Pi$, $r_i \in \Rcal$. Reward-agnostic policy completeness is therefore a measure of the policy class's ability to approximate the maximum possible advantage, over the expert's state distribution, under any reward function in the reward class. \textbf{We define the \textit{approximate policy completeness} setting to be when $\epsilon_\Pi = \mathcal{O}(1)$. } Next, we will show that approximate policy completeness is sufficient for efficient IRL to avoid compounding errors.

\subsection{Efficient IRL with Approximate Policy Completeness}
We begin by presenting our efficient, reset-based IRL algorithm, \textbf{GU}iding \textbf{I}mi\textbf{T}aters with \textbf{A}rbitrary \textbf{R}esets (\texttt{\texttt{GUITAR}}). The high-level structure of our algorithm follows the standard efficient IRL procedure (Algorithm \ref{alg:irl-dual}). The policy update reduces the global search problem of standard RL to local search by resetting the learner to some informative state distribution. The reward update is training a classifier between expert and learner trajectories. 
\texttt{GUITAR} is outlined in Algorithm~\ref{alg:irl-psdp}.

\textbf{Policy Update.} More specifically, \texttt{GUITAR} employs Policy Search by Dynamic Programming (PSDP, \citet{bagnell2003policy}), shown in Algorithm~\ref{alg:psdp}, for its strong theoretical guarantees. In practice, any RL algorithm, such as Soft Actor Critic (SAC, \citet{haarnoja2018soft}), can be used. Crucially, in the policy update step, \texttt{GUITAR} replaces with standard reset distribution of its RL solver---both in theory and practice---with the expert's state distribution or some offline data distribution.\footnote{ \citet{ren2024hybrid} established a reduction from inverse RL to expert-competitive RL. By replacing the reset distribution of the RL subroutine with expert states, the RL step in efficient IRL algorithms becomes an expert-competitive response, rather than a best-response step like in traditional IRL.}

\begin{algorithm}[t]
   \caption{\textbf{GU}iding \textbf{I}mi\textbf{T}aters with \textbf{A}rbitrary \textbf{R}esets (\texttt{GUITAR})}
   \label{alg:irl-psdp}
\begin{algorithmic}[1]
   \State {\bfseries Input:} Expert state-action distributions $\rho_E$, mixture of expert and offline state-action distributions $\rho_{\text{mix}}$, policy class $\Pi$, reward class $\Rcal$
   \State {\bfseries Output:} Trained policy $\pi$
   \State Set $\pi_0 \in \Pi$
   \For{$i=1$ {to} $N$}
        \State Let 
            \begin{equation}
                \hat{L}(\pi, r) = \E_{(s, a) \sim \rho_E} r(s, a) - \E_{(s, a) \sim d_{\mu}^\pi} r(s, a) \label{eq:loss-function}
                \phantom{x}
                \text{\algcommentlightright{\textit{Loss function}}}
            \end{equation}
        \State Optimize
            \begin{equation}
                r_i = \texttt{OMD}(\pi_1, \ldots, \pi_{i-1}) \label{eq:no-regret-update}
                \phantom{x}
                \text{\algcommentlight{\textit{Reward's no-regret update}}}
            \end{equation}
        \State Optimize 
            \begin{equation}
                \pi_i = \texttt{PSDP}(r=r_i, \rho=\rho_{\text{mix}}) \label{eq:expert-competitive-response}
                \phantom{x}
                \text{\algcommentlightright{\textit{Policy's expert-competitive response}}}
            \end{equation}
   \EndFor
   \State {\bfseries Return} $\pi_i$ with lowest validation error
\end{algorithmic}
\end{algorithm}

\textbf{Reward Update.} \texttt{GUITAR} employs a no-regret update to the reward function. In theory, Online Mirror Descent (OMD, \citet{nemirovskij1983problem}) is used due to its strong theoretical guarantees \citep{beck2003mirror, srebro2011universality}. In practice, any no-regret update can be used, such as online gradient descent \citep{zinkevich2003online} (i.e. taking a few gradient steps on the most recent data).

\subsection{Is Local Search Sufficient in the Misspecified Setting?}
\texttt{GUITAR} can be intuitively understood as reducing the hard problem of global RL search (i.e. exploration) to a local search problem over some state distribution. For simplicity, we assume this is the expert's state distribution in this section. At a high level, the local search problem can be understood as follows: working backwards from the final timestep, the learner is reset \textit{to an expert state} and optimizes the policy (i.e. by selecting an action) to maximize the advantage over the current policy. Because the learner is reset to expert states, the performance guarantee of this local search is exclusively over the expert's state distribution. At test time, the learned policy's induced state distribution may deviate from the expert's policy because it may not be able to take the actions required to reach the expert state it saw during training. This is because of the policy class misspecification.

To answer the question of whether local search over expert states is sufficient in the misspecified setting, we leverage the approximate policy completeness condition. Approximate policy completeness condition measures how well the learner optimizes the policy in comparison to the globally optimal solution. We prove that \textit{\textbf{under the approximate policy completeness condition, local search is sufficient to avoid quadratically compounding errors in the misspecified setting.}} 
\begin{theorem}[Sample Complexity of \texttt{GUITAR}]
\label{theo:sc-irl-psdp-infinite-sample}
    Consider the case of infinite expert data samples, such that $\rho_E = d^{\pi_E}_{\mu}$. Denote $\pi_i=(\pi_{i, 1}, \pi_{i, 2}, \ldots, \pi_{i, H})$ as the policy returned by $\epsilon$-approximate PSDP at iteration $i \in [n]$ of \texttt{GUITAR}. Then,

    \begin{equation}
        V^{\pi_E} - V^{\overline{\pi}} 
        \leq 
        \eqnmarkbox[expert]{p1}{H\epsilon_{\Pi}^{\rho_E}}
        + 
        \eqnmarkbox[learner]{p2}{H^2 \epsilon}
        + 
        \eqnmarkbox[error]{p3}{H \sqrt{\frac{\ln | \Rcal |} {n}}}
    \end{equation}
    \annotate[yshift=1em]{above,left}{p1}{Misspecification Error}
    \annotate[yshift=1em]{above}{p2}{Policy Optimization Error}
    \annotate[yshift=-1em]{below, right}{p3}{Reward Regret}
    \vspace{0.4cm}
    
    where $H$ is the horizon, $n$ is the number of outer-loop iterations of the algorithm, and $\overline{\pi}$ is the trajectory-level average of the learned policies (i.e. $\pi_i$ at each iteration $i \in [n]$ of Algorithm \ref{alg:irl-psdp}).\footnote{For clarity, we present \texttt{GUITAR}'s sample complexity in the infinite expert sample regime (i.e., when we have infinite samples from the expert policy, so $\rho_E = d^{\pi_E}_\mu$) -- see Appendix~\ref{subsec:finite-analysis-alg-2} for the finite sample analysis.}
\end{theorem}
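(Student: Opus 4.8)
The plan is to follow the standard dual-IRL template: the no-regret reward player reduces the performance gap to an average of per-iteration policy suboptimalities, and each of those is controlled by the performance difference lemma (PDL) together with Definitions~\ref{def:reward-indexed-policy-completeness-error} and~\ref{def:policy-completeness-error} and the guarantee of $\epsilon$-approximate \texttt{PSDP} (Algorithm~\ref{alg:psdp}). The three error sources that fall out of this are exactly the misspecification, policy-optimization, and reward-regret terms in the statement; the only substantive work is in wiring PSDP's guarantee to the policy-completeness definition.

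\emph{Step 1 (reduction to per-iteration suboptimality).} Since $\overline{\pi}$ is the mixture that first draws an index $i$ uniformly and then executes $\pi_i$ for the whole episode, its induced trajectory law is the average of those of $\pi_1,\dots,\pi_n$, so $J(\overline{\pi},r)=\tfrac1n\sum_{i=1}^n J(\pi_i,r)$ for every $r$. Using $r^\star\in\Rcal$,
\[
V^{\pi_E}-V^{\overline{\pi}} = J(\pi_E,r^\star)-J(\overline{\pi},r^\star) \le \max_{r\in\Rcal}\frac1n\sum_{i=1}^n\left(J(\pi_E,r)-J(\pi_i,r)\right).
\]
The reward iterates $r_i$ are the \texttt{OMD} plays against the linear losses $r\mapsto -\hat L(\pi_i,r)$ of Algorithm~\ref{alg:irl-psdp}, whose per-round magnitude is at most $1$; the no-regret property converts the $\max_r$ into the realized average plus $\mathrm{Regret}_n(\Rcal)/n = O(\sqrt{\ln|\Rcal|/n})$. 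Rescaling by $H$ (since $\hat L$ is $J$ normalized by the horizon) yields
\[
V^{\pi_E}-V^{\overline{\pi}} \le \frac1n\sum_{i=1}^n\left(J(\pi_E,r_i)-J(\pi_i,r_i)\right) + H\sqrt{\frac{\ln|\Rcal|}{n}},
\]
which already contributes the reward-regret term.

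\emph{Step 2 (per-iteration bound).} Fix $i$ and apply PDL to $\pi_E$ and $\pi_i$ under the learned reward $r_i$; bounding $\E_{a\sim\pi_E(\cdot\mid s)}A^{\pi_i}_{r_i}(s,a)\le\max_{a\in\A}A^{\pi_i}_{r_i}(s,a)$ pointwise and using $\rho_E=d^{\pi_E}_\mu$ in the infinite-sample regime gives
\[
J(\pi_E,r_i)-J(\pi_i,r_i) \le H\,\E_{s\sim\rho_E}\left[\max_{a\in\A}A^{\pi_i}_{r_i}(s,a)\right].
\]
I would then add and subtract the realizable-optimal quantity $\max_{\pi'\in\Pi}\E_{s\sim\rho_E}\E_{a\sim\pi'(\cdot\mid s)}A^{\pi_i}_{r_i}(s,a)$: the resulting gap is exactly $H\,\epsilon_\Pi^{\pi_i,r_i,\rho_E}\le H\,\epsilon_\Pi^{\rho_E}$ by Definitions~\ref{def:reward-indexed-policy-completeness-error} and~\ref{def:policy-completeness-error}, while the realizable-optimal quantity is controlled by PSDP: because $\pi_i=\texttt{PSDP}(r_i,\rho_E)$ solves, for each layer backward, the action-selection subproblem reset to the expert state marginal to accuracy $\epsilon$, no $\pi'\in\Pi$ can beat $\pi_i$'s own layerwise advantage over $\rho_E$ by more than the accumulated PSDP slack, which is at most $H^2\epsilon$ once the per-subproblem error is propagated across the $H$ layers and converted into value units. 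Hence $J(\pi_E,r_i)-J(\pi_i,r_i)\le H\,\epsilon_\Pi^{\rho_E}+H^2\epsilon$ for every $i$.

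\emph{Step 3 and the main obstacle.} Substituting this uniform bound into Step 1 gives $V^{\pi_E}-V^{\overline{\pi}}\le H\,\epsilon_\Pi^{\rho_E}+H^2\epsilon+H\sqrt{\ln|\Rcal|/n}$, as claimed. The delicate part is the PSDP argument in Step 2: one must turn the $\epsilon$-approximate \texttt{PSDP} guarantee---naturally phrased as a backward-dynamic-programming self-consistency property of a non-stationary policy over its reset distribution---into the stated bound on $\max_{\pi'\in\Pi}\E_{s\sim\rho_E}\E_{a\sim\pi'}A^{\pi_i}_{r_i}(s,a)$, aligning the layer-$h$ reset marginal with the layer-$h$ expert state marginal so that the mixture structure of $\rho_E$ matches the per-layer expectations produced by PDL, and carefully tracking how the per-subproblem error together with the $[0,H]$ range of the cost-to-go combine into the $H^2\epsilon$ factor. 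By contrast, Steps 1 and 3 are the routine no-regret-versus-best-response reduction and a textbook mirror-descent regret bound; the finite-sample version deferred to the appendix additionally needs a concentration argument to replace $d^{\pi_E}_\mu$ by the empirical $\rho_E$.
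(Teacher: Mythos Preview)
Your proposal is correct and follows essentially the same approach as the paper's proof: the same no-regret reduction to per-iteration suboptimality via OMD, the same PDL-then-$\max_a$ bound, and the same add-and-subtract of the realizable-optimal advantage to split off $\epsilon_\Pi^{\rho_E}$ before invoking the $\epsilon$-approximate PSDP guarantee layerwise (the paper does this explicitly per timestep $h$ with a layerwise $\epsilon_{\Pi,h}$, whereas you work with the time-averaged form of Definition~\ref{def:reward-indexed-policy-completeness-error}, but the two presentations are equivalent). Your identification of the PSDP step---aligning $\rho_h$ with $d^{\pi_E}_{\mu,h}$ in the infinite-sample regime and accumulating the per-layer $H\epsilon$ slack to $H^2\epsilon$---as the only substantive piece is exactly right.
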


\textcolor{black}{\textbf{Misspecification Error.}} The first term is the bound is due to misspecification. Unlike policy optimization error, the policy completeness error cannot be reduced with more environment interactions. Instead, it represents a fixed error that is a property of the MDP, the richness of the policy class, and the reward class. Prior work in efficient IRL ignored this error term by assuming expert realizability \citep{swamy2023inverse}. 
Notably, \textbf{\texttt{GUITAR}'s misspecification is linear in the horizon $H$}, thus avoiding the compounding errors that \textit{all} offline algorithms like behavioral cloning suffer from. 

\textcolor{black}{\textbf{Policy Optimization Error.}} The second term stems from the policy optimization error of the RL subroutine. This error can be interpreted as representing a tradeoff between environment interactions (i.e. computation) and error.  It can be mitigated be decreasing the accuracy parameter $\epsilon$ of the RL solver---PSDP in this case. Set to $\epsilon=\frac{1}{H}$, the term is reduced to linear error in the horizon $H$. 
Crucially, our algorithm does not require global policy search. Because our algorithm uses PSDP over the expert's state distribution for the RL subroutine, \textbf{the policy optimization error can be reduced without requiring  computation that scales exponentially in the task horizon $H$}. 

\textcolor{black}{\textbf{Reward Regret.}} Finally, the last term, $H \sqrt{\frac{\ln | \Rcal |} {n}}$, stems from the regret of the Online Mirror Descent update to the reward function. By the no-regret property and reward realizability, \textbf{we can reduce this term (to zero) by running more outer-loop iterations of \texttt{GUITAR}}. 

To summarize, \textit{\textbf{\texttt{GUITAR} avoids compounding errors under APC without an exponential amount of computation, proving that local search is sufficient for efficient imitation under misspecification.}}

\section{The Theory of Where to Search Under Misspecification}
\label{sec:offline-irl}
In the preceding section, \texttt{GUITAR} replaces the RL subroutine's reset distribution with the expert states, which can be intuitively understood as
reducing the global search problem of RL to a local search problem over the expert states. In other words, the reset distribution specifies where to perform local search. More formally, the RL solver, PSDP, learns a policy that competes against any policy covered by the reset distribution, so long as it is in the policy class \citep{bagnell2003policy}. While in the preceding section we considered resetting to expert states, we might be unable to perfectly imitate the expert at all of these states due to misspecification. Roughly speaking, this can happen because the learner can't reach the expert states in the first place (e.g. they are through a narrow corridor the learner doesn't have the precision to pass through) or because the learner can't complete the rest of the episode the way the expert would (e.g. the learner can reach the same high velocity as the expert but loses the ability to avoid obstacles effectively). This begs the question:
\begin{center}
    \textit{What reset distribution should  we perform local search from in the misspecified setting?}
\end{center}

Intuitively, if we view the reset distribution as specifiying the set of policies we compete against, we want to make sure our reset distribution covers the optimal \textit{realizable} policy (i.e. the best choice the learner could actually make given restrictions on $\Pi$). More formally, we define
\begin{equation}
\piStar := \argmin_{\pi \in \Pi} \max_{r \in \Rcal} J(\pi_E, r) - J(\pi, r).
\end{equation}
In words, $\piStar$ is the optimal policy \textit{in} the learner's policy class against the worst-case reward function. By construction, the learner can actually reach $\piStar$'s states and follow through the way $\piStar$ would.

\subsection{Augmenting The Reset Distribution with Offline Data}
While resetting to $\piStar$'s state distribution seems promising for local policy search---the learner would then capable of replicating $\piStar$'s behavior---$\piStar$ is unknown a priori. Intuitively, what we'd like to do is \textit{broaden} the set of states we've seen in the demonstrations so that we cover the states $\piStar$ visits during rollouts, rather than just covering the ``tightrope'' the expert walks on.

One potential source of this broader state distribution is \textit{offline data}. In 
many practical applications, there is commonly an additional source of offline data, such as internet data \citep{chang2023learning,chang2024dataset}, robot play data \citep{lynch2020learning,wang2023mimicplay}, or suboptimal robot demonstrations \citep{brown2019extrapolating,chang2021mitigating,yang2021trail,hoang2024sprinql}, often from a non-expert, realizable policy. Intuitively, we can think of the offline data as complementing the reset distribution of expert states with states that are reachable by the learner. In the next section, we will prove the condition under which resetting to this offline data benefits policy optimization.

More formally, we consider the general setting of having access to some offline dataset $D_{\text{off}} = \{s_i,a_i\}_{i=1}^M$, where $(s,a) \sim d^{\pi_{B}}_{\mu}$ and $\pi_{B}$ is some \textit{realizable} behavior policy that is not necessarily as a high-quality as the expert $\pi_E$. Our approach for incorporating offline data into IRL is to augment the reset distribution of expert states with the offline data. 
This approach requires no change to the structure of \texttt{GUITAR}, in theory or in practice. We simply set the RL solver's reset distribution to the mixture of offline and expert states, $\rho~=~\rho_{\text{mix}}$, where
we define $D_{\text{mix}} = D_E \cup D_{\text{off}}$ and $\rho_{\text{mix}}$ as the uniform distribution over $D_{\text{mix}}$. We weight the two distributions $\nu := \frac{N}{N+M} d^{\pi_E}_\mu + \frac{M}{N+M} d^{\pi_B}_\mu$, while
the reward update remains the same.
The only modification to $\epsilon_\Pi$ is a change in the state distribution, replacing the distribution over expert samples, $\rho_E$, with the mixed distribution, $\rho_{\text{mix}}$.

\subsection{When is Offline Data Beneficial in IRL?}
We can now precisely characterize when using offline data for resets is beneficial for IRL.

\begin{figure}[t]
    \centering
    \begin{subfigure}[t]{0.48\textwidth}
        \centering
        \begin{minipage}{0.55\textwidth}
            \centering
            \includegraphics[width=\textwidth]{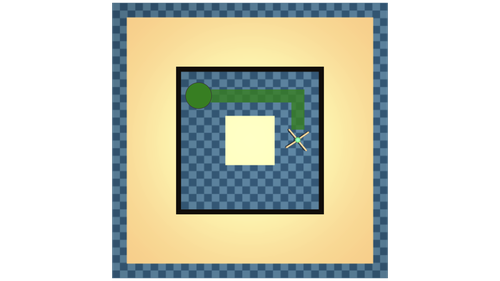}
        \end{minipage}%
        \hspace{-0.15\textwidth}
        \begin{minipage}{0.55\textwidth}
            \centering
            \includegraphics[width=\textwidth]{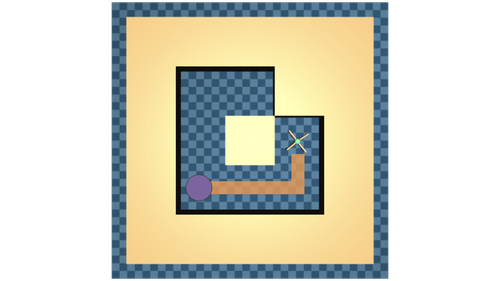}
        \end{minipage}
        \caption{Block obstruction}
        \label{fig:antmaze-block-setup}
    \end{subfigure}%
    \begin{subfigure}[t]{0.48\textwidth}
        \centering
        \begin{minipage}{0.55\textwidth}
            \centering
            \includegraphics[width=\textwidth]{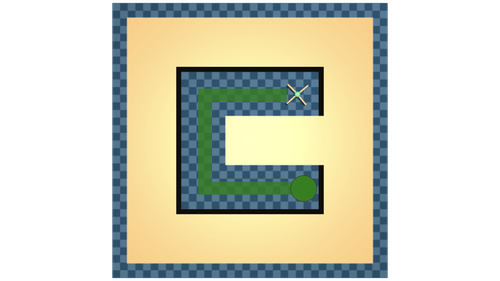}
        \end{minipage}%
        \hspace{-0.15\textwidth}
        \begin{minipage}{0.55\textwidth}
            \centering
            \includegraphics[width=\textwidth]{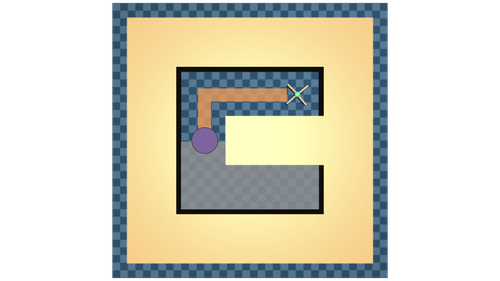}
        \end{minipage}
        \caption{Time constraint}
        \label{fig:antmaze-state-setup}
    \end{subfigure}

    \caption{\textbf{Maze construction under misspecification.} The expert's trajectory is shown in green, and the learner's trajectory is shown in orange. The green circle represents the goal position that returns the maximum \textit{possible} reward, while the purple circle represents the goal position that returns the maximum \textit{realizable} reward.}
\end{figure}

\begin{corollary}[Benefit of Offline Data]
\label{cor:benefit-subopt}
If $1 \leq \left\| \frac{ d^{\piStar}_\mu }{ d^{\pi_B}_\mu }  \right\|_{\infty} < \infty$,
incorporating offline data into the reset distribution improves the sample efficiency of \texttt{GUITAR} when
\begin{equation}
    \eqnmarkbox[pistar]{p1}{C_B}
    \left ( \epsilon_\Pi^{\rho_{\text{mix}}} + \epsilon_\Pi^{\rho_{\text{mix}}} \sqrt{\frac{C_{\Pi, \Rcal}}{
    \eqnmarkbox[expert]{p2}{N}
    +
    \eqnmarkbox[pistar]{p3}{M}
    }} \right )
    <
    \epsilon_\Pi^{\rho_{\text{mix}}} + \epsilon_\Pi^{\rho_{\text{mix}}} \sqrt{\frac{C_{\Pi, \Rcal}}{
    \eqnmarkbox[expert]{p4}{N}
    }}
\end{equation}
\annotatetwo[yshift=-0.7em]{below,left}{p1}{p3}{Offline data}
\annotatetwo[yshift=-1em]{below,right}{p2}{p4}{Expert data}

\vspace{1em}

where 
    $N$ is the number of expert state-action pairs, 
    $M$ is the number of offline state-action pairs, 
    $C_{\Pi, \Rcal} = \ln \frac{|\Pi||\Rcal|}{\delta}$,
    and 
    $C_B := \left\| \frac{ d^{\piStar}_\mu }{ d^{\pi_B}_\mu }  \right\|_{\infty}$.
\end{corollary}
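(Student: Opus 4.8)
The plan is to derive Corollary~\ref{cor:benefit-subopt} directly from the finite-sample version of Theorem~\ref{theo:sc-irl-psdp-infinite-sample}, instantiated twice: once with reset distribution $\rho_E$ (only expert data, $N$ samples) and once with $\rho_{\text{mix}}$ (expert plus offline data, $N+M$ samples). The key structural fact we need is that PSDP's guarantee is stated \emph{relative to any policy covered by the reset distribution} \citep{bagnell2003policy}: with reset distribution $\rho$, the learned policy competes against $\piStar$ only up to the concentrability coefficient $\|d^{\piStar}_\mu / \nu\|_\infty$ relating $\piStar$'s visitation to the reset distribution $\nu$. When we reset only to expert states, covering $\piStar$ incurs a (potentially large, possibly infinite) coverage factor; when we augment with offline data from the realizable behavior policy $\pi_B$, the relevant coverage factor becomes $C_B = \|d^{\piStar}_\mu / d^{\pi_B}_\mu\|_\infty$, which by hypothesis is finite. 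So the comparison reduces to: bound on performance gap with $\rho_{\text{mix}}$ is $C_B$ times the sum of a misspecification term and a finite-sample term over $N+M$ samples; bound with $\rho_E$ alone is the same kind of expression but with the finite-sample term over only $N$ samples (and an implicit worse coverage factor that we can absorb or treat as $1$ in the favorable-comparison regime being characterized).

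\medskip
First I would write out the finite-sample bound from Appendix~\ref{subsec:finite-analysis-alg-2} in the form $V^{\pi_E} - V^{\overline\pi} \lesssim (\text{coverage}) \cdot \big( H\,\epsilon_\Pi^{\rho} + H^2\epsilon + H\sqrt{C_{\Pi,\Rcal}/K}\big)$ where $K$ is the number of samples in the reset dataset and $C_{\Pi,\Rcal} = \ln(|\Pi||\Rcal|/\delta)$ now carries the $\ln|\Pi|$ term because the finite-sample analysis must union-bound over the policy class as well. Setting $\epsilon = 1/H$ to kill the middle term and factoring, the dominant data-dependent piece is $\epsilon_\Pi^{\rho} + \epsilon_\Pi^{\rho}\sqrt{C_{\Pi,\Rcal}/K}$ up to the $H$ scaling. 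Then I instantiate: with $\rho_{\text{mix}}$ we get the left-hand side $C_B\big(\epsilon_\Pi^{\rho_{\text{mix}}} + \epsilon_\Pi^{\rho_{\text{mix}}}\sqrt{C_{\Pi,\Rcal}/(N+M)}\big)$, and with $\rho_E$ we get the right-hand side $\epsilon_\Pi^{\rho_{\text{mix}}} + \epsilon_\Pi^{\rho_{\text{mix}}}\sqrt{C_{\Pi,\Rcal}/N}$ — here I use that $\epsilon_\Pi^{\rho_E} \le \epsilon_\Pi^{\rho_{\text{mix}}}$ is \emph{not} what we want; rather the right-hand side is the expert-only bound written with $\epsilon_\Pi^{\rho_{\text{mix}}}$ as a common yardstick so the two sides are directly comparable, which is exactly how the corollary is phrased. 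The corollary's inequality is then simply the statement "bound under $\rho_{\text{mix}}$ $<$ bound under $\rho_E$," i.e. incorporating offline data is beneficial precisely in that regime.

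\medskip
The main obstacle — and the step deserving the most care — is justifying the coverage/concentrability bookkeeping cleanly: specifically, showing that PSDP with reset distribution $\nu = \frac{N}{N+M}d^{\pi_E}_\mu + \frac{M}{N+M}d^{\pi_B}_\mu$ competes against $\piStar$ with multiplicative factor exactly $C_B$ (and not, say, $\|d^{\piStar}_\mu/\nu\|_\infty$, which would bring in the mixture weights). The resolution is that $\nu \ge \frac{M}{N+M} d^{\pi_B}_\mu$ pointwise, so $\|d^{\piStar}_\mu/\nu\|_\infty \le \frac{N+M}{M}\|d^{\piStar}_\mu/d^{\pi_B}_\mu\|_\infty = \frac{N+M}{M} C_B$; getting the clean $C_B$ requires the version of PSDP's lemma that, after the per-timestep union over the horizon, compares against $d^{\piStar}_\mu$ through whichever component of the mixture dominates it, together with the hypothesis $1 \le \|d^{\piStar}_\mu/d^{\pi_B}_\mu\|_\infty$ which guarantees $\pi_B$'s support genuinely covers $\piStar$'s. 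I would state this as a lemma ("PSDP with a mixture reset distribution inherits the best concentrability over the components") and then the corollary falls out by the two-instantiation argument above. The remaining manipulations — plugging in $\epsilon = 1/H$, cancelling the common $H$, and rearranging into the displayed inequality — are routine.
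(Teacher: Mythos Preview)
Your framing has a genuine gap. You propose to compare two \emph{different} instantiations of \texttt{GUITAR} --- one run with reset distribution $\rho_E$, one with $\rho_{\text{mix}}$ --- and then declare offline data beneficial when the second bound beats the first. But this comparison forces you to deal with $\epsilon_\Pi^{\rho_E}$ on one side and $\epsilon_\Pi^{\rho_{\text{mix}}}$ on the other, which are not directly comparable; you noticed this yourself and tried to wave it away by ``using $\epsilon_\Pi^{\rho_{\text{mix}}}$ as a common yardstick,'' but there is no justification for replacing $\epsilon_\Pi^{\rho_E}$ by $\epsilon_\Pi^{\rho_{\text{mix}}}$ in the expert-only bound.

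The paper's route avoids this entirely: both sides of the displayed inequality are upper bounds on the \emph{same} quantity $\E_{s\sim d^{\pi_E}}\max_a A^{\pi_i}(s,a)$ for the \emph{same} algorithm, namely \texttt{GUITAR} run with $\rho_{\text{mix}}$. This comes out of Lemma~\ref{lemma:adv-error}, which proves two bounds simultaneously. Case~1 (``jettison offline data'') uses that $\max_a A^{\pi_i}(s,a)\ge 0$ to drop the offline component of $\rho_{\text{mix}}$ and then concentrates over only the $N$ expert samples, yielding the right-hand side. Case~2 (``leverage offline data'') concentrates over all $N+M$ samples to bound $\E_{s\sim\nu}\max_a A^{\pi_i}$, and then changes measure from $d^{\pi_E}$ to $\nu$, picking up $C_B$. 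Both cases inherit $\epsilon_\Pi^{\rho_{\text{mix}}}$ because PSDP was run over $\rho_{\text{mix}}$ in both. The corollary is then literally ``Case~2 $<$ Case~1,'' which explains why the same $\epsilon_\Pi^{\rho_{\text{mix}}}$ appears on both sides with no sleight of hand. For the $\piStar$ version specifically, the paper inserts $d^{\piStar}$ between $d^{\pi_E}$ and $d^{\pi_B}$ in the Case~2 change of measure, incurring an extra TV term $\|d^{\pi_E}-d^{\piStar}\|_1$; since this term is data-independent, it can be added to Case~1 as well and cancels in the comparison.

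Your worry about the mixture weights producing a spurious $\tfrac{N+M}{M}$ factor is real, but the resolution is not a separate PSDP lemma. Look at the Case~2 computation in Lemma~\ref{lemma:adv-error}: one writes $\E_{d^{\pi_E}} = \tfrac{N}{N+M}\E_{d^{\pi_E}} + \tfrac{M}{N+M}\E_{d^{\pi_E}}$, bounds the second summand by $C_B\,\E_{d^{\pi_B}}$, and then uses the hypothesis $C_B\ge 1$ to inflate the first summand to $C_B\,\E_{d^{\pi_E}}$, recombining to $C_B\,\E_\nu$. That is exactly where the assumption $1\le C_B$ is used, and no $\tfrac{N+M}{M}$ appears.
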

Corollary~\ref{cor:benefit-subopt}\footnote{ We present the complete finite sample analysis in Appendix~\ref{sec:appendix-offline-irl}.} presents a sufficient condition for offline data improving \texttt{GUITAR}'s sample efficiency \textit{over} the algorithm with resets strictly to expert data. We observe that the benefit of offline data depends on how well the offline data covers the optimal realizable policy, $\piStar$, as well as the amount of expert and offline data. Intuitively, we can think of the coverage coefficient $C_B$ as the ``exchange rate,'' measuring how useful the offline data is in comparison to the optimal realizable policy. This can be thought of intuitively as requiring that if $\piStar$ visits a state, $\pi_B$ does too, but not necessarily with an equal visitation frequency. We pay in terms of performance based on the mismatch in frequency.
When the offline data covers $\piStar$'s state distribution well, $C_B$ is small, so the offline data helps. 

Corollary~\ref{cor:benefit-subopt} suggests that \textbf{the optimal offline data is one that covers the optimal \textit{realizable} policy $\piStar$}, which further implies that the optimal reset distribution for efficient IRL under misspecification is that which best covers the covers the state distribution of $\piStar$. 
Practically, this means that when choosing what offline data and reset distribution to use, it may be more effective to use one from a \textit{realizable} policy (e.g. a large amount of sub-optimal demonstrations from a robot of the same morphology) than data from an \textit{unrealizable} expert policy. In the next section, we continue investigating the question of the optimal reset distribution empirically.

\section{The Practice of Where to Search Under Misspecification}
\label{sec:reset-distributions-misspec}
The results from Section~\ref{sec:offline-irl} suggest that, in theory, the optimal reset distribution is one that best covers the state distribution of the optimal realizable policy, $\piStar$. In this section, we corroborate those theoretical findings with empirical results testing the effect of different reset distributions on IRL's performance in misspecified settings.
In particular, we consider two unique cases of misspecification: misspecification due to unreachable expert states and misspecification due to changing dynamics. We test efficient IRL's performance under different reset distributions in both settings.\footnote{ In the experiments in this section, we generate the offline data from an \textit{optimal} realizable policy, $\piStar$. We consider the setting where the offline data is generated from a \textit{sub-optimal} policy in Appendix~\ref{appendix:setting-no-gen-model}.} We then investigate resetting to ``critical subsets'' of $\piStar$'s state distribution in Appendix \ref{appendix:reset-distribution-unique}.

\textbf{Algorithm and Baselines.} For the experiments in this section, we compare our algorithm, \texttt{GUITAR}, against two behavioral cloning baselines \citep{pomerleau1988alvinn} and two IRL baselines \citep{swamy2023inverse,swamy2021moments}. One behavioral cloning baseline is trained exclusively on the expert data, $\texttt{BC}(\pi_E)$, and the second is trained with both expert and offline data, $\texttt{BC}(\pi_E + \pi_B)$, when offline data is available.  We consider a traditional IRL algorithm, \texttt{MM}, and an efficient IRL algorithm, \texttt{FILTER} \citep{swamy2021moments, swamy2023inverse}. We train all IRL algorithms with the same expert data and hyperparameters. The difference between \texttt{MM}, \texttt{FILTER}, and \texttt{GUITAR} can be summarized by the reset distribution they use. $\texttt{MM}$ resets the learner to the true starting state (i.e. $\rho = \mu$), while $\texttt{FILTER}$ resets the learner to expert states (i.e. $\rho = \rho_E$). \texttt{GUITAR} resets the learner to offline data (i.e. $\rho = \rho_B$), when offline data is available, and otherwise resets to subsets of the expert data. 
For more details on the setup and implementations used in this section's experiments, refer to Appendix~\ref{sec:appendix-implementation-details}.

\begin{figure}[t]
    \centering
    \begin{subfigure}[t]{0.35\textwidth}
        \centering
        \includegraphics[width=\textwidth]{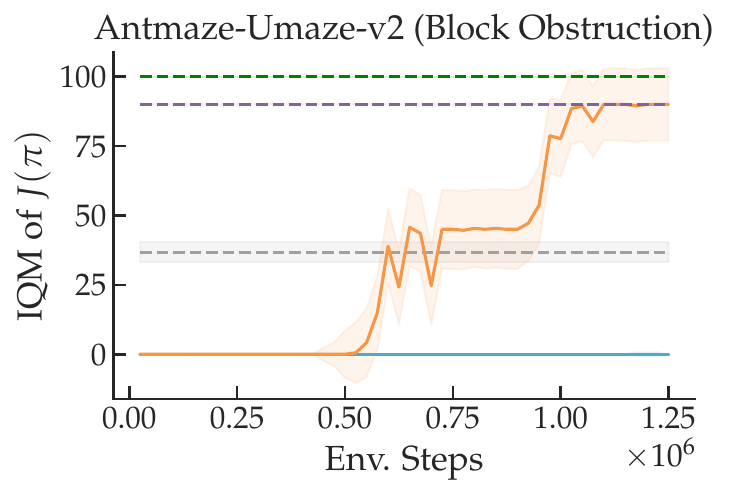}
        \phantomcaption  %
        \label{fig:plot-maze-action}
    \end{subfigure}
    \begin{subfigure}[t]{0.34\textwidth}
        \centering
        \includegraphics[width=\textwidth]{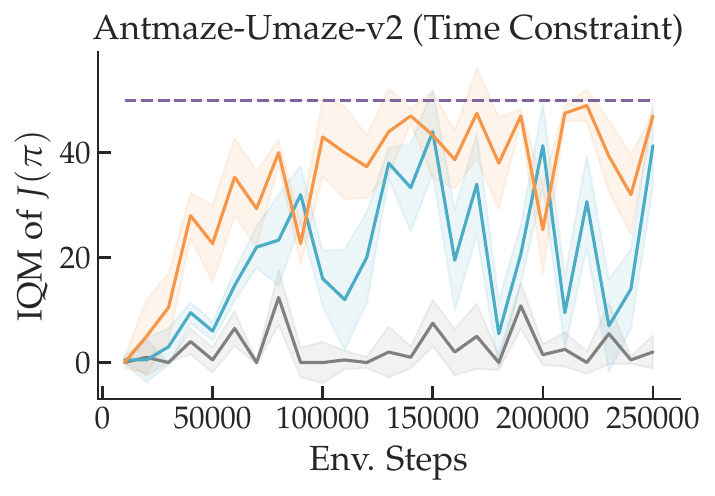}
        \phantomcaption  %
        \label{fig:plot-maze-state}
    \end{subfigure}

    \vspace{-12pt}
    
    \begin{minipage}[t]{\textwidth}
        \centering
        \includegraphics[width=0.80\textwidth]{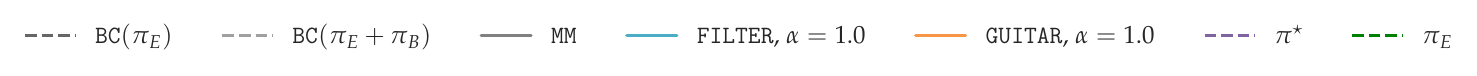}
    \end{minipage}

    \vspace{-15pt}
    
    \begin{minipage}[t]{0.35\textwidth}
        \centering
        \caption*{(a) Block obstruction}
    \end{minipage}
    \begin{minipage}[t]{0.35\textwidth}
        \centering
        \caption*{(b) Time constraint}
    \end{minipage}
    
    \caption{\textbf{Unreachable expert states.} The reset distribution dictates where the learner performs local policy optimization. By resetting the learner to expert states, \texttt{FILTER} optimizes on the expert's state distribution. However, the learner is unable to reach certain expert states, so intuitively, there is no value for the learner in optimizing at those unreachable states. In contrast, all of the states in \texttt{GUITAR}'s reset distribution are reachable by the learner and thus valuable for optimization. $\pi_E$ is the unrealizable expert policy, and $\pi^{\star}$ is the optimal realizable policy. Standard errors are computed across 10 seeds for Experiment (\subref{fig:plot-maze-action}) and 5 seeds for Experiment (\subref{fig:plot-maze-state}).} 
\end{figure}

\subsection{Misspecified Setting I: Unreachable Expert States}
\label{subsec:misspec-setting-expert-cliff-walks}
We now consider the first setting of misspecification, arising from unreachable expert states. Intuitively, if the learner cannot reach these states, policy optimization at such states is at best waste of computation and can potentially induce suboptimal behavior at states the learner actually \textit{can} reach.

\textbf{Practical Motivation.} When learning to perform a particularly agile maneuver like dunking a basketball \citep{he2025asap}, a humanoid robot might benefit from being reset close to the rim in simulation to learn the skill of placing the ball into the hoop. However, the robot might lack the actuation capabilities to vertically jump in the same manner a person would, making these states unreachable. In such a setting, one would hope to learn an alternative strategy like a layup.

\textbf{Experimental Setup.} To empirically investigate this setting, we consider a variant of the \texttt{Antmaze-Umaze} task, where a quadruped ant learns to solve a maze to reach a goal position \citep{fu2020d4rl}. We impose two types of constraints to create misspecification. In the first variation (Figure \ref{fig:antmaze-block-setup}), we simulate unrealizable expert actions by placing a barrier in the learner's maze that prevents it from following the path of the expert. Notably, the block forces the learner to find an entirely different route through the maze. In the second variation (Figure \ref{fig:antmaze-state-setup}), we consider a ``time constraint'' that prevents the learner from reaching the second half of the maze.

In this setting, we have access to expert data from an \textit{unrealizable} expert policy, $\pi_E$, and offline data from the optimal \textit{realizable} policy, $\piStar$. \texttt{MM} resets to the true starting state distribution, $\texttt{FILTER}$ resets to the expert states, and \texttt{GUITAR} resets to $\piStar$ states.

\textbf{Experimental Results.} From Figure~\ref{fig:plot-maze-state}, we see that focusing the reset distribution on the realizable policy $\piStar$ speeds up learning, as shown by \texttt{GUITAR}'s improvement over \texttt{FILTER}. Interestingly, when the extent of the misspecification increases, the improvement exhibited by \texttt{GUITAR} over the baselines does too, as shown by Figure~\ref{fig:plot-maze-action}. From Figure \ref{fig:plot-maze-action}, we observe that \texttt{GUITAR} is the only interactive algorithm capable of solving the hard exploration problem in the strongly misspecified setting, suggesting that the offline data---states from an optimal realizable policy---is a better reset distribution than expert states in this task. This agrees with what our preceding theory would predict.

Fundamentally, the reset distribution dictates where the learner performs local policy optimization, so by resetting the learner to expert states, \texttt{FILTER} optimizes on the expert's state distribution. 
However, the learner is unable to reach certain expert states, so intuitively, there is no value for the learner in optimizing at those unreachable states. In contrast, all of the states in \texttt{GUITAR}'s reset distribution are reachable by the learner and thus valuable for optimization, improving performance.

\begin{figure}[t]
        \centering
        \begin{minipage}[t]{\textwidth} %
            \centering
            \includegraphics[width=0.32\textwidth]{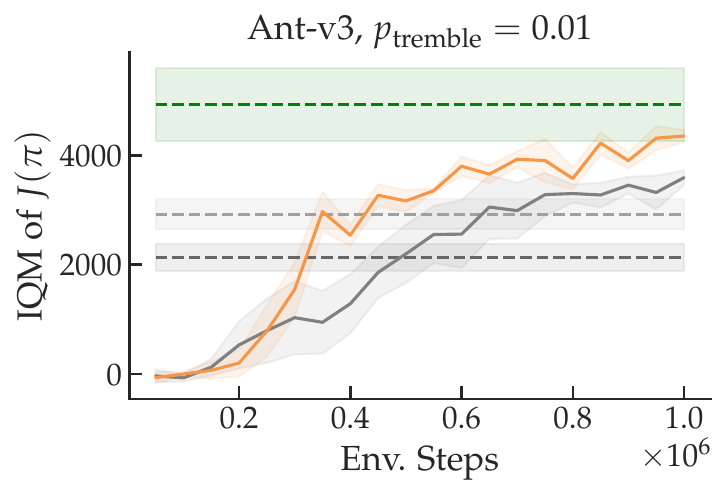}
            \includegraphics[width=0.32\textwidth]{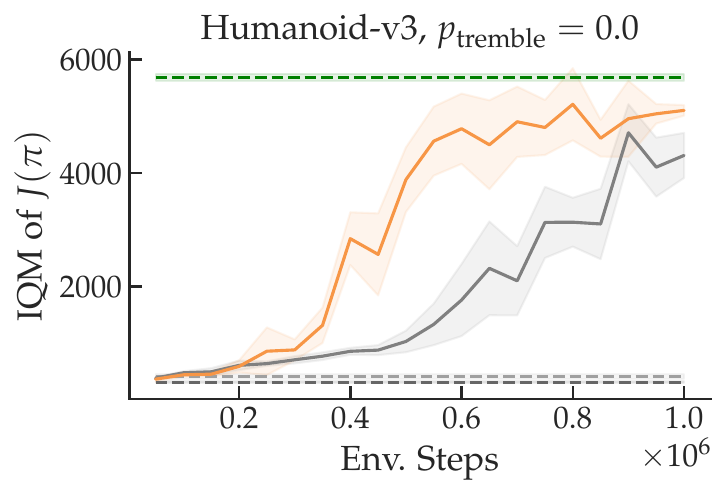}
        \end{minipage}
        
        \begin{minipage}[t]{\textwidth} %
            \centering
            \includegraphics[width=0.80\textwidth]{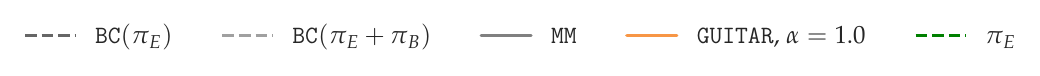}
        \end{minipage}
        \caption{\textbf{Changing dynamics.} By resetting to states from a realizable policy, \texttt{GUITAR} shows a speedup over traditional inverse RL methods on a problem where the misspecification is due to a dynamics mismatch. Standard errors are computed across 5 seeds.}
        \label{fig:plot-link-change}
\end{figure}

\subsection{Misspecified Setting II: Different Dynamics}
Next, we consider the setting of misspecification due to a difference in dynamics between the expert demonstrations and the learner's environment \citep{sapora2024evil}. This can make it difficult for the learner to ``follow-through'' like the expert would, complementing the setting we investigated above.

\label{subsec:expert-states-morph}
\textbf{Practical Motivation.} In humanoid robotics, the morphological differences between robots and humans prevent perfect human-to-robot motion retargeting \citep{zhang2024wococo,he2024omnih2o,al2023locomujoco}. As a result, the expert's and learner's dynamics may be different.

\textbf{Experimental Setup.}
To empirically investigate this setting, we consider MuJoCo continuous control tasks where the expert and learner have different morphology, implemented by changing the link lengths and joint ranges between the expert---which uses the default values---and the learner. Changing the link lengths and joint ranges thereby changes the dynamics between the expert and learner. In such a setting, it is impossible to reset the learner to expert states without deforming the rigid body of the robot, and thus we cannot implement \texttt{FILTER}. For \texttt{GUITAR}, we reset the learner to states from the optimal realizable policy, $\piStar$, which has the same morphology as the learner. 

\textbf{Experimental Results.} 
Figure \ref{fig:plot-link-change} shows a speedup in learning enabled by using resets to states from a realizable policy, which \texttt{GUITAR} employs. In contrast, it is impossible to reset the learner to expert states, as \texttt{FILTER} does, due to the different morphology between the expert and learner.
The consistently poor performance of the BC policies across the varying misspecified settings highlights the challenge of using offline data for direct policy learning, as the approach is sensitive to dynamics shifts in the misspecified setting. In contrast, the approach of using offline data for resets in interactive algorithms demonstrates stronger overall performance across tasks considered.

\section{Discussion}
\label{sec:discussion}
In summary, we explore computationally efficient algorithms for imitation learning under misspecification, the setting where the learner cannot perfectly mimic the expert. In the misspecified setting, we demonstrate both theoretically and empirically that the lack of expert realizability affects the optimal choice of the reset distribution for the inner policy search step, thereby addressing one of the core limitations of \citet{swamy2023inverse}. Notably, our analysis inherits a limitation from \citet{swamy2023inverse} and the work it builds on \citep{ross2014reinforcement, bagnell2003policy}: it overlooks the fundamental ability of \textit{any} policy to recover from mistakes. For example, on cliff-like problems, no policy can recover after a single mistake, but approximate policy completeness doesn't capture this.

The notion of \textit{expert recoverability}---how effectively the expert can correct an arbitrary learner mistake---often appears in the analysis of queryable expert imitation learning algorithms like DAgger \citep{ross2011reduction, swamy2021moments}. Prior work has highlighted that, even for IRL algorithms, expert recoverability fundamentally dictates the ease of policy search \citep[Section 4.3]{swamy2021moments}. While our notion of reward-agnostic policy completeness bears similarity to the variant of recoverability explored in \citet{swamy2021moments}---both impose bounds on advantages over possible reward functions---our definition is fundamentally off-policy in terms of the state distributions of the outer expectations. This off-policy nature makes it challenging to connect approximate policy completeness to the notion of a policy recovering from their \textit{own} mistake. An interesting avenue for future work is to pursue a more refined notion of recoverability that inherits the strengths of both. 

Furthermore, in the misspecified setting, where the learner cannot perfectly imitate the expert, it remains an open question whether the \textit{expert}'s ability to recover from a mistake is the appropriate measure of problem difficulty. Thus, a more refined fusion of the above two concepts that also accounts for misspecification would be particularly illuminating for both theory and practice.

\subsubsection*{Acknowledgments}
NED is supported by DARPA LANCER: LeArning Network CybERagents. SC is supported in part by Google Faculty Research Award, OpenAI SuperAlignment Grant, ONR Young Investigator Award, NSF RI \#2312956, and NSF FRR\#2327973. WS acknowledges funding from NSF IIS-\#2154711, NSF CAREER \#2339395, and DARPA LANCER: LeArning Network CybERagents. GKS was supported in part by an STTR grant. GKS thanks Drew Bagnell for several helpful discussions, particularly around how recoverability interacts with our results.

\bibliography{bibliography}
\bibliographystyle{iclr2025_conference}

\newpage
\appendix

\newpage 
\section{Misspecified RL with Expert Feedback}

Theorem~\ref{theo:agnostic-irl-lower-bound} establishes that polynomial sample complexity in the misspecified IRL setting, where $\pi_E \not\in \Pi$, cannot be guaranteed. In other words, efficient IRL is not possible with no structure assumed on the MDP, even with access to a queryable expert policy like DAgger \citep{ross2011reduction}. Note that a generative model allows the learner to query the transition and reward associated with a state-action pair on any state, in contrast to an online interaction model that can only play actions on sequential states in a trajectory. For a more thorough discussion of their differences, see \citet{kakade2003sample}.

More specifically, Theorem~\ref{theo:agnostic-irl-lower-bound} presents a lower bound on agnostic RL with expert feedback. It assumes access to the true reward function and an expert oracle, $O_{\text{exp}}: \Scal \times \A \rightarrow \Rcal$, which returns $Q^{\pi_E}(s, a)$ for a given state-action pair $(s, a)$. The lower bound in Theorem~\ref{theo:agnostic-irl-lower-bound} applies in the case where the expert oracle is replaced with a weaker expert action oracle 
(i.e. $\pi_E(s): \Scal \rightarrow \A$) 
\citep{amortila2022few, jia2024agnostic}. 
In agnostic IRL, we consider the even weaker setting of having a dataset of state-action pairs from the expert policy $\pi_E$. 

Notably, this lower bound focuses on computational efficiency. Statistically efficient imitation \textit{is} possible, and we present
a statistically optimal imitation learning algorithm for the misspecified setting in Appendix \ref{sec:appendix-agnostic-imitation-bounds}.

\newpage
\section{Statistically Optimal Imitation under Misspecification}
\label{sec:appendix-agnostic-imitation-bounds}
We begin with the following question: ignoring computation efficiency, what is the statistically optimal algorithm, with respect to the number of expert samples, for imitation learning in the misspecified setting? 

We present  
\textbf{S}cheff\'e \textbf{T}ournament \textbf{I}mitation \textbf{LE}arning (\texttt{STILE}), a statistically optimal algorithm for the misspecified setting.
For any two policies $\pi$  and $\pi'$,  we denote by $f_{\pi,\pi'}$  the following witness function:
\begin{equation}
    f_{\pi,\pi'} := 
        \arg\max_{f : \| f \|_{\infty} \leq 1} \left[ \mathbb{E}_{s,a \sim d^\pi} f(s, a) 
            - \mathbb{E}_{s,a \sim d^{\pi'}} f(s, a) \right],
\end{equation}
and the set of witness functions as
\begin{equation}
    \mathcal{F} = \left\{ f_{\pi, \pi'} : \pi, \pi' \in \Pi, \pi \neq \pi' \right\}.
\end{equation}
Note that $|\mathcal{F}| \leq |\Pi|^2$. 
\texttt{STILE} selects $\hat{\pi}$ using the following procedure:
\begin{equation}
    \label{eq:dm-st-update}
    \hat{\pi} \in \arg\min_{\pi \in \Pi} 
    \left[ \max_{f \in \mathcal{F}} \left( \mathbb{E}_{s,a \sim d^\pi} f(s, a) 
        - \frac{1}{N} \sum_{i=1}^{N} f(s_i^*, a_i^*) \right) \right],
\end{equation}
where $(s_i^*, a_i^*) \in D_E$. 
Notably, running a tournament algorithm requires comparing every pair of policies, which is not feasible with policy classes like deep neural networks, making \texttt{STILE} impractical to implement.

We present the analysis in the infinite-horizon setting for convenience.
\begin{theorem}[Sample Complexity of \texttt{STILE}]
    \label{theo:dm-st}
    Assume $\Pi$ is finite and $\pi^\star \in \Pi$. 
    With probability at least $1 - \delta$, \texttt{STILE} finds a policy $\hat{\pi}$
    \begin{equation}
    V^{\pi_E} - V^{\hat{\pi}} \leq \frac{4}{1 - \gamma} \sqrt{\frac{2 \ln(|\Pi|) + \ln\left(\frac{1}{\delta}\right)}{N}}.
    \end{equation}
\end{theorem}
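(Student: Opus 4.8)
The plan is to analyze \texttt{STILE} as a Scheff\'e-tournament/minimum-distance estimator over the state-action visitation distributions, where the relevant metric is the total variation distance restricted to the (finite) witness class $\mathcal{F}$. First I would observe that because $\pi^\star \in \Pi$ and $|\mathcal{F}| \leq |\Pi|^2$, a uniform concentration (Hoeffding plus a union bound over $\mathcal{F}$) controls the deviation between the empirical expert expectation $\frac{1}{N}\sum_{i=1}^N f(s_i^\star, a_i^\star)$ and the true expert expectation $\mathbb{E}_{s,a \sim d^{\pi_E}} f(s,a)$ simultaneously for all $f \in \mathcal{F}$. Setting $\varepsilon_N := \sqrt{\frac{2\ln|\Pi| + \ln(1/\delta)}{2N}}$ (since $|\mathcal{F}| \leq |\Pi|^2$ gives $\ln|\mathcal{F}| + \ln(1/\delta) \leq 2\ln|\Pi| + \ln(1/\delta)$), with probability at least $1-\delta$ we have $\left| \mathbb{E}_{d^{\pi_E}} f - \frac{1}{N}\sum_i f(s_i^\star,a_i^\star) \right| \leq \varepsilon_N$ for every $f \in \mathcal{F}$.

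Next I would run the standard minimum-distance argument. Condition on the good event above. By the optimality of $\hat\pi$ in \eqref{eq:dm-st-update} and the fact that $\pi^\star \in \Pi$ is a feasible competitor whose own witness-class discrepancy against the empirical expert data is at most $\|d^{\pi^\star} - d^{\pi_E}\|_{\mathcal{F}} + \varepsilon_N \le 0 + \varepsilon_N$ when $\pi^\star = \pi_E$ — but here $\pi^\star$ is the best-in-class policy, not necessarily the expert, so more carefully: the objective value achieved by $\hat\pi$ is at most that achieved by $\pi^\star$, giving $\max_{f\in\mathcal{F}}\left(\mathbb{E}_{d^{\hat\pi}} f - \frac{1}{N}\sum_i f(s_i^\star,a_i^\star)\right) \le \max_{f\in\mathcal{F}}\left(\mathbb{E}_{d^{\pi^\star}} f - \frac{1}{N}\sum_i f(s_i^\star,a_i^\star)\right) \le \|d^{\pi^\star}-d^{\pi_E}\|_{\mathcal{F}} + \varepsilon_N$. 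Then I triangle-inequality through the empirical expert distribution: the witness function $f_{\hat\pi, \pi_E} \in \mathcal{F}$ by construction certifies $\|d^{\hat\pi} - d^{\pi_E}\|_{\mathcal{F}} = \mathbb{E}_{d^{\hat\pi}} f_{\hat\pi,\pi_E} - \mathbb{E}_{d^{\pi_E}} f_{\hat\pi,\pi_E} \le \left(\mathbb{E}_{d^{\hat\pi}} f_{\hat\pi,\pi_E} - \frac{1}{N}\sum_i f_{\hat\pi,\pi_E}(s_i^\star,a_i^\star)\right) + \varepsilon_N \le \|d^{\pi^\star}-d^{\pi_E}\|_{\mathcal{F}} + 3\varepsilon_N$. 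In the realizable-expert case where the theorem is stated, the cleanest route is to note $\pi^\star \in \Pi$ and, if one additionally assumes (as the infinite-horizon statement implicitly does) that the in-class error term is what is being bounded, this reduces to $\|d^{\hat\pi}-d^{\pi_E}\|_{\mathcal{F}} \le 2\|d^{\pi^\star}-d^{\pi_E}\|_{\mathcal{F}} + $ lower-order; when $\pi^\star$ matches the expert in distribution the leading term vanishes and we are left with a constant times $\varepsilon_N$.

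Finally I would convert the distributional bound into a value bound. Since every $f$ with $\|f\|_\infty \le 1$ — in particular the reward $r^\star$ up to scaling — is dominated by the witness-class discrepancy, and since $V^{\pi_E} - V^{\hat\pi} = \frac{1}{1-\gamma}\mathbb{E}_{s,a\sim d^{\pi_E}}[r^\star] - \frac{1}{1-\gamma}\mathbb{E}_{s,a\sim d^{\hat\pi}}[r^\star] \le \frac{1}{1-\gamma}\|d^{\hat\pi} - d^{\pi_E}\|_{\mathcal{F}}$ (using $r^\star \in [0,1]$ so it is a valid, suitably normalized test function, and noting $f_{\hat\pi,\pi_E}$ is the maximizing one), the performance difference is at most $\frac{1}{1-\gamma}$ times the distributional error. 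Combining with the $3\varepsilon_N \le 4\varepsilon_N$-type bound and plugging in $\varepsilon_N$, and collapsing constants, yields $V^{\pi_E} - V^{\hat\pi} \le \frac{4}{1-\gamma}\sqrt{\frac{2\ln|\Pi| + \ln(1/\delta)}{N}}$. The main obstacle I anticipate is bookkeeping the three triangle-inequality invocations so that the constant lands exactly at $4$ rather than something larger, and making sure the value-difference step correctly uses that $r^\star$ (or $2r^\star - 1$) lies in the unit-norm ball so that the witness-class discrepancy genuinely upper-bounds the reward gap; a secondary subtlety is confirming that in the infinite-horizon discounted setting the visitation distribution $d^\pi$ is normalized so the $\frac{1}{1-\gamma}$ factor, and not $\frac{1}{(1-\gamma)^2}$, is the right horizon dependence.
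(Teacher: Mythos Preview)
Your approach is essentially the paper's: uniform Hoeffding concentration over the witness class $\mathcal{F}$ (with $|\mathcal{F}|\le|\Pi|^2$), then the Scheff\'e minimum-distance argument to bound $\|d^{\hat\pi}-d^{\pi_E}\|_1$, then the $\frac{1}{1-\gamma}$ value conversion. Two clarifications will tighten your write-up. First, this theorem is the \emph{realizable} case: despite the notation $\pi^\star\in\Pi$, the paper's proof uses $\pi_E\in\Pi$ directly (so $f_{\hat\pi,\pi_E}\in\mathcal{F}$ and you may compare $\hat\pi$'s empirical objective to $\pi_E$'s, which is at most $\epsilon_{\text{stat}}$ with no residual $\|d^{\pi^\star}-d^{\pi_E}\|$ term); your hedging about ``when $\pi^\star$ matches the expert in distribution'' is unnecessary and can be dropped. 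Second, the constant $4$ falls out cleanly from exactly \emph{two} Hoeffding invocations, giving $\|d^{\hat\pi}-d^{\pi_E}\|_1\le 2\epsilon_{\text{stat}}$ with $\epsilon_{\text{stat}}=2\sqrt{\ln(|\mathcal{F}|/\delta)/N}$, rather than the three triangle-inequality steps you sketch; the chain is $\mathbb{E}_{d^{\hat\pi}}\hat f-\mathbb{E}_{d^{\pi_E}}\hat f\le(\text{empirical for }\hat\pi)+\epsilon_{\text{stat}}\le(\text{empirical for }\pi_E)+\epsilon_{\text{stat}}\le 0+2\epsilon_{\text{stat}}$.
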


\begin{proof}
The proof relies on a uniform convergence argument over $\mathcal{F}$ of which the size is $|\Pi|^2$. 
First, note that for all policies $\pi \in \Pi$:
\begin{align}
    \max_{f \in \mathcal{F}} \left ( \mathbb{E}_{s,a \sim d^\pi} f(s, a) - \mathbb{E}_{s,a \sim d^{\pi_E}} f(s, a) \right ) 
    &= \max_{f : \|f\|_\infty \leq 1} \left ( \mathbb{E}_{s,a \sim d^\pi} f(s, a) - \mathbb{E}_{s,a \sim d^{\pi_E}} f(s, a) \right ) \\
    &= \left\| d^\pi - d^{\pi_E} \right\|_1
\end{align}
where the first equality comes from the fact that $\mathcal{F}$ includes
$\arg\max_{f : \|f\|_\infty \leq 1} \left[ \mathbb{E}_{s,a,s' \sim d^\pi} f(s, a) - \mathbb{E}_{s,a,s' \sim d^{\pi_E}} f(s, a) \right]$

Via Hoeffding's inequality and a union bound over $\mathcal{F}$, 
we get that with probability at least $1-\delta$, for all $f \in \mathcal{F}$:
\begin{align}
    \left| \frac{1}{N} \sum_{i=1}^{N} f(s_i^*, a_i^*) - \mathbb{E}_{s,a \sim d^{\pi_E}} f(s, a) \right| 
    &\leq 2 \sqrt{\frac{\ln(|\mathcal{F}|/\delta)}{N}} \\
    &:= \epsilon_{\text{stat}}.
\end{align}
Denote 
\begin{equation}
    \hat{f} := \arg\max_{f \in \mathcal{F}} \left[ \mathbb{E}_{s,a \sim d^{\hat{\pi}}} f(s, a) - \mathbb{E}_{s,a \sim d^{\pi_E}} f(s, a) \right]
\end{equation}
and 
\begin{equation}
    \tilde{f} := \arg\max_{f \in \mathcal{F}} \mathbb{E}_{s,a \sim d^{\hat{\pi}}} f(s, a) 
        - \frac{1}{N} \sum_{i=1}^{N} f(s_i, a_i).
\end{equation}
Hence, for $\hat{\pi}$, we have:
\begin{align}
    \left\| d^{\hat{\pi}} - d^{\pi_E} \right\|_1 
    &= \mathbb{E}_{s,a \sim d^{\hat{\pi}}} \hat{f}(s, a) - \mathbb{E}_{s,a \sim d^{\pi_E}} \hat{f}(s, a) \\
    &\leq \mathbb{E}_{s,a \sim d^{\hat{\pi}}} \hat{f}(s, a) - \frac{1}{N} \sum_{i=1}^{N} \hat{f}(s_i^*, a_i^*) + \epsilon_{\text{stat}} \\
    &\leq \mathbb{E}_{s,a \sim d^{\pi_E}} \tilde{f}(s, a) - \frac{1}{N} \sum_{i=1}^{N} \tilde{f}(s_i^*, a_i^*) + \epsilon_{\text{stat}} \\
    &\leq \mathbb{E}_{s,a \sim d^{\pi_E}} \tilde{f}(s, a) - \mathbb{E}_{s,a \sim d^{\pi_E}} \tilde{f}(s, a) + 2\epsilon_{\text{stat}} \\
    &= 2\epsilon_{\text{stat}}
\end{align}
where we use the optimality of $\hat{\pi}$ in the third inequality.

Recall that $V^\pi = \E_{s, a \sim d^\pi} r(s, a) / (1 - \gamma)$, so we have:
\begin{align}
    V^{\hat{\pi}} - V^{\pi_E} &= \frac{1}{1-\gamma} 
        \left(
        \E_{s, a \sim d^{\hat{\pi}}} r(s, a) - \E_{s, a \sim d^{\pi_E}} r(s, a)
        \right) \\
    &\leq 
        \frac{\sup_{s, a} |r(s, a)|}{1-\gamma}
        \|d^{\hat{\pi}} - d^{\pi_E} \|_1 \\
    &\leq
        \frac{2}{1-\gamma}
        \epsilon_{\text{stat}}
\end{align}
This concludes the proof.
\end{proof}

\newpage

\subsection{Proof of Theorem \ref{theo:agnostic-dm-st}}
\begin{proof}
We first define some terms below. Denote $\tilde{\pi} := \arg\min_{\pi \in \Pi} \|d^\pi - d^{\pi_E}\|_1$. 
Let us denote:
\begin{align}
\tilde{f} &= \arg\max_{f \in \mathcal{F}} 
    \left[ \mathbb{E}_{s, a \sim d^{\hat{\pi}}} f(s, a) - \mathbb{E}_{s, a \sim d^{\tilde{\pi}}} f(s, a) \right], \\
\bar{f} &= \arg\max_{f \in \mathcal{F}} 
    \left[ \mathbb{E}_{s, a \sim d^{\hat{\pi}}} f(s, a) - \frac{1}{N} \sum_{i=1}^{N} f(s_i^\star, a_i^\star) \right], \\
f' &= \arg\max_{f \in \mathcal{F}} 
    \left[ \mathbb{E}_{s, a \sim d^{\tilde{\pi}}} \left[ f(s, a) \right] 
        - \frac{1}{N} \sum_{i=1}^{N} f(s_i^\star, a_i^\star) \right].
\end{align}

Starting with triangle inequality, we have:
\begin{align}
    \left\| d^{\hat{\pi}} - d^{\pi^*} \right\|_1 
    &\leq \left\| d^{\hat{\pi}} - d^{\tilde{\pi}} \right\|_1 + \left\| d^{\tilde{\pi}} - d^{\pi^*} \right\|_1 \\
    &= \mathbb{E}_{s,a \sim d^{\hat{\pi}}} \left[ \tilde{f}(s, a) \right] 
        - \mathbb{E}_{s,a \sim d^{\tilde{\pi}}} \left[ \tilde{f}(s, a) \right] 
        + \left\| d^{\tilde{\pi}} - d^{\pi^*} \right\|_1 \\
    &= \mathbb{E}_{s,a \sim d^{\hat{\pi}}} \left[ \tilde{f}(s, a) \right] 
        - \frac{1}{N} \sum_{i=1}^{N} \tilde{f}(s_i, a_i^*) 
        + \frac{1}{N} \sum_{i=1}^{N} \tilde{f}(s_i, a_i^*) \nonumber \\
        &\quad - \mathbb{E}_{s,a \sim d^{\tilde{\pi}}} \left[ \tilde{f}(s, a) \right] 
        + \left\| d^{\tilde{\pi}} - d^{\pi^*} \right\|_1 \\
    &\leq \mathbb{E}_{s,a \sim d^{\hat{\pi}}} \left[ \tilde{f}(s, a) \right] 
        - \frac{1}{N} \sum_{i=1}^{N} \bar{f}(s_i, a_i^*) 
        + \frac{1}{N} \sum_{i=1}^{N} \tilde{f}(s_i, a_i^*) 
        - \mathbb{E}_{s,a \sim d^{\tilde{\pi}}} \left[ \tilde{f}(s, a) \right] \nonumber \\
        &\quad + \left[ \mathbb{E}_{s,a \sim d^{\pi_E}} \tilde{f}(s, a) 
        - \mathbb{E}_{s,a \sim d^{\tilde{\pi}}} \tilde{f}(s, a) \right] 
        + \left\| d^{\tilde{\pi}} - d^{\pi^*} \right\|_1 \\
    &\leq \mathbb{E}_{s,a \sim d^{\tilde{\pi}}} \left[ f'(s, a) \right] 
        - \frac{1}{N} \sum_{i=1}^{N} f'(s_i, a_i^*) 
        + 2 \sqrt{\frac{\ln(|\mathcal{F}| / \delta)}{N}} 
        + 2 \left\| d^{\tilde{\pi}} - d^{\pi_E} \right\|_1 \\
    &\leq \mathbb{E}_{s,a \sim d^{\tilde{\pi}}} \left[ f'(s, a) \right] 
        - \mathbb{E}_{s,a \sim d^{\pi_E}} \left[ f'(s, a) \right] 
        + 4 \sqrt{\frac{\ln(|\mathcal{F}| / \delta)}{N}} 
        + 2 \left\| d^{\tilde{\pi}} - d^{\pi_E} \right\|_1 \\
    &\leq 3 \left\| d^{\pi_E} - d^{\tilde{\pi}} \right\|_1 
        + 4 \sqrt{\frac{\ln(|\mathcal{F}| / \delta)}{N}}.
\end{align}
where the first inequality uses the definition of $\bar{f}$, 
the second inequality uses the fact that $\hat{\pi}$ 
is the minimizer of 
$\max_{f \in \mathcal{F}} \mathbb{E}_{s, a \sim d^{\pi}} f(s, a) - \frac{1}{N} \sum_{i=1}^{N} f(s_i^*, a_i^*)$.
We also use Hoeffding's inequality where $\forall f \in \mathcal{F}$,
\begin{equation}
    \left |\mathbb{E}_{s, a \sim d^{\pi_E}} f(s, a) 
    - \sum_{i=1}^{N} f(s_i^*, a_i^*) \right | 
    \leq 2\sqrt{\frac{\ln\left(|\mathcal{F}|/\delta\right)}{N}}
\end{equation}
with probability at least $1 - \delta$.
\end{proof}

\newpage
\section{Further Explanation of \texttt{GUITAR} and \texttt{PSDP}}
\label{appendix-guitar-explanation}

\begin{algorithm}[h]
    \caption{Policy Search Via Dynamic Programming \citep{bagnell2003policy}
    }
    \label{alg:psdp}
\begin{algorithmic}[1] 
    \State {\bfseries Input:} Reward function $r_i$, reset distribution $\rho$, and policy class $\Pi$
    \State {\bfseries Output:} Trained policy $\pi$
    \For{$h=H, H-1, \ldots, 1$}
        \State Optimize 
        \begin{equation}
            \pi_h \leftarrow \argmax_{\pi' \in \Pi} \E_{s_h \sim \rho_h} \E_{a_h \sim \pi'(\cdot \mid s_h)} A_{r_i}^{\pi_{h+1}, \ldots, \pi_{H}}(s_h, a_h)
            \label{eq:irl-psdp-policy-update}
        \end{equation}
        \EndFor
   \State {\bfseries Return} $\pi=\{\pi_h\}^H_{h=1}$
\end{algorithmic}
\end{algorithm}

\begin{algorithm}[h]
    \caption{\textbf{GU}iding \textbf{I}mi\textbf{T}aters with \textbf{A}rbitrary \textbf{R}esets (\texttt{GUITAR})}
    \label{alg:irl-psdp-full}
 \begin{algorithmic}[1]
    \State {\bfseries Input:} Expert state-action distributions $\rho_E$, mixture of expert and offline state-action distributions $\rho_{\text{mix}}$, policy class $\Pi$, reward class $\Rcal$
    \State {\bfseries Output:} Trained policy $\pi$
    \State Set $\pi_0 \in \Pi$
    \For{$i=1$ {to} $N$}
         \State Let 
             \begin{align}
                 &\text{\algcommentlight{Loss function}} \nonumber \\
                 &\hat{L}(\pi, r) = \E_{(s, a) \sim \rho_E} r(s, a) - \E_{(s, a) \sim d_{\mu}^\pi} r(s, a)
             \end{align}
         \State Optimize
             \begin{align}
                 &\text{\algcommentlight{No-regret reward update}} \nonumber \\
                 &r_i \leftarrow \argmax_{r \in \Rcal} 
                     \hat{L}(\pi_{i-1}, r)
                     + \eta^{-1} \Delta_R (r \mid r_{i-1}) \label{eq:irl-psdp-reward-update}
             \end{align}
         \State Optimize 
             \begin{align}
                 &\text{\algcommentlight{Expert-competitive response with RL}} \nonumber \\
                 &\pi_i \leftarrow \texttt{PSDP}(r=r_i, \rho=\rho_{\text{mix}})
             \end{align}
    \EndFor
    \State {\bfseries Return} $\pi_i$ with lowest validation error
 \end{algorithmic}
 \end{algorithm}

The full IRL procedure is outlined in Algorithm~\ref{alg:irl-psdp-full}. 
It can be summarized as (1) a no-regret reward update using Online Mirror Descent, and (2) an expert-competitive policy update using Policy Search by Dynamic Programming (PSDP) as the RL solver, where the learner is reset to a distribution $\rho$ in the RL subroutine. 

Existing efficient IRL algorithms, such as MMDP \citep{swamy2023inverse}, reset the learner exclusively to expert states (i.e. the case where $\rho = \rho_E$). 
\texttt{GUITAR} can be seen as extending MMDP to a general reset distribution in the misspecified setting.
We will focus on expert resets in the misspecified setting first, and we then consider other reset distributions in Section~\ref{sec:offline-irl}.

\textbf{Policy Update.} Following \citet{ren2024hybrid}'s reduction of inverse RL to expert-competitive RL, we can use any RL algorithm to generate an expert-competitive response. We employ PSDP \citep{bagnell2003policy}, shown in Algorithm~\ref{alg:psdp}, for its strong theoretical guarantees. In practice, any RL algorithm can be used, such as Soft Actor Critic (SAC, \citet{haarnoja2018soft}). 

\textbf{Reward Update.} We employ a no-regret update to the reward function. We employ Online Mirror Descent \citep{nemirovskij1983problem, beck2003mirror, srebro2011universality} for its strong theoretical guarantees, but in practice, any no-regret update can be used, such as gradient descent. 

More specifically, the reward function is updated through Online Mirror Descent, such that
\begin{equation}
    r_i \leftarrow \argmax_{r \in \Rcal} 
        \hat{L}(\pi_{i-1}, r)
        + \eta^{-1} \Delta_R (r \mid r_{i-1}),
\end{equation}
where $\Delta_R$ is the Bregman divergence with respect to the negative entropy function $R$. $\hat{L}(\pi, r)$ is the loss, defined by
\begin{equation}\label{eq:lhat-def}
    \hat{L}(\pi, r) = \E_{(s, a) \sim \rho_E} r(s, a) - \E_{(s, a) \sim d_\mu^\pi} r(s, a),
\end{equation}
with respect to the distribution of expert samples, $\rho_E$. 
 
\newpage

\section{Proofs of Section~\ref{sec:policy-complete-irl}}
\label{sec:appendix-policy-complete-irl}

\subsection{Proof of Theorem~\ref{theo:sc-irl-psdp-infinite-sample}}
\begin{proof}
We consider the imitation gap of the expert and the average of the learned policies $\overline{\pi}$,
\begin{align}
    V^{\pi_E} - V^{\overline{\pi}} 
    &= \frac{1}{n} \sum_{i=1}^{n}
        \left ( \E_{\zeta \sim \pi_E} \sum_{h=1}^{H} r^*(s, a)
            - \E_{\zeta \sim \pi_i} \sum_{h=1}^{H} r^*(s, a)
        \right ) \\
    &= H \frac{1}{n} \sum_{i=1}^{n}
        \left ( \E_{(s, a) \sim d^{\pi_E}_{\mu}} r^*(s, a)
            - \E_{(s, a) \sim d^{\pi_i}_{\mu}} r^*(s, a)
        \right ) \\
    &= H \frac{1}{n} \sum_{i=1}^{n}
        L(\pi_i, r^*) \\
    &\leq H \frac{1}{n} \max_{r \in \Rcal} \sum_{i=1}^{n}
        L(\pi_i, r) \\
    &\leq H \frac{1}{n} \max_{r \in \Rcal} \sum_{i=1}^{n}
        \left ( 
        L(\pi_i, r) - L(\pi_i, r_i) + L(\pi_i, r_i)
        \right )\\
    &= H \frac{1}{n} \sum_{i=1}^{n} L(\pi_i, r_i) 
        + H \frac{1}{n} \max_{r \in \Rcal} \sum_{i=1}^{n} 
        \left ( L(\pi_i, r) - L(\pi_i, r_i) \right )
\end{align}
Applying the regret bound of Online Mirror Descent (Theorem~\ref{lemma:mirror-regret}), we have
\begin{align}
    V^{\pi_E} - V^{\overline{\pi}}
    &\leq H \frac{1}{n} \sum_{i=1}^{n} L(\pi_i, r_i)
        + H \sqrt{\frac{\ln | \Rcal |} {n}} \\
    &= H \frac{1}{n} \sum_{i=1}^{n} 
        \left( 
            \frac{1}{H} \sum_{h=1}^{H} 
            \E_{(s_h, a_h) \sim d^{\pi_E}_h} r_i(s_h, a_h)
            - \frac{1}{H} \sum_{h=1}^{H} 
            \E_{(s_h, a_h) \sim d^{\pi_i}_h} r_i(s_h, a_h)
        \right) \notag \\
    &\hspace{2cm} + H \sqrt{\frac{\ln | \Rcal |} {n}} \\
    &= \frac{1}{n} \sum_{i=1}^{n} 
        \left( 
            \E_{s \sim \mu} V^{\pi_E}_{r_i}
            - \E_{s \sim \mu} V^{\pi_i}_{r_i}
        \right)
        + H \sqrt{\frac{\ln | \Rcal |} {n}} \\
    &= \frac{1}{n} \sum_{i=1}^{n} \sum_{h=0}^{H-1}
        \left( 
            \E_{(s_h, a_h) \sim d^{\pi_E}_h} A^{\pi_i}_{r_i, h}(s_h, a_h)
        \right)
        + H \sqrt{\frac{\ln | \Rcal |} {n}} \label{eq:psdp-infinite-sc-full}
\end{align}

Focusing on the interior summation, we have
\begin{align}
    \sum_{h=0}^{H-1} \E_{(s_h, a_h) \sim d_h^{\pi_E}} A^{\pi_i}_h(s_h, a_h)
    &\leq \sum_{h=0}^{H-1} \E_{s_h \sim d_h^{\pi_E}} \max_{a \in \A} A^{\pi_i}_h(s_h, a) \\
    &= \sum_{h=0}^{H-1} \E_{s_h \sim d_h^{\pi_E}} \max_{a \in \A} A^{\pi_i}_h(s_h, a) - \epsilon_{\Pi, h}^{\rho_E} + \epsilon_{\Pi, h}^{\rho_E} \\
    &= \sum_{h=0}^{H-1} \max_{\pi' \in \Pi} \E_{s_h \sim d_h^{\pi_E}} \E_{a \sim \pi' (\cdot \mid s)} A^{\pi_i}_h(s_h, a) + \epsilon_{\Pi, h}^{\rho_E} \\
    &\leq H^2 \epsilon + H \epsilon_{\Pi}^{\rho_E}
    \label{eq:psdp-infinite-sc-adv}
\end{align}
where the last line holds by PSDP's performance guarantee \citep{bagnell2003policy}. 

Applying Equation~\ref{eq:psdp-infinite-sc-adv} to Equation~\ref{eq:psdp-infinite-sc-full}, we have
\begin{align}
    V^{\pi_E} - V^{\overline{\pi}}
    &\leq \frac{1}{n} \sum_{i=1}^{n} \sum_{h=0}^{H-1}
        \left ( \E_{(s_h, a_h) \sim d_h^{\pi_E}} A^{\pi_i}_{r_i, h}(s_h, a_h)
        \right )
        + H \sqrt{\frac{\ln | \Rcal |} {n}} \\
    &\leq \frac{1}{n} \sum_{i=1}^{n}
        \left ( H^2 \epsilon + H \epsilon_{\Pi}^{\rho_E}
        \right )
        + H \sqrt{\frac{\ln | \Rcal |} {n}} \\
    &\leq H^2 \epsilon + H \epsilon_{\Pi}^{\rho_E}
        + H \sqrt{\frac{\ln | \Rcal |} {n}}
\end{align}
which completes the proof.
\end{proof}

\newpage
\section{Proofs of Section~\ref{sec:offline-irl}}
\label{sec:appendix-offline-irl}

\subsection{Lemmas of Theorem~\ref{theo:sc-irl-psdp-offline-sample}}
\begin{lemma}[Reward Regret Bound]\label{lemma:reward-regret}
    Recall that
    \begin{equation}
        \hat{L}(\pi, r) = \E_{(s, a) \sim \rho_E} r(s, a) - \E_{(s, a) \sim d_{\mu}^\pi} r(s, a).
    \end{equation}
    Suppose that we update the reward via the Online Mirror Descent algorithm.
    Since $0 \leq r(s, a) \leq 1$ for all $s, a$, then $\sup_{\pi \in \Pi, r \in \Rcal} \hat{L} (\pi, r) \leq 1$. Applying Theorem~\ref{lemma:mirror-regret} with $B=1$, the regret is given by
    \begin{align}
        \reg_n &= \sup_{r \in \Rcal} \frac{1}{n} \sum_{i=1}^n \hat{L}(\pi_i, r) - \frac{1}{n} \sum_{i=1}^n \hat{L}(\pi_i, r_i) \\
        &\leq \sqrt{\frac{2 \ln |\Rcal|}{n}}\\
        &= \sqrt{\frac{C_1}{n}},
    \end{align}
    where $C_1 = 2 \ln |\Rcal|$ and $n$ is the number of updates. 
\end{lemma}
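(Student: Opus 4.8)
The plan is to recognize the reward update as an instance of online \emph{linear} optimization and reduce directly to the standard Online Mirror Descent regret guarantee, Theorem~\ref{lemma:mirror-regret}. The first observation is that for each fixed policy $\pi_i$ the per-round objective $r \mapsto \hat{L}(\pi_i, r) = \E_{(s,a)\sim\rho_E} r(s,a) - \E_{(s,a)\sim d_\mu^{\pi_i}} r(s,a)$ is linear in $r$, being a difference of two expectations that are each linear in their argument. Hence the sequence $\{\hat{L}(\pi_i,\cdot)\}_{i=1}^n$ fed to the update in Equation~\ref{eq:irl-psdp-reward-update} is a sequence of linear losses over the convex reward class $\Rcal$, which is exactly the regime in which OMD with the negative-entropy mirror map $R$ and step size $\eta$ enjoys sublinear regret.

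The second step is to pin down the range constant $B$. Since every $r\in\Rcal$ takes values in $[0,1]$ and $\rho_E$, $d_\mu^\pi$ are probability distributions, both $\E_{(s,a)\sim\rho_E} r(s,a)$ and $\E_{(s,a)\sim d_\mu^\pi} r(s,a)$ lie in $[0,1]$, so $\hat{L}(\pi,r)\in[-1,1]$ and in particular $\sup_{\pi\in\Pi,\,r\in\Rcal}\hat{L}(\pi,r)\le 1$. This supplies $B=1$ as the bound needed to instantiate Theorem~\ref{lemma:mirror-regret}.

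With linearity and $B=1$ in hand, I would invoke Theorem~\ref{lemma:mirror-regret} with the entropic regularizer over $\Rcal$ --- whose associated complexity term is $\ln|\Rcal|$ --- and the standard step-size choice $\eta$ of order $\sqrt{\ln|\Rcal|/n}$, to obtain $\reg_n = \sup_{r\in\Rcal}\frac{1}{n}\sum_{i=1}^n \hat{L}(\pi_i,r) - \frac{1}{n}\sum_{i=1}^n \hat{L}(\pi_i,r_i) \le \sqrt{2\ln|\Rcal|/n}$, and then set $C_1 := 2\ln|\Rcal|$ to rewrite this as $\sqrt{C_1/n}$, which is the claim. There is no genuine obstacle here: the only thing requiring care is clerical, namely checking that the greedy/lazy $\argmax$ form of the update in Equation~\ref{eq:irl-psdp-reward-update} is the exact variant of OMD to which Theorem~\ref{lemma:mirror-regret} applies, that it is the range bound $B=1$ (rather than a Lipschitz/gradient bound) that enters the stated rate, and that $\ln|\Rcal|$ is the correct diameter term for the negative-entropy geometry on $\Rcal$. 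Once the abstract regret theorem is matched to these hypotheses, the lemma follows with no further computation.
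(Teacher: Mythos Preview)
Your proposal is correct and follows essentially the same approach as the paper: bound the per-round loss by $B=1$ using $r(s,a)\in[0,1]$, then invoke Theorem~\ref{lemma:mirror-regret} with the negative-entropy regularizer to pick up the $\ln|\Rcal|$ term. The paper's argument is in fact embedded in the lemma statement itself and contains no additional steps beyond what you describe; your version is simply more explicit about linearity and the clerical matching of hypotheses.
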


\begin{lemma}[Statistical Difference of Losses]\label{lemma:lhat-error}
With probability at least $1-\delta$,
\begin{equation}
     L(\pi, r) \leq \Lhat(\pi, r) + \sqrt{\frac{C}{N}},
\end{equation}
where $C =  \ln \frac{2 |\Rcal|}{\delta}$ and $N$ is the number of state-action pairs from the expert.
\end{lemma}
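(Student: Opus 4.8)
The plan is a one-shot uniform-convergence argument over the finite reward class $\Rcal$. The first step is to notice that the population loss $L(\pi,r) = \E_{(s,a)\sim d^{\pi_E}_\mu}[r(s,a)] - \E_{(s,a)\sim d^\pi_\mu}[r(s,a)]$ (as used in the proof of Theorem~\ref{theo:sc-irl-psdp-infinite-sample}) and the empirical loss $\hat{L}$ of \eqref{eq:lhat-def} share the identical learner term $\E_{(s,a)\sim d^\pi_\mu}[r(s,a)]$, so that term cancels and
\begin{equation}
    L(\pi,r) - \hat{L}(\pi,r) = \E_{(s,a)\sim d^{\pi_E}_\mu}[r(s,a)] - \E_{(s,a)\sim\rho_E}[r(s,a)].
\end{equation}
Since $\rho_E$ is the uniform distribution over the $N$ pairs in $D_E$, which are i.i.d. draws from $d^{\pi_E}_\mu$ (sample $h\sim\mathrm{Unif}[H]$, then $(s_h,a_h)\sim d^{\pi_E}_{\mu,h}$, consistent with the mixture definition $d^{\pi_E}_\mu = \frac1H\sum_{h=1}^H d^{\pi_E}_{\mu,h}$), the right-hand side is exactly the gap between the mean of the bounded random variable $r(s,a)\in[0,1]$ and its empirical average over $N$ i.i.d. samples.

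The second step is concentration plus a union bound. For a fixed $r\in\Rcal$, Hoeffding's inequality gives, with probability at least $1-\delta'$, $\E_{(s,a)\sim d^{\pi_E}_\mu}[r(s,a)] - \frac1N\sum_{i=1}^N r(s_i,a_i) \leq \sqrt{\ln(1/\delta')/(2N)}$; taking $\delta' = \delta/(2|\Rcal|)$ and union-bounding over all $r\in\Rcal$ (and over both tails if one wants the two-sided statement) yields, with probability at least $1-\delta$ and simultaneously for every $r\in\Rcal$,
\begin{equation}
    L(\pi,r) \leq \hat{L}(\pi,r) + \sqrt{\frac{\ln(2|\Rcal|/\delta)}{N}} = \hat{L}(\pi,r) + \sqrt{\frac{C}{N}},
\end{equation}
which is the claim, the constant $\frac12$ from Hoeffding being absorbed into the (slightly loose) stated $C = \ln\frac{2|\Rcal|}{\delta}$.

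The step to be careful about --- and the reason a fixed-$r$ bound does not suffice --- is that in \texttt{GUITAR} (Algorithm~\ref{alg:irl-psdp}) the reward $r_i$ produced by the no-regret update is a function of the very expert sample that defines $\rho_E$; hence the inequality must hold \emph{uniformly} over $\Rcal$ before we are entitled to evaluate it at the data-dependent $r_i$, which is exactly what the union bound buys (and why finiteness of $\Rcal$ is used). The only other point worth stating precisely is the i.i.d.-from-$d^{\pi_E}_\mu$ claim above, so that the empirical average in the displayed identity has the correct mean for Hoeffding to apply cleanly. Everything else is routine.
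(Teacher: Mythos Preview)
Your proposal is correct and follows essentially the same approach as the paper: cancel the learner term so that $L-\hat L$ reduces to the gap between the population expert mean and its empirical average, then apply Hoeffding's inequality with a union bound over the finite class $\Rcal$ (absorbing the $\tfrac12$ from Hoeffding into the stated constant). Your additional remarks on why uniformity over $\Rcal$ is needed and on the i.i.d.\ structure of $\rho_E$ are accurate and make explicit what the paper leaves implicit.
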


\begin{proof}
 By definition of $L$ and $\Lhat$, for any $\pi \in \Pi$ and $r \in \Rcal$, we have
\begin{align}
    \left | L(\pi, r) - \Lhat(\pi, r) \right | 
    &= \left |
        \E_{(s, a) \sim d_\mu^{\pi_E}} r(s, a) 
        - \E_{(s, a) \sim d_\mu^\pi} r(s, a)
        \right . \nonumber \\
    &\quad \left .
        - \left (
            \E_{(s, a) \sim \rho_E} r(s, a) 
            - \E_{(s, a) \sim d_\mu^\pi} r(s, a)
        \right )
        \right | \\
    &= \left | \E_{(s, a) \sim d_\mu^{\pi_E}} r(s, a) 
        - \E_{(s, a) \sim \rho_{E}} r(s, a) 
        \right | \\
    &= \left | \E_{(s, a) \sim d_{\mu}^{\pi_E}} r(s, a) 
        - \frac{1}{N}\sum^N_{(s_i, a_i) \in D_E} r(s_i, a_i) \right | \\
    &\leq \sqrt{\frac{1}{2N}\ln \frac{2|\Rcal|}{\delta}} \\
    &\leq \sqrt{\frac{C}{N}},
\end{align}
where $C = 4 \ln \frac{2 |\Rcal|}{\delta}$.
The fourth line holds by Hoeffding's inequality and a union bound. Specifically, we apply Corollary~\ref{cor:hoeffding} with $c = 1$, since all rewards are bounded by 0 and 1. We take a union bound over all reward functions in the reward class $\Rcal$. Note that the terms involving $\pi$ cancel out, so the union bound only applies to the reward function class $\Rcal$. Rearranging terms gives the desired bound.
\end{proof}

\begin{lemma}[Advantage Bound]\label{lemma:adv-error}
Suppose that $\epsilon=0$ and reward function $r_i$ are the input parameters to PSDP, and $\pi_i=(\pi^i_1, \pi^i_2, \ldots, \pi^i_H)$ is the output learned policy. Then, with probability at least $1-\delta$,
\begin{equation}
\begin{aligned}
    \E_{s \sim d^{\pi_E}} \max_{a \in \A} A^{\pi_i}(s, a) 
    &\leq \min 
        \left \{ 
            \epsilon_\Pi^{\rho_{\text{mix}}} + \epsilon_\Pi^{\rho_{\text{mix}}} \sqrt{\frac{C_0}{N}}, C_B \left ( \epsilon_\Pi^{\rho_{\text{mix}}} + \epsilon_\Pi^{\rho_{\text{mix}}} \sqrt{\frac{C_0}{N+M}} \right )
        \right \}
\end{aligned}
\end{equation}
where $C_B = \left\| \frac{ d^{\pi_E}_\mu }{ d^{\pi_B}_\mu }  \right\|_{\infty}$, $H$ is the horizon, $N$ is the number of expert state-action pairs, $M$ is the number of offline state-action pairs, and $C_0 = 2 \ln \frac{|\Pi||\Rcal|}{\delta}$.
\end{lemma}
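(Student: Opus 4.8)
The plan is to bound $\E_{s\sim d^{\pi_E}}\max_{a\in\A} A^{\pi_i}_{r_i}(s,a)$ along the same lines as the infinite-sample argument behind Theorem~\ref{theo:sc-irl-psdp-infinite-sample} (cf.\ Eq.~\ref{eq:psdp-infinite-sc-adv}), but now carrying a change-of-measure and a uniform-convergence step to pass between the finite reset dataset $\rho_{\text{mix}}$ (resp.\ its expert and offline components) and the population expert distribution $d^{\pi_E}$. The two entries of the $\min$ correspond to two routes for this passage, and one simply keeps whichever is smaller.

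The first move is purely algebraic: working layer-by-layer (the backward-DP structure of PSDP), Definition~\ref{def:reward-indexed-policy-completeness-error} rearranges to
\[
 \E_{s\sim\rho_{\text{mix}}}\max_{a}A^{\pi_i}_{r_i}(s,a)
 = \epsilon_\Pi^{\pi_i,r_i,\rho_{\text{mix}}} + \max_{\pi'\in\Pi}\E_{s\sim\rho_{\text{mix}}}\E_{a\sim\pi'(\cdot\mid s)}A^{\pi_i}_{r_i}(s,a).
\]
Because PSDP picks, at each layer $h$, the policy in $\Pi$ maximizing $\E_{s\sim\rho_{\text{mix},h}}\E_{a\sim\pi'}A^{\pi_i}_{r_i,h}(s,a)$, and the own-policy advantage integrates to zero, the hypothesis $\epsilon=0$ forces that last $\max_{\pi'}$ term to equal $0$ exactly. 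Hence $\E_{s\sim\rho_{\text{mix}}}\max_a A^{\pi_i}_{r_i}(s,a)\le\epsilon_\Pi^{\rho_{\text{mix}}}$, and everything left is to replace the empirical mixture (or its expert sub-sample) by $d^{\pi_E}$.

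For the first branch I would use that $s\mapsto\max_a A^{\pi_i}_{r_i}(s,a)$ is a \emph{nonnegative}, $H$-bounded function drawn from the fixed family indexed by $(\pi,r)\in\Pi\times\Rcal$; a Hoeffding (or Bernstein) estimate together with a union bound over $\Pi\times\Rcal$ --- this is the source of $C_0=2\ln(|\Pi||\Rcal|/\delta)$ --- pushes the guarantee from the $N$ i.i.d.\ expert states inside $\rho_{\text{mix}}$ to their law $d^{\pi_E}$, at the cost of the $\sqrt{C_0/N}$ overhead. For the second branch I would route through the offline component instead: nonnegativity lets us change measure, $\E_{s\sim d^{\pi_E}}g\le C_B\,\E_{s\sim d^{\pi_B}}g$ with $C_B=\|d^{\pi_E}_\mu/d^{\pi_B}_\mu\|_\infty$, after which the same concentration argument now has all $N+M$ reset samples to work with, producing the $C_B\big(\epsilon_\Pi^{\rho_{\text{mix}}}+\epsilon_\Pi^{\rho_{\text{mix}}}\sqrt{C_0/(N+M)}\big)$ bound. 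Taking the minimum closes the proof.

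I expect the crux to be this transfer step, for two reasons. First, one wants the statistical overhead to appear \emph{multiplicatively} against $\epsilon_\Pi^{\rho_{\text{mix}}}$ (as written) rather than as an additive $H\sqrt{C_0/N}$; this calls for a relative-deviation / Bernstein-type bound exploiting that a nonnegative $[0,H]$-valued random variable has variance controlled by its mean, which over the reset support is itself at most $\epsilon_\Pi^{\rho_{\text{mix}}}$. Second, PSDP's per-layer optimality is a statement about the reset distribution $\rho_{\text{mix}}$, whereas the quantity we ultimately need lives under $d^{\pi_E}$; one must be careful that the \emph{signed} advantage sitting inside the inner $\max_{\pi'}$ is never itself subjected to a change of measure --- which is exactly why the argument first peels off the nonnegative $\max_a A^{\pi_i}_{r_i}$ and only then changes measure. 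The remaining ingredients (the union bounds, the layerwise bookkeeping, and the implicit $1/H$ averaging in the $d^{\pi_E}$ notation) are routine.
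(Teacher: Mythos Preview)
Your high-level plan matches the paper's: use PSDP's optimality together with Definition~\ref{def:reward-indexed-policy-completeness-error} to get $\E_{s\sim\rho_{\text{mix}}}\max_a A^{\pi_i}_{r_i}(s,a)\le\epsilon_\Pi^{\rho_{\text{mix}}}$, then transfer to $d^{\pi_E}$ along two routes and take the minimum. Two execution details differ from the paper. First, for the multiplicative $\epsilon_\Pi^{\rho_{\text{mix}}}$ in the deviation term, the paper does \emph{not} use Bernstein. Instead it argues that because the empirical average over $\rho_E$ (resp.\ $\rho_{\text{mix}}$) of the nonnegative quantities $\max_a A^{\pi_i}(s_j,a)$ is at most $\epsilon_\Pi^{\rho_{\text{mix}}}$, each individual sample value is itself $\le\epsilon_\Pi^{\rho_{\text{mix}}}$, and then applies Hoeffding with range parameter $c=(\epsilon_\Pi^{\rho_{\text{mix}}})^2$. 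Your Bernstein route (variance $\le H\cdot\text{mean}$) would naturally produce $\sqrt{H\epsilon_\Pi^{\rho_{\text{mix}}} C_0/N}$ rather than $\epsilon_\Pi^{\rho_{\text{mix}}}\sqrt{C_0/N}$, so it does not directly reproduce the stated form; the paper's mechanism is what you should follow if you want the exact statement.

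Second, your Case~2 routing has a gap. You propose $\E_{s\sim d^{\pi_E}}g\le C_B\,\E_{s\sim d^{\pi_B}}g$ and then say ``all $N+M$ reset samples'' are available. But $\rho_{\text{mix}}$ is the empirical version of the \emph{mixture} $\nu=\tfrac{N}{N+M}d^{\pi_E}_\mu+\tfrac{M}{N+M}d^{\pi_B}_\mu$, not of $d^{\pi_B}_\mu$; if you stop at $d^{\pi_B}$ you only have the $M$ offline samples to concentrate with. The paper instead changes measure from $d^{\pi_E}$ to $\nu$ (using both nonnegativity and the assumption $C_B\ge 1$ to write $\E_{d^{\pi_E}}g\le C_B\,\E_{\nu}g$), and then applies Hoeffding between $\nu$ and $\rho_{\text{mix}}$ with all $N+M$ samples. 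Fixing the intermediate target to $\nu$ rather than $d^{\pi_B}$ is what makes the $\sqrt{C_0/(N+M)}$ denominator legitimate.
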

\begin{proof}
Suppose that $\epsilon=0$ is the input accuracy parameter to PSDP, and the advantages are computed under reward function $r_i$.
PSDP is guaranteed to terminate and output a policy $\pi_i=(\pi^i_1, \pi^i_2, \ldots, \pi^i_H)$, such that
\begin{equation}
    H \epsilon 
    \geq \max_{\pi' \in \Pi} \E_{s_h \sim \rho_{\text{mix}, h}} 
        \E_{a \sim \pi'(\cdot \mid s)} A^{\pi_i}_h (s_h, a)
\end{equation}
for all $h \in [H]$ \citep{bagnell2003policy}. Consequently, we have
\begin{align}
    H \epsilon 
    &\geq \max_{\pi' \in \Pi} \E_{s \sim \rho_{\text{mix}}} 
        \E_{a \sim \pi'(\cdot \mid s)} A^{\pi_i} (s, a) \\
    &= \max_{\pi' \in \Pi} \E_{s \sim \rho_{\text{mix}}} 
        \E_{a \sim \pi'(\cdot \mid s)} A^{\pi_i} (s, a)
        + \epsilon^{\rho_{\text{mix}}}_{\Pi, r_i}
        - \epsilon^{\rho_{\text{mix}}}_{\Pi, r_i} \\
    &= \E_{s \sim \rho_{\text{mix}}} \max_{a \in \A} 
        A^{\pi_i} (s, a) - \epsilon_{\Pi, r_i}^{\rho_{\text{mix}}}
\end{align}
By definition, $0 \leq \epsilon_{\Pi, r_i}^{\rho_{\text{mix}}} \leq \epsilon_\Pi$, so for any $x \in \mathbb{R}$, $x - \epsilon_{\Pi, r_i}^{\rho_{\text{mix}}} \geq x - \epsilon_\Pi^{\rho_{\text{mix}}}$, so 
\begin{equation}
    H \epsilon 
    \geq \E_{s \sim \rho_{\text{mix}}} \max_{a \in \A} 
        A^{\pi_i} (s, a) 
        - \epsilon_\Pi^{\rho_{\text{mix}}}.
\end{equation}
Rearranging the terms gives us
\begin{align}
    \E_{s \sim \rho_{\text{mix}}} \max_{a \in \A} A^{\pi_i} (s, a) 
    &\leq H \epsilon + \epsilon_\Pi^{\rho_{\text{mix}}}
    \label{eq:adv-epsilon}
    \\
    &= \epsilon_\Pi^{\rho_{\text{mix}}},
\end{align}
where the last line holds by our assumption that $\epsilon=0$. 

\textbf{Case 1: Jettison Offline Data.} 
We will first consider the case where offline data is useless, in which case we will focus on the expert data. 

Note that $\max_{a \in \A} A^{\pi_i} (s, a) \geq 0$ for all $s \in \Scal$ and $h \in [H]$. Applying the definition of $\rho_{\text{mix}}$,
\begin{equation}
    \E_{s \sim \rho_{\text{mix}}} \max_{a \in \A} A^{\pi_i} (s, a) = \E_{s \sim \rho_{E}} \max_{a \in \A} A^{\pi_i} (s, a) + \E_{s \sim \rho_B} \max_{a \in \A} A^{\pi_i} (s, a).
\end{equation}
Consequently, we know that 
\begin{align}
    \epsilon_\Pi^{\rho_{\text{mix}}}
    &\geq \E_{s \sim \rho_{E}} \max_{a \in \A} A^{\pi_i} (s, a)
    \label{eq:psdp-case1-adv-bound-1}
    \\  
    &= \frac{1}{N} \sum_{s_i \in D_E}^N \max_{a \in \A} A^{\pi_i}(s_i, a)
\end{align}
Because $\max_{a \in \A} A^{\pi_i}(s, a) \geq 0$ for all $s \in \Scal$ and $a \in \A$, we know $\max_{a \in \A} A^{\pi_i}(s_i, a) \leq \epsilon_\Pi^{\rho_{\text{mix}}}$ for all $s_i \in D_E$.
We apply Hoeffding's inequality (Corollary~\ref{cor:hoeffding}) with $c={\left ( \epsilon_\Pi^{\rho_{\text{mix}}} \right )}^2$ to bound the difference between $\E_{s \sim d^{\pi_E}} \max_{a \in \A} A^{\pi_i} (s, a)$ and $\E_{s \sim \rho_{E}} \max_{a \in \A} A^{\pi_i} (s, a)$. 
We apply a union bound on the policy and reward function. 
As stated previously, $\max_{a \in \A} A^{\pi_i} (s, a) \geq 0$ for all $s \in \Scal$. By Hoeffding's inequality, with probability $1-\delta$, we have
\begin{align}
    \left | \E_{s \sim d^{\pi_E}_\mu} \max_{a \in \A} A^{\pi_i} (s, a) 
        - \E_{s \sim \rho_{E}} \max_{a \in \A} A^{\pi_i} (s, a) \right |
    =& \left | \E_{s \sim d^{\pi_E}_\mu} \max_{a \in \A} A^{\pi_i} (s, a) \right . \\
    &- \left . \frac{1}{N} \sum_{s_i \in D_{E}}^{N} \max_{a \in \A} A^{\pi_i} (s_i, a) \right | \\
    \leq&
        \sqrt{{\left ( \epsilon_\Pi^{\rho_{\text{mix}}} \right )}^2 \frac{1}{2N}\ln \frac{|\Pi||\Rcal|}{\delta}} \\
    \leq& 
        \epsilon_\Pi^{\rho_{\text{mix}}} \sqrt{\frac{C_0}{N}},
\end{align}
where $C_0 = 2 \ln \frac{|\Pi| |\Rcal|}{\delta}$.
Note that the cardinality of the set of advantage functions over all possible policies is upper bounded by the cardinalities of the policy and reward classes. 
Rearranging the terms and applying Equation~\ref{eq:psdp-case1-adv-bound-1} yields
\begin{align}
    \E_{s \sim d^{\pi_E}_\mu} \max_{a \in \A} A^{\pi_i} (s, a) 
    &\leq 
    \epsilon_\Pi^{\rho_{\text{mix}}} + \epsilon_\Pi^{\rho_{\text{mix}}} \sqrt{\frac{C_0}{N}}
\end{align}

\textbf{Case 2: Leverage Offline Data.} 
Next, we consider the case where offline data is useful, specifically where there is good coverage of the expert data.

Next, we apply Hoeffding's inequality (Corollary~\ref{cor:hoeffding}) to bound the difference between $\E_{s \sim \nu}  \max_{a \in \A} A^{\pi_i} (s, a)$ and $\E_{s \sim \rho_{\text{mix}}} \max_{a \in \A} A^{\pi_i} (s, a)$. We apply a union bound on the policy and reward function. We use $c = \epsilon_\Pi^2$ for a similar argument to the one used in Case 1. As stated previously, $\max_{a \in \A} A^{\pi_i} (s, a) \geq 0$ for all $s \in \Scal$. By Hoeffding's inequality, with probability $1-\delta$, we have
\begin{align}
    \left | \E_{s \sim \nu} \max_{a \in \A} A^{\pi_i} (s, a) \right .
    \left . - \E_{s \sim \rho_{\text{mix}}} \max_{a \in \A} A^{\pi_i} (s, a) \right |
    =& \left | \E_{s \sim \nu} \max_{a \in \A} A^{\pi_i} (s, a) \right . \\
    &\left . - \frac{1}{N+M} \sum_{s_i \in D_{\text{mix}}^{N+M}} \max_{a \in \A} A^{\pi_i} (s_i, a) \right | \nonumber \\
    \leq& \sqrt{\left ( \epsilon_{\Pi}^{\rho_{\text{mix}}} \right )^2 \frac{1}{2(N+M)}\ln \frac{|\Pi||\Rcal|}{\delta}} \\
    \leq& \epsilon_\Pi^{\rho_{\text{mix}}} \sqrt{\frac{C_0}{N+M}}
\end{align}
where $C_0 = 2 \ln \frac{|\Pi| |\Rcal|}{\delta}$.
Note that the cardinality of the set of advantage functions over all possible policies is upper bounded by the cardinalities of the policy and reward classes. 
Rearranging the terms and applying Equation~\ref{eq:adv-epsilon} yields
\begin{equation}
    \E_{s \sim \nu} \max_{a \in \A} A^{\pi_i} (s, a) 
    \leq 
    \epsilon_\Pi^{\rho_{\text{mix}}} + \epsilon_\Pi^{\rho_{\text{mix}}} \sqrt{\frac{C_0}{N+M}}.
    \label{eq:psdp-case1-adv-bound-2}
\end{equation} 

By linearity of expectation, and using the fact that $1 \leq C_B < \infty$, we have
\begin{align}
    \E_{s \sim d^{\pi_E}} \max_{a \in \A} A^{\pi_i}(s, a) 
    &= \frac{N}{N+M} \E_{s \sim d^{\pi_E}} \max_{a \in \A} A^{\pi_i}(s, a) \nonumber \\
    &\quad + \frac{M}{N+M} \E_{s \sim d^{\pi_E}} \max_{a \in \A} A^{\pi_i}(s, a) \\
    \label{eq:lemma-adv-bound-dist-change}
    &\leq \frac{N}{N+M} \E_{s \sim d^{\pi_E}} \max_{a \in \A} A^{\pi_i}(s, a) \nonumber \\
    &\quad + C_B \frac{M}{N+M} \E_{s \sim d^{\pi_B}} \max_{a \in \A} A^{\pi_i}(s, a) 
    \\
    &\leq C_B \frac{N}{N+M} \E_{s \sim d^{\pi_E}} \max_{a \in \A} A^{\pi_i}(s, a) \nonumber \\
    &\quad + C_B \frac{M}{N+M} \E_{s \sim d^{\pi_B}} \max_{a \in \A} A^{\pi_i}(s, a) 
    \\
    &\leq C_B \E_{s \sim \nu} \max_{a \in \A} A^{\pi_i}(s, a)
        \label{eq:psdp-case2-adv-bound-nu}
\end{align}
Applying Equation~\ref{eq:psdp-case2-adv-bound-nu} to Equation~\ref{eq:psdp-case1-adv-bound-2}, we have
\begin{align}
    \E_{s \sim d^{\pi_E}} \max_{a \in \A} A^{\pi_i}(s, a) 
    &\leq C_B \E_{s \sim \nu} \max_{a \in \A} A^{\pi_i}(s, a) \\
    &\leq  C_B \left ( \epsilon_\Pi^{\rho_{\text{mix}}} + \epsilon_\Pi^{\rho_{\text{mix}}} \sqrt{\frac{C_0}{N+M}} \right ) 
\end{align}

\textbf{Final Result.} 
Using the bounds from Case 1 and Case 2, we have
\begin{align}
    \E_{s \sim d^{\pi_E}} \max_{a \in \A} A^{\pi_i}(s, a) 
    &\leq \min 
        \left \{ 
            \epsilon_\Pi^{\rho_{\text{mix}}} + \epsilon_\Pi^{\rho_{\text{mix}}} \sqrt{\frac{C_0}{N}}, \right . \\
    &\quad \left . C_B \left ( \epsilon_\Pi^{\rho_{\text{mix}}} + \epsilon_\Pi^{\rho_{\text{mix}}} \sqrt{\frac{C_0}{N+M}} \right )
        \right \}
\end{align}
where $C_B = \left\| \frac{ d^{\pi_E}_\mu }{ d^{\pi_B}_\mu }  \right\|_{\infty}$, $H$ is the horizon, $N$ is the number of expert state-action pairs, $M$ is the number of offline state-action pairs, and $C_0 = 2 \ln \frac{|\Pi||\Rcal|}{\delta}$.
\end{proof}

\begin{lemma}[Loss Bound] \label{lemma:lhat-bound}
Suppose that $\epsilon=0$ and reward function $r_i$ are the input parameters to PSDP, and $\pi_i=(\pi^i_1, \pi^i_2, \ldots, \pi^i_H)$ is the output learned policy. Then, with probability at least $1-\delta$,
\begin{equation}
    \Lhat(\pi_i, r_i) \leq \min \left\{ \epsilon_\Pi^{\rho_{\text{mix}}} + \epsilon_\Pi^{\rho_{\text{mix}}}\sqrt{\frac{C_0}{N}}, C_B \left( \epsilon_\Pi^{\rho_{\text{mix}}} + \epsilon_\Pi^{\rho_{\text{mix}}}\sqrt{\frac{C_0}{N+M}} \right) \right\} + \sqrt{\frac{C}{N}}.
\end{equation}    
where $C_B = \left\| \frac{ d^{\pi_E}_\mu }{ d^{\pi_B}_\mu }  \right\|_{\infty}$, $H$ is the horizon, $N$ is the number of expert state-action pairs, $M$ is the number of offline state-action pairs, $C_0 = 2 \ln \frac{|\Pi||\Rcal|}{\delta}$, and $C = \ln \frac{2 |\Rcal|}{\delta}$.
\end{lemma}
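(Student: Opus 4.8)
The plan is to chain the two preceding lemmas: Lemma~\ref{lemma:lhat-error} transfers the bound from the empirical loss $\Lhat$ to the population loss $L$, and Lemma~\ref{lemma:adv-error} then controls $L(\pi_i,r_i)$ via the maximum advantage of $\pi_i$ over the expert's state distribution. Concretely, I would first note that the proof of Lemma~\ref{lemma:lhat-error} actually establishes the two-sided estimate $|L(\pi,r)-\Lhat(\pi,r)|\le\sqrt{C/N}$ uniformly over $\Rcal$ (the $\pi$-dependent terms cancel), so on that $(1-\delta)$-event we in particular have $\Lhat(\pi_i,r_i)\le L(\pi_i,r_i)+\sqrt{C/N}$ with $C$ as in Lemma~\ref{lemma:lhat-error}. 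It then remains only to bound $L(\pi_i,r_i)$.

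For that, I would reuse the manipulation already carried out in the proof of Theorem~\ref{theo:sc-irl-psdp-infinite-sample}: since $\E_{(s,a)\sim d_\mu^\pi}r(s,a)=\tfrac1H V^\pi_r$, we have $L(\pi_i,r_i)=\tfrac1H\big(V^{\pi_E}_{r_i}-V^{\pi_i}_{r_i}\big)$, and the performance-difference (advantage-decomposition) identity rewrites this as $\E_{(s,a)\sim d^{\pi_E}_\mu}A^{\pi_i}_{r_i}(s,a)$, i.e.\ the average over $h$ of $\E_{(s_h,a_h)\sim d^{\pi_E}_{\mu,h}}A^{\pi_i}_{r_i,h}(s_h,a_h)$, exactly as between Equations~(\ref{eq:psdp-infinite-sc-full}) and~(\ref{eq:psdp-infinite-sc-adv}). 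Bounding the inner expectation over actions by its supremum gives $L(\pi_i,r_i)\le\E_{s\sim d^{\pi_E}_\mu}\max_{a\in\A}A^{\pi_i}_{r_i}(s,a)$. I would then invoke Lemma~\ref{lemma:adv-error}, which (using the hypothesis $\epsilon=0$, and on its own $(1-\delta)$-event) bounds precisely this quantity by $\min\{\epsilon_\Pi^{\rho_{\text{mix}}}+\epsilon_\Pi^{\rho_{\text{mix}}}\sqrt{C_0/N},\ C_B(\epsilon_\Pi^{\rho_{\text{mix}}}+\epsilon_\Pi^{\rho_{\text{mix}}}\sqrt{C_0/(N+M)})\}$, with $C_B=\|d^{\pi_E}_\mu/d^{\pi_B}_\mu\|_\infty$ and $C_0=2\ln(|\Pi||\Rcal|/\delta)$.

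Chaining the three inequalities and taking a union bound over the two failure events (of Lemmas~\ref{lemma:lhat-error} and~\ref{lemma:adv-error}) yields the stated bound; running each invoked lemma at confidence $\delta/2$ keeps the overall probability at $1-\delta$ and merely rescales the constants inside the logarithms defining $C$ and $C_0$. I do not expect a genuine obstacle here, since the statement is essentially a composition of the two lemmas; the only points requiring care are the normalization bookkeeping (the factor $1/H$ relating $d^\pi_\mu$-expectations to value functions, so that the time-indexed advantage produced by the performance-difference identity is precisely the object whose $\max_a$-expectation Lemma~\ref{lemma:adv-error} bounds) and tracking the confidence level and constants through the union bound.
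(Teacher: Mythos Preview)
Your proposal is correct and follows essentially the same route as the paper: apply Lemma~\ref{lemma:lhat-error} (using the two-sided estimate established in its proof) to pass from $\Lhat$ to $L$, rewrite $L(\pi_i,r_i)=\tfrac{1}{H}(V^{\pi_E}_{r_i}-V^{\pi_i}_{r_i})$ via the Performance Difference Lemma as an expected advantage under $d^{\pi_E}$, upper-bound by the $\max_a$ version, and invoke Lemma~\ref{lemma:adv-error}. Your explicit union bound over the two high-probability events is in fact slightly more careful than the paper, which simply cites the two lemmas at level $\delta$ without adjusting constants.
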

\begin{proof}
By Lemma~\ref{lemma:lhat-error}, we have 
\begin{align}
    \Lhat(\pi_i, r_i) &\leq L(\pi_i, r_i) + \sqrt{\frac{C}{N}} \\
    &= \E_{(s, a) \sim d_{\mu}^{\pi_E}} \left [ r_i(s, a) \right ] - \E_{(s, a) \sim d_{\mu}^{\pi_i}} \left [ r_i(s, a) \right ] + \sqrt{\frac{C}{N}}\\
    &= \frac{1}{H} \left ( V^{\pi_E}_{r_i} - V^{\pi_i}_{r_i} \right ) + \sqrt{\frac{C}{N}} \\
    &= \frac{1}{H} \left ( 
        \sum_{h=1}^{H} 
        \E_{(s_h, a_h) \sim d^{\pi_E}_h}
        A^{\pi_i}_{r_i, h} (s_h, a_h)
    \right ) + \sqrt{\frac{C}{N}} \\
    &\leq \frac{1}{H} \left ( 
        \sum_{h=1}^{H} 
        \E_{s_h \sim d^{\pi_E}_{h}} \max_{a \in \A}
            A^{\pi_i}_{r_i, h} (s_h, a)
        \right )
        + \sqrt{\frac{C}{N}} \\
    &= \frac{1}{H} \left ( 
        H  
        \E_{s \sim d^{\pi_E}} \max_{a \in \A}
            A^{\pi_i}_{r_i} (s, a)
        \right )
        + \sqrt{\frac{C}{N}}
\end{align}
where $C =  \ln \frac{2 |\Rcal|}{\delta}$.
The second line holds by the definition of $L(\pi_i, r_i)$, and the third line holds by the definition of the reward-indexed value function. The fourth line holds by the Performance Difference Lemma (PDL). 
Applying Lemma~\ref{lemma:adv-error}, we have
\begin{equation}
    \Lhat(\pi_i, r_i) \leq \min \left\{ \epsilon_\Pi^{\rho_{\text{mix}}} + \epsilon_\Pi^{\rho_{\text{mix}}}\sqrt{\frac{C_0}{N}}, C_B \left( \epsilon_\Pi^{\rho_{\text{mix}}} + \epsilon_\Pi^{\rho_{\text{mix}}}\sqrt{\frac{C_0}{N+M}} \right) \right\} + \sqrt{\frac{C}{N}}.
\end{equation}   
where $C_B = \left\| \frac{ d^{\pi_E}_\mu }{ d^{\pi_B}_\mu }  \right\|_{\infty}$, $H$ is the horizon, $N$ is the number of expert state-action pairs, $M$ is the number of offline state-action pairs, $C_0 = 2 \ln \frac{|\Pi||\Rcal|}{\delta}$, and $C = \ln \frac{2 |\Rcal|}{\delta}$.
\end{proof}

\newpage
\subsection{Finite Sample Analysis of Algorithm~\ref{alg:irl-psdp}}
\label{subsec:finite-analysis-alg-2}
\begin{theorem}[Sample Complexity of Algorithm~\ref{alg:irl-psdp}]
    \label{theo:sc-irl-psdp-offline-sample}
    Suppose that PSDP's accuracy parameter is set to $\epsilon = 0$. Then, upon termination of Algorithm~\ref{alg:irl-psdp}, with probability at least $1-\delta$, we have
    \small
    \begin{equation}
    \begin{aligned}
        V^{\pi_E} - V^{\mean{\pi}} 
        \leq 
            \underbrace{
            H \min
            \left \{ 
                \epsilon_\Pi^{\rho_{\text{mix}}}
                +  \epsilon_\Pi^{\rho_{\text{mix}}}
                \sqrt{\frac{C_{\Pi, \Rcal}}{N}}, 
                C_B 
                \left (  \epsilon_\Pi^{\rho_{\text{mix}}}
                +  \epsilon_\Pi^{\rho_{\text{mix}}}
                \sqrt{\frac{C_{\Pi, \Rcal}}{N+M}}
                \right )
            \right \} 
            }_{\text{Misspecification Error}}
        + \underbrace{
            H \sqrt{\frac{C_{\Rcal}}{N} }
            }_{\text{Statistical Error}}
        + \underbrace{
            H \sqrt{\frac{\ln |\Rcal|}{n}}
        }_{\text{Reward Regret}}
    \end{aligned}
    \end{equation}
    \normalsize
    where $H$ is the horizon, 
    $N$ is the number of expert state-action pairs, 
    $M$ is the number of offline state-action pairs, 
    $n$ is the number of reward updates, 
    $C_{\Pi, \Rcal} = \ln \frac{|\Pi||\Rcal|}{\delta}$, 
    $C_{\Rcal} = \ln \frac{|\Rcal|}{\delta}$, 
    and
    $\mbox{$1 \leq C_B := \left\| \frac{ d^{\pi_E}_\mu }{ d^{\pi_B}_\mu }  \right\|_{\infty} \leq \infty$}$
\end{theorem}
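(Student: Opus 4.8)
The plan is to mirror the argument establishing Theorem~\ref{theo:sc-irl-psdp-infinite-sample}, but to route every population quantity through its empirical surrogate, paying the statistical and no-regret prices furnished by Lemmas~\ref{lemma:reward-regret}--\ref{lemma:lhat-bound}. First I would expand the imitation gap against the unknown ground-truth reward $r^\star$. Since $\mean{\pi}$ is the trajectory-level mixture of $\pi_1,\dots,\pi_n$, linearity of the value functional gives $V^{\pi_E} - V^{\mean{\pi}} = \tfrac1n\sum_{i=1}^n\big(V^{\pi_E}_{r^\star}-V^{\pi_i}_{r^\star}\big)$, and because $\E_{(s,a)\sim d^\pi_\mu} r = \tfrac1H V^\pi_r$ this equals $\tfrac Hn\sum_{i=1}^n L(\pi_i,r^\star)$, where $L(\pi,r)=\E_{(s,a)\sim d^{\pi_E}_\mu}r-\E_{(s,a)\sim d^\pi_\mu}r$ is the population loss of Lemma~\ref{lemma:lhat-error}. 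Using $r^\star\in\Rcal$ I bound $L(\pi_i,r^\star)\le\max_{r\in\Rcal}L(\pi_i,r)$, and then apply Lemma~\ref{lemma:lhat-error} --- whose union bound ranges only over $\Rcal$ because the $\pi$-dependent terms in $L$ and $\Lhat$ cancel --- to obtain, with probability at least $1-\delta$, $V^{\pi_E}-V^{\mean{\pi}}\le \tfrac Hn\max_{r\in\Rcal}\sum_{i=1}^n\Lhat(\pi_i,r)+H\sqrt{C/N}$.

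Next I would carry out the standard no-regret split: adding and subtracting $\sum_i\Lhat(\pi_i,r_i)$ yields $\max_{r\in\Rcal}\sum_i\Lhat(\pi_i,r)=\sum_i\Lhat(\pi_i,r_i)+n\,\mathrm{Reg}_n$, and since $r_i$ is the Online Mirror Descent iterate run on the (bounded-by-one) losses $r\mapsto\Lhat(\pi_i,r)$, Lemma~\ref{lemma:reward-regret} gives $\mathrm{Reg}_n\le\sqrt{2\ln|\Rcal|/n}$; this produces the reward-regret term $H\sqrt{\ln|\Rcal|/n}$ after folding the numeric constant into the logarithm. It then remains only to control the per-iteration empirical loss $\tfrac1n\sum_i\Lhat(\pi_i,r_i)$, which is exactly Lemma~\ref{lemma:lhat-bound}: that bound is uniform over $\pi_i\in\Pi$ and $r_i\in\Rcal$ (its union bound already ranges over $\Pi\times\Rcal$) and does not depend on the iteration index, so averaging over $i$ leaves it unchanged, giving $\tfrac1n\sum_i\Lhat(\pi_i,r_i)\le\min\big\{\epsilon_\Pi^{\rho_{\text{mix}}}+\epsilon_\Pi^{\rho_{\text{mix}}}\sqrt{C_0/N},\,C_B(\epsilon_\Pi^{\rho_{\text{mix}}}+\epsilon_\Pi^{\rho_{\text{mix}}}\sqrt{C_0/(N+M)})\big\}+\sqrt{C/N}$. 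Here I would recall that Lemma~\ref{lemma:lhat-bound} itself is assembled from the Performance Difference Lemma, the $\max_a$ relaxation of the advantage, PSDP's termination guarantee at accuracy $\epsilon=0$ combined with the definition of the reward-indexed completeness error (Lemma~\ref{lemma:adv-error}), and, for the offline branch, a change of measure from $d^{\pi_E}_\mu$ to $\nu$ that costs the factor $C_B<\infty$.

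Combining the three contributions, then absorbing the two $\sqrt{\cdot/N}$ statistical terms and all numerical constants into $C_{\Rcal}=\ln(|\Rcal|/\delta)$ and $C_{\Pi,\Rcal}=\ln(|\Pi||\Rcal|/\delta)$, and taking a union bound over the constantly many high-probability events (Lemmas~\ref{lemma:lhat-error} and \ref{lemma:lhat-bound}) so that $\delta$ is only rescaled by a constant, delivers the stated inequality. The main obstacle is not any individual estimate --- each is handed to us by a lemma --- but the bookkeeping: making sure the several Hoeffding/union-bound events are taken over the correct function classes so that one $1-\delta$ event suffices, verifying that the iteration-independence of Lemma~\ref{lemma:lhat-bound} renders the outer average trivial, and confirming that the $\min$-structure and the $C_B$ change of measure survive intact through the decomposition above. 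One point worth flagging is the mild notational inconsistency in the definition of $C_B$ across the paper ($\|d^{\pi_E}_\mu/d^{\pi_B}_\mu\|_\infty$ in the lemmas invoked here versus $\|d^{\piStar}_\mu/d^{\pi_B}_\mu\|_\infty$ in Corollary~\ref{cor:benefit-subopt}); for this theorem I would adopt the definition used in the lemmas being applied.
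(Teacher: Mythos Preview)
Your proposal is correct and follows essentially the same route as the paper: decompose $V^{\pi_E}-V^{\mean{\pi}}$ into $\tfrac Hn\sum_i L(\pi_i,r^\star)$, pass to $\Lhat$ via Lemma~\ref{lemma:lhat-error}, peel off the OMD regret via Lemma~\ref{lemma:reward-regret}, and bound each $\Lhat(\pi_i,r_i)$ uniformly via Lemma~\ref{lemma:lhat-bound}. Your remarks about absorbing the duplicated $\sqrt{C/N}$ terms and about the $C_B$ notational discrepancy are apt and match how the paper treats the constants.
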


Theorem~\ref{theo:sc-irl-psdp-offline-sample} upper bounds the sample complexity of Algorithm~\ref{alg:irl-psdp} in the offline data setting.
The bound differs from Theorem~\ref{theo:sc-irl-psdp-infinite-sample} in the following ways. First, the policy optimization error term vanishes by the assumption that $\epsilon=0$. Importantly, the assumption of $\epsilon=0$ is not necessary but rather convenient, as the $\epsilon > 0$ case was presented in Theorem~\ref{theo:sc-irl-psdp-infinite-sample}. Second, offline data is incorporated into the reset distribution, resulting in a modified misspecification error. Finally, the finite expert sample regime is considered, resulting in statistical error of estimating the expert policy's state distribution $d_\mu^{\pi_E}$ with the distribution over samples $\rho_E$. We use this concentrability coefficient $C_B$ with respect to the expert policy for Theorem \ref{theo:sc-irl-psdp-offline-sample} in order to obtain a tight bound on the sample complexity.

\begin{proof}
We consider the imitation gap of the expert and the averaged learned policies, $\mean{\pi}$,
\begin{align}
    V^{\pi_E} - V^{\mean{\pi}} &= 
    \frac{1}{n}\sum_{i=0}^n \left ( 
        \E_{\zeta \sim \pi_E} 
            \left [ \sum^{H}_{h=1} r^*(s_h, a_h) \right ] 
        - \E_{\zeta \sim \pi_i} 
            \left [ \sum^{H} _{h=1} r^*(s_h, a_h) \right ] 
        \right ) \\
    &= \frac{1}{n} H \sum_{i=0}^n
        \left ( \E_{(s, a) \sim d_{\mu}^{\pi_E}} 
            \left [ r^*(s, a) \right ] 
        - \E_{(s, a) \sim d_{\mu}^{\pi_i}} \left [ r^*(s, a) \right ] \right ) \\
    &= \frac{1}{n} H \sum^n_{i=0} L(\pi_i, r^*) \\
    &\leq \frac{1}{n} H 
        \max_{r \in \Rcal} \sum_{i=0}^n L(\pi_i, r)
\end{align}
where $n$ is the number of updates to the reward function.
The second line holds by definition of $d_{\mu}^\pi$. The third line holds by definition of $L$. Applying the Statistical Difference of Losses (Lemma~\ref{lemma:lhat-error}), we have
\begin{align}
    V^{\pi_E} - V^{\mean{\pi}} 
    &\leq \frac{1}{n} H \max_{r \in \Rcal} 
        \sum_{i=0}^n \left ( 
            \Lhat (\pi_i, r) + 
            \sqrt{\frac{C}{N}} \right ) \\
    &= \frac{1}{n} H \max_{r \in \Rcal} 
        \sum_{i=0}^n \left ( 
            \Lhat (\pi_i, r) - 
            \Lhat(\pi_i, r_i) + 
            \Lhat(\pi_i, r_i) + 
            \sqrt{\frac{C}{N}} \right )
\end{align}
where $C = \ln \frac{2 |\Rcal|}{\delta}$ and 
$M$ is the number of state-action pairs from the expert.
Applying the Reward Regret Bound (Lemma~\ref{lemma:reward-regret}), we have
\begin{align}
    V^{\pi_E} - V^{\mean{\pi}} 
    &\leq \frac{1}{n} H 
        \sum_{i=0}^n \left (
            \Lhat(\pi_i, r_i)
            + \sqrt{\frac{C}{N}} 
        \right ) 
        + H \sqrt{\frac{C_1}{n}}
\end{align}
where $C_1 = 2 \ln |\Rcal|$.
Applying the Loss Bound (Lemma~\ref{lemma:lhat-bound}), we have
\begin{align}
    V^{\pi_E} - V^{\mean{\pi}} 
    &\leq \frac{1}{n} H 
        \sum_{i=0}^n
        \Bigg(
            \min 
            \Bigg\{ 
                \epsilon_\Pi^{\rho_{\text{mix}}} + \epsilon_\Pi^{\rho_{\text{mix}}} \sqrt{\frac{C_0}{N}},
                C_B \left ( \epsilon_\Pi^{\rho_{\text{mix}}} + \epsilon_\Pi^{\rho_{\text{mix}}} \sqrt{\frac{C_0}{N+M}} \right )
            \Bigg\} \notag \\
    &\quad\quad
            + \sqrt{\frac{C}{N}}
        \Bigg) 
        + H \sqrt{\frac{C_1}{n}},
\end{align}
which simplifies to
\begin{align}
    V^{\pi_E} - V^{\mean{\pi}} 
    &\leq 
        H \min 
        \left \{ 
            \epsilon_\Pi^{\rho_{\text{mix}}} + \epsilon_\Pi^{\rho_{\text{mix}}} \sqrt{\frac{C_0}{N}},
            C_B \left ( \epsilon_\Pi^{\rho_{\text{mix}}} + \epsilon_\Pi^{\rho_{\text{mix}}} \sqrt{\frac{C_0}{N+M}} \right )
        \right \}
        + H \sqrt{\frac{C}{N}},
        + H \sqrt{\frac{C_1}{n}},
\end{align}
where $C_B = \left\| \frac{ d^{\pi_E}_\mu }{ d^{\pi_B}_\mu }  \right\|_{\infty}$, $H$ is the horizon, $N$ is the number of expert state-action pairs, $M$ is the number of offline state-action pairs, $n$ is the number of reward updates, $C_0 = 2 \ln \frac{|\Pi||\Rcal|}{\delta}$, $C = \ln \frac{2 |\Rcal|}{\delta}$, and $C_1 = 2 \ln |\Rcal|$.
\end{proof}

\textbf{Proof Sketch of Corollary~\ref{cor:benefit-subopt}.} The proof follows from Theorem~\ref{theo:sc-irl-psdp-offline-sample}, and more specifically, Lemma \ref{lemma:adv-error}. The notable change is, instead of using the concentrability coefficient with the expert policy, $\pi_E$, we use a concentrability coefficient with respect to the optimal realizable policy, $\piStar$, such that 
\begin{equation}
    C'_B := \left\| \frac{ d^{\piStar}_\mu }{ d^{\pi_B}_\mu }  \right\|_{\infty}
\end{equation} 
Then, in Case 2 of Lemma \ref{lemma:adv-error} (Equation~\ref{eq:lemma-adv-bound-dist-change}), we perform the change of distribution from $d^{\pi_E}_\mu$ to $d^{\piStar}_\mu$ and then to $d^{\pi_B}_\mu$, resulting in an additional TV distance to bound the first distribution change. Notably, this factor is independent of our offline data, and because TV distance is positive, it can be added to both Case 1 and Case 2, resulting in a tradeoff that depends solely on the offline data distribution's coverage of $\piStar$'s state distribution, as well as the other standard terms ($\epsilon_\Pi$, number of expert samples, number of offline data samples, etc). We use the standard concentrability coefficient $C_B$ with respect to the expert policy for Theorem \ref{theo:sc-irl-psdp-offline-sample} in order to obtain a tighter bound on the sample complexity.

\newpage
\section{Implementation Details}
\label{sec:appendix-implementation-details}
We describe the implementation details in this section. We compare \texttt{GUITAR} against two behavioral cloning baselines \citep{pomerleau1988alvinn} and two IRL baselines \citep{swamy2023inverse}. The first behavioral cloning baseline is trained exclusively on the expert data, $\texttt{BC}(\pi_E)$, and the second is trained on the combination of expert and offline data, $\texttt{BC}(\pi_E+ \pi_b)$, when offline data is available. We compare against two IRL algorithms: (1) \citet{swamy2021moments}'s moment-matching algorithm, \texttt{MM}, a traditional IRL algorithm with the Jensen-Shannon divergence replaced by an integral probability metric, and (2) \citet{swamy2023inverse}'s efficient IRL algorithm, \texttt{FILTER}, that exclusively leverages expert data for resets.

We train all IRL algorithms with the same expert data and hyperparameters. The difference between \texttt{MM}, \texttt{FILTER}, and \texttt{GUITAR} can be summarized by the reset distribution they use. $\texttt{MM}$ resets the learner to the true starting state (i.e. $\rho = \mu$), while $\texttt{FILTER}$ resets the learner to expert states (i.e. $\rho = \rho_E$). \texttt{GUITAR} resets the learner to offline data (i.e. $\rho = \rho_B$), when offline data is available, and otherwise resets to a subset of the expert data. In other words, we isolate the effects of the reset distribution by using the same underlying IRL algorithm but simply change the reset distribution used in the RL optimizer step. This makes \texttt{GUITAR} easy to implement and easily adaptable to other IRL algorithms.

We adapt \cite{ren2024hybrid}'s codebase for our implementation and follow their implementation details. We restate \citet{ren2024hybrid}'s details here, with modifications where necessary.
We apply Optimistic Adam \citep{daskalakis2017training} for all policy and discriminator optimization. We also apply gradient penalties \citep{gulrajani2017improved} on all algorithms to stabilize the discriminator training. The policies, value functions, and discriminators are all 2-layer ReLu networks with a hidden size of 256. We sample 4 trajectories to use in the discriminator update at the end of each outer-loop iteration, and a batch size of 4096. In all IRL variants (\texttt{MM}, \texttt{FILTER}, and \texttt{GUITAR}), we re-label the data with the current reward function during policy improvement, rather than keeping the labels that were set when the data was added to the replay buffer. \cite{ren2024hybrid} empirically observed such re-labeling to improve performance. We release a forked version of \cite{ren2024hybrid}'s code: \url{https://nico-espinosadice.github.io/efficient-IRL/}.

We calculate the inter-quartile mean (IQM) and standard errors across seeds for all experiments.

\subsection{MuJoCo Tasks}
\subsubsection{Setting without Generative Model Access}
\label{subsubsec:implementation-no-gen-model}
We detail below the specific implementations used in all MuJoCo experiments (Ant, Hopper, and Humanoid).

\begin{table}[h]
\begin{center}
\begin{small}
\begin{sc}
\setlength{\tabcolsep}{2pt}
\begin{tabular}{lccccccccccr}
\toprule
 Parameter & Value \\
\midrule
 buffer size & 1e6 \\
 batch size & 256 \\
 $\gamma$ & 0.98 \\
 $\tau$ & 0.02 \\
 Training Freq. & 64 \\
 Gradient Steps & 64 \\
 Learning Rate & Lin. Sched. 7.3e-4 \\
 policy architecture & 256 x 2 \\
 state-dependent exploration & true \\
 training timesteps & 1e6 \\
\bottomrule
\end{tabular}
\end{sc}
\end{small}
\end{center}
\caption{\label{table:stbsln3params} Hyperparameters for baselines using SAC.}
\end{table}

\paragraph{Expert Data.}
To experiment under the conditions of limited expert data, we set the amount of expert data to be the lowest amount that still enabled the baseline IRL algorithms to learn. For Ant, this was 50 expert state-action pairs. For Humanoid, this was 100 expert state-action pairs. For Hopper, this was 600 expert state-action pairs. 

\paragraph{Offline Data.}
We generate the offline data by rolling out the expert policy with a probability $p^{\pi_B}_{\text{tremble}}$ of sampling a random action. $p^{\pi_B}_{\text{tremble}} = 0.25$ for the Ant environment and $p^{\pi_B}_{\text{tremble}} = 0.05$ for the Hopper and Humanoid environments. 

\paragraph{Discriminator.}
For our discriminator, we start with a learning rate of $8 \mathrm{e}{-4}$ and decay it linearly over outer-loop iterations.
We update the discriminator every 10,000 actor steps. 

\paragraph{Baselines.}
We train all behavioral cloning baselines for 300k steps for Ant, Hopper, and Humanoid.
For \texttt{MM} and \texttt{FILTER} baselines, we follow the exact hyperparameters in \cite{ren2024hybrid}, with a notable modification to how resets are performed, discussed below. We use the Soft Actor Critic \citep{haarnoja2018soft} implementation provided by \citet{raffin2021stable} with the hyperparameters in Table \ref{table:stbsln3params}. 

\paragraph{Reset Substitute.}
We mimic resets by training a BC policy on the reset distribution specified by each algorithm. \texttt{MM} does not employ resets. \texttt{FILTER}'s reset distribution is the expert data. \texttt{GUITAR}'s reset distribution is a mixture of the expert and offline data. The BC roll-in logic follows \citet{ren2024hybrid}'s reset logic. The probability of performing a non-starting-state reset (i.e. an expert reset in \texttt{FILTER}) is $\alpha$. If a non-starting-state reset is performed, we sample a random timestep $t$ between 0 and the horizon, and we roll out the BC policy in the environment for $t$ steps. 

\paragraph{\texttt{GUITAR}.}
\texttt{GUITAR} follows the same implementation and reset logic as \texttt{FILTER}, with the only change being the training data for the BC roll-in policy. 

\subsubsection{Misspecified Setting II: Different Dynamics}
For these experiments, we use the same parameters as in Section \ref{subsubsec:implementation-no-gen-model}, with the following changes. 

\paragraph{Expert Data.}
We use the same expert policy as in Section \ref{subsubsec:implementation-no-gen-model}. However, we do not consider the finite expert sample regime, so we use 100,000 state-action pairs from the expert policy. Notably, this expert policy uses the default configuration (i.e. the default MJCF file, MuJoCo's XML file specifying the robot's morphology, including link lengths and joint ranges) and we collect expert data by rolling out the policy with the default morphology. 

\begin{table}[t!]
    \begin{subtable}{0.48\textwidth}
        \centering
        \begin{tabular}{lcc}
        \hline
        \textbf{Joint} & \textbf{Default} & \textbf{Modified} \\
        \hline
        Right hip\_x & $[-25^{\circ}, 5^{\circ}]$ & $[0^{\circ}, 30^{\circ}]$ \\
        Left hip\_x & $[-25^{\circ}, 5^{\circ}]$ & $[0^{\circ}, 30^{\circ}]$ \\
        Left knee & $[-160^{\circ}, -2^{\circ}]$ & $[-100^{\circ}, 50^{\circ}]$ \\
        \hline
        \end{tabular}
        \caption{Joint angle range differences}
        \label{tab:joint_differences}
    \end{subtable}
    \hfill
    \begin{subtable}{0.48\textwidth}
        \centering
        \begin{tabular}{lcc}
        \hline
        \textbf{Body Part} & \textbf{Default} & \textbf{Modified} \\
        \hline
        Right thigh & 0.06 & 0.09 \quad (+50\%) \\
        Right foot & 0.075 & 0.1 \quad (+33\%) \\
        Left lower arm & 0.031 & 0.04 \quad (+29\%) \\
        \hline
        \end{tabular}
        \caption{Size parameter differences}
        \label{tab:size_differences}
    \end{subtable}
    \caption{Comparison between default and modified Humanoid configurations.}
    \label{tab:model_comparison}
\end{table}

\paragraph{Offline Data.}
We generate the offline data by training a policy under a new configuration, specified below, otherwise following a similar setup to training the expert policy. We collect 100,000 state-action pairs from the offline policy by rolling it out with the new morphology.

\paragraph{Baselines.}
We train all behavioral cloning baselines for 300k steps for Humanoid and Ant.
For \texttt{MM} and \texttt{FILTER} baselines, we otherwise follow the exact hyperparameters in \cite{ren2024hybrid}, except for varying the robot's morphology. We use the Soft Actor Critic \citep{haarnoja2018soft} implementation provided by \citet{raffin2021stable} with the hyperparameters in Table \ref{table:stbsln3params}. 

\paragraph{Resets.}
\texttt{MM} resets the learner to the true starting state, so no changes are needed. \texttt{FILTER} resets the learner to expert states. However, due to the difference in morphology between the expert and learner, it is not possible to reset the learner to expert states. Therefore, \texttt{FILTER} cannot be applied in this misspecified setting. 

\paragraph{\texttt{GUITAR}.}
\texttt{GUITAR} follows the same implementation as \texttt{FILTER}, with the notable change of its reset distribution. Unlike \texttt{FILTER}, \texttt{GUITAR} resets the learner to the offline data, and in contrast to resetting to expert states, is possible in this case, since the offline behavior policy and the learner have the same morphology. 

\paragraph{Learner.} The learner, in both the training and evaluation environments, has a modified morphology, as indicated by Table \ref{tab:model_comparison} and \ref{tab:ant_model_comparison} for Humanoid and Ant, respectively.

\begin{table}[t!]
    \centering
    \begin{subtable}{0.48\textwidth}
        \centering
        \begin{tabular}{lcc}
        \hline
        \textbf{Joint} & \textbf{Default} & \textbf{Modified} \\
        \hline
        All Hip joints & $[-30^{\circ}, 30^{\circ}]$ & $[-20^{\circ}, 40^{\circ}]$ \\
        Front Ankles (1,4) & $[30^{\circ}, 70^{\circ}]$ & $[40^{\circ}, 80^{\circ}]$ \\
        Back Ankles (2,3) & $[-70^{\circ}, -30^{\circ}]$ & $[-60^{\circ}, -20^{\circ}]$ \\
        \hline
        \end{tabular}
        \caption{Joint angle range differences}
        \label{tab:ant_joint_diffs}
    \end{subtable}
    \vspace{1em}

    \centering
    \begin{subtable}{0.48\textwidth}
        \centering
        \begin{tabular}{lcc}
        \hline
        \textbf{Body Part} & \textbf{Default} & \textbf{Modified} \\
        \hline
        Auxiliary segments & 0.08 & 0.11 \quad (+38\%) \\
        Leg segments & 0.08 & 0.11 \quad (+38\%) \\
        Ankle segments & 0.08 & 0.11 \quad (+38\%) \\
        \hline
        \end{tabular}
        \caption{Geometry size differences}
        \label{tab:ant_size_diffs}
    \end{subtable}
    \caption{Comparison between default and modified Ant configurations. All changes are applied consistently to all four legs.}
    \label{tab:ant_model_comparison}
\end{table}

\subsection{D4RL Tasks}

\subsubsection{Resetting to Subsets of $\piStar$'s State Distribution}
For the two \texttt{Antmaze-Large} tasks, we use the data provided by \citet{fu2020d4rl} as the expert demonstrations. We append goal information to the observation for all algorithms following \citet{ren2024hybrid, swamy2023inverse}. For our policy optimizer in every algorithm, we build upon the TD3+BC implementation of \citet{fujimoto2021minimalist} with the default hyperparameters. 

\paragraph{Expert Data and Discriminator.}
We use the relevant D4RL dataset to learn the discriminator.
For our discriminator, we start with a learning rate of $8e-3$ and decay it linearly over outer-loop iterations. We update the discriminator every 5,000 actor steps. 

\paragraph{Baselines.} 
For behavioral cloning, we run the TD3+BC optimizer for 500,000 steps while zeroing out the component of the actor update that depends on rewards. We use a reset proportion of $\alpha=1.0$. We provide all runs with the same expert data. All IRL algorithms are pretrained with 10,000 steps of behavioral cloning on the expert dataset.

\begin{figure*}[t!]
    \centering
    \begin{minipage}[t]{\textwidth} %
        \centering
        \includegraphics[width=0.99\textwidth]{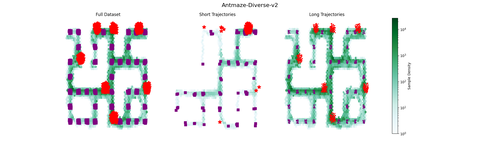}
        \includegraphics[width=0.99\textwidth]{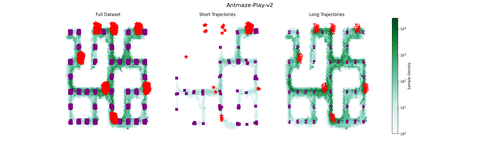}
    \end{minipage}
    \begin{minipage}[t]{\textwidth} %
        \centering
        \includegraphics[width=0.30\textwidth, trim=0 55 0 55, clip]{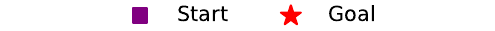}
    \end{minipage}
    \caption{We plot the coverage of the D4RL \texttt{Antmaze-Large} expert data, including the full dataset, short trajectories (episodes shorter than 500 steps), and long trajectories (episodes 500 steps or longer).} 
    \label{fig:maze-coverage}
\end{figure*}

\paragraph{Reset Distributions.}
We reset \texttt{GUITAR} to various reset distributions. We consider the expert dataset, short trajectories (of length less than 500) in the expert dataset, and long trajectories (of length greater than or equal to 500) in the expert dataset. We use each of these datasets as different reset distributions. For the D4RL tasks, we perform true state resets (i.e. reset the learner to states in the reset distribution), rather than perform BC roll-ins as done in the MuJoCo tasks.
Notably, IRL with resets to the true starting state distribution (i.e. no selective reset distribution) has been well studied by prior work \citep{swamy2023inverse,ren2024hybrid}
and observed to not solve the Antmaze-Large tasks.

\subsubsection{Misspecified Setting I: Unreachable Expert States - Block Obstruction}
For the expert action misspecification experiments, we train an RL expert using TD3+BC. We append goal information to the observation for all algorithms following \citet{ren2024hybrid, swamy2023inverse}. For our policy optimizer in every algorithm, we build upon the TD3+BC implementation of \citet{fujimoto2021minimalist} with the default hyperparameters. 

\paragraph{Expert Data and Offline Data.}
\definecolor{purple}{rgb}{0.5, 0.0, 0.5}

We collect expert data by first training an RL expert using TD3+BC on the following maze map:
\begin{equation*}
M_{\pi_E} := 
\begin{bmatrix}
1 & 1 & 1 & 1 & 1 \\
1 & \textbf{0} & 0 & 0 & 1 \\
1 & \textcolor{purple}{\textbf{R}} & 1 & \textcolor{red}{\textbf{G}} & 1 \\
1 & \textbf{1} & 0 & 0 & 1 \\
1 & 1 & 1 & 1 & 1
\end{bmatrix}
\end{equation*}
where \textcolor{purple}{\textbf{R}} is the starting state and \textcolor{red}{\textbf{G}} is the goal state.
The expert policy is then rolled out for 100,000 state-action samples.  
We collect the offline data by training an RL agent using TD3+BC on the following maze map:
\begin{equation*}
M_{\pi_B} := 
\begin{bmatrix}
1 & 1 & 1 & 1 & 1 \\
1 & \textbf{1} & 0 & 0 & 1 \\
1 & \textcolor{purple}{\textbf{R}} & 1 & \textcolor{red}{\textbf{G}} & 1 \\
1 & \textbf{0} & 0 & 0 & 1 \\
1 & 1 & 1 & 1 & 1
\end{bmatrix}
\end{equation*}
and then rolling out the policy for 100,000 state-action samples. The differences between $M_{\pi_E}$ and $M_{\pi_B}$ are bolded. 
\paragraph{Discriminator.}
We use the expert data to learn the discriminator.
For our discriminator, we start with a learning rate of $8e-3$ and decay it linearly over outer-loop iterations. We update the discriminator every 5,000 actor steps. We use 10 sample trajectories for the discriminator update. Since this is a strongly misspecified setting, we only use the $x$-position of the agent as the input to the discriminator for all IRL algorithms. The explanation for this design choice is that the learner's and the expert's behaviors may differ how they solve the maze (i.e. the $y$-position), but the goal is to finish the maze in some way (i.e. dependent on the $x$-position).

\paragraph{Baselines and Reset Distributions.} 
For behavioral cloning, we run the TD3+BC optimizer for 500,000 steps while zeroing out the component of the actor update that depends on rewards. We use a reset proportion of $\alpha=1.0$. We provide all runs with the same expert data. Due to the strong misspecification in this task, we do not pretrain the IRL algorithms with behavioral cloning.

\texttt{MM} is reset to the true starting state, while \texttt{FILTER} is reset to the expert data. \texttt{GUITAR} is reset to the offline data (i.e. $\pi_B$'s data).

\textbf{Misspecification.} The learner is trained and evaluated on the map $M_{\pi_B}$. Notably, this difference ensures that the expert policy solves the maze via one path, and the learner must solve it in a different path, where the only shared states can be the start and goal states. For the expert and offline data collection, as well as the learner's environment, we add a \texttt{BackWallGoalWrapper}, which allows the expert to achieve a reward of 1.0 by solving the maze through the ``top hallway,'' but due to the block in the learner's maze, the learner is only able to achieve a reward of 0.9 by solving the maze through the ``bottom hallway.'' The \texttt{BackWallGoalWrapper}'s required $x$-position to solve the maze is 4.0.

It should be noted that this is not the typical Antmaze-Umaze task, which can typically be solved by IRL approaches. In this setting, we consider the strongly misspecified setting, where the expert policy solves the maze one way, but the learner must solve it differently due to differences in the maze at ``test-time.'' The degree of exploration difficulty due to the misspecification is likely the reason for the poor performance of the baselines.

\subsubsection{Misspecified Setting I: Unreachable Expert States – Time Constraint}

\paragraph{Expert Data and Offline Data.} We train an expert policy via RL to solve the \texttt{Antmaze-Umaze-v2} task and roll out the policy in the default Umaze configuration to collect 100,000 state-action pairs. We use the same policy for the offline data, but we stop the roll-out once the ``safety constraint'' is reached (described below). This means that the offline data consists of only states from the ``first hallway'' in the maze.

\textbf{Misspecification.} To incorporate a ``safety constraint,'' such that the learner is not able to reach the ``bottom halllway'' of the maze, we impose a time constraint in the learner's environment, such that after `T=50` steps, the episode terminates. 

\paragraph{Discriminator.}
We use the expert data to learn the discriminator.
For our discriminator, we start with a learning rate of $8e-3$ and decay it linearly over outer-loop iterations. We update the discriminator every 5,000 actor steps. We use 10 sample trajectories for the discriminator update. We use full agent's observation for the discriminator input.

\paragraph{Baselines and Reset Distributions.} 
For behavioral cloning, we run the TD3+BC optimizer for 500,000 steps while zeroing out the component of the actor update that depends on rewards. We use a reset proportion of $\alpha=1.0$. We provide all runs with the same expert data. Due to the strong misspecification in this task, we do not pretrain the IRL algorithms with behavioral cloning.

\texttt{MM} is reset to the true starting state, while \texttt{FILTER} is reset to the expert data. \texttt{GUITAR} is reset to the offline data (i.e. $\pi_B$'s data).

\newpage
\section{Are $\piStar$ States the Unique, Optimal Reset Distribution?}
\label{appendix:reset-distribution-unique}
We have shown theoretical and empirical results suggesting that the optimal reset distribution for efficient IRL are states from the best realizable policy, $\piStar$. In this section, we consider the question of whether $\piStar$ is the \textit{unique} optimal reset distribution. While theory suggests that the reset distribution must \textit{cover} the state distribution of $\piStar$, we explore resetting to \textit{subsets} of $\piStar$'s state distribution.

\textbf{Practical Motivation.}
Compared to the relatively unconstrained motion through free space before picking up an object, the contact phase of the manipulation interaction might be significantly more challenging for the learner to get correct. Even if $\piStar$ is able to successfully pick up and then manipulate some object, we might not need to expend as much compute performing local search at certain states where a wider range of actions is sufficient to make progress on the task. \footnote{We leave formalizing this notion -- which seems related to the \textit{density} of ``good enough'' actions under the learner's advantage function -- as an interesting question for future work.}

\textbf{Experimental Setup.} To empirically investigate this setting, we consider the D4RL \texttt{Antmaze-Large} tasks, where the expert data comes from the D4RL dataset. For \texttt{GUITAR}, we isolate all short trajectories ($D_{\text{short}}$) and all long trajectories ($D_{\text{long}}$) in the expert dataset. We compare \texttt{GUITAR}$(D_{\text{short}})$ and \texttt{GUITAR}$(D_{\text{long}})$, which reset to short and long trajectories respectively. For simplicity, we do not add any additional misspecification and consider the policy that generated $D_{\text{full}}$ to be realizable and thus $\piStar$.

\textbf{Experimental Results.} In Figure \ref{fig:maze-exps}, we observe that using the short-trajectory dataset as the reset distribution performs comparably to the full expert dataset, suggesting that full coverage of the expert's state distribution is not necessary. However, as demonstrated by the long-trajectory dataset failing to solve the problem, there are also subsets of the expert's state distribution that adversely affect learning. Crucially, we observe a difference in the coverage of the reset distributions when visualized in Figure~\ref{fig:maze-coverage}, with the short-trajectory dataset having more focused coverage (i.e. coverage closer to goals). The results in Figure~\ref{fig:maze-exps} suggest that the state distribution of $\piStar$ is not the only optimal reset distribution and resetting to a subset of $\piStar$'s state distribution can be equally efficient. 

We leave a more nuanced theoretical exploration of what makes particular subsets of $\piStar$ better to reset to than others as a question for future work.

\begin{figure}[H]
        \centering
        \begin{minipage}[t]{\textwidth} %
            \centering
            \includegraphics[width=0.35\textwidth]{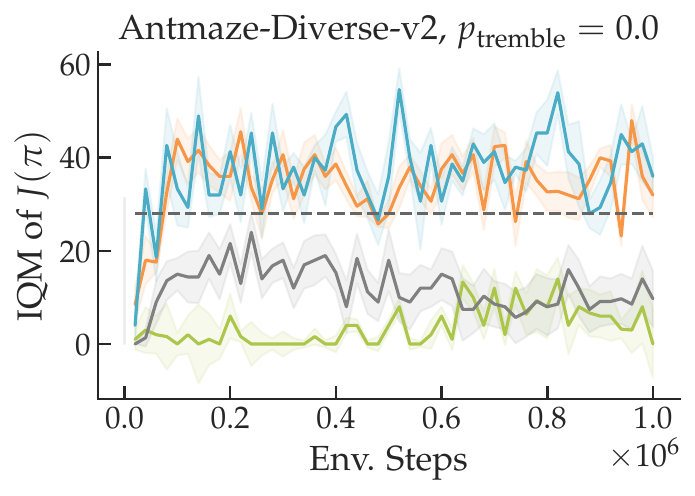}
            \includegraphics[width=0.35\textwidth]{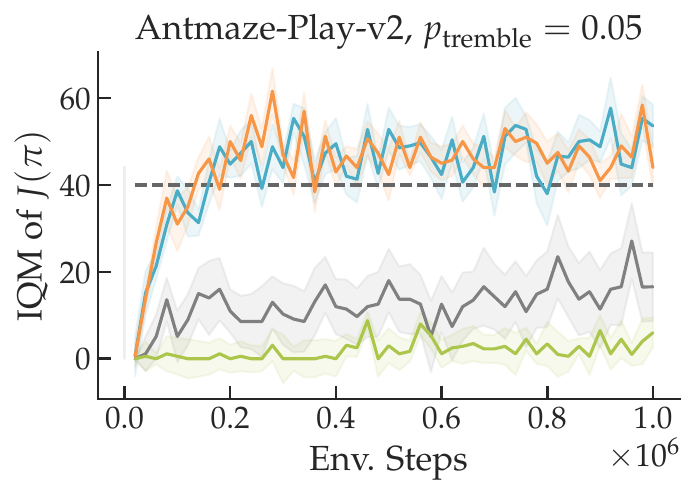}
        \end{minipage}
        
        \begin{minipage}[t]{\textwidth} %
            \centering
            \includegraphics[width=1.0\textwidth]{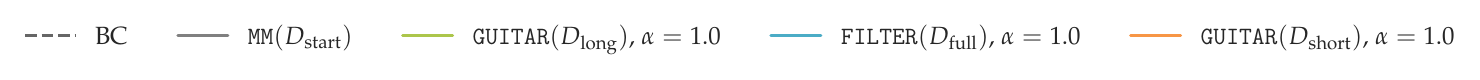}
        \end{minipage}
        \caption{\textbf{Resets to subsets of $\piStar$'s state distribution.} We consider whether it is necessary, as the theory suggests, to reset to a distribution that covers $\piStar$'s state distribution. In this experiment, we reset to subsets of $\piStar$'s state distribution and compare their performance. The performance of $\texttt{GUITAR}(D_{\text{short}})$ matches the performance of $\texttt{FILTER}(D_{\text{full}})$, showing that full coverage of $\piStar$'s state distribution is not necessary in practice. Standard errors are computed across 10 seeds. During evaluation, agents sample a random action with probability $p_{\text{tremble}}$.} 
        \label{fig:maze-exps}
    \end{figure}

\newpage
\section{Setting Without Generative Model Access}
\label{appendix:setting-no-gen-model}
We consider two additional constraints common in real-world robotics applications: settings with finite expert data and environments without generative model access (i.e. where the robot cannot be reset to an arbitrary state).
To ensure we train in the low-data regime, we used the minimum amount of expert data that allowed the baseline IRL algorithm (\texttt{MM}) to learn in each environment (notably, less than one full episode). The offline data was generated by rolling out the pretrained expert policy with a probability $p^{\pi_b}_{\text{tremble}}$ of sampling a random action. We mimic resets by rolling in with a BC policy trained on the corresponding reset distribution. More specifically, \texttt{FILTER}'s reset distribution consists of the the expert states, so \texttt{FILTER} rolls in with $\texttt{BC}(\pi_E)$. \texttt{GUITAR}'s reset distribution is a mixture of expert and offline states, so \texttt{GUITAR} rolls in with $\texttt{BC}(\pi_E+ \pi_b)$. \texttt{MM} continues to reset to the environment's true starting state.

Based on Theorem \ref{cor:benefit-subopt}, the sample efficiency of IRL should improve as the reset distribution's coverage of the expert's state distribution improves. More formally, this is when
\begin{equation}
    C_B = \left \| \frac{d^{\pi_E}_\mu}{\rho} \right \|_\infty \rightarrow 1,
\end{equation}
where $d^{\pi_E}_\mu$ is the expert's state distribution and $\rho$ is the reset distribution. Since \texttt{GUITAR} resets to $\texttt{BC}(\pi_E+ \pi_b)$, the performance of $\texttt{BC}(\pi_E+ \pi_b)$ is an estimation of the \texttt{GUITAR}'s reset distribution's coverage of the expert's state distribution, and correspondingly for \texttt{FILTER}'s reset distribution and $\texttt{BC}(\pi_E)$. From Figure \ref{fig:mujoco-exps}, we see that as the reset distribution's coverage of the expert's states improves---as measured by the corresponding \texttt{BC} performance---so does the performance of the IRL algorithm. In the case where the BC performance is poor (Humanoid-v3), there is no observable benefit to modified resets.

\begin{figure}[H]
    \centering
    \begin{minipage}[t]{\textwidth} %
        \centering
        \includegraphics[width=0.32\textwidth]{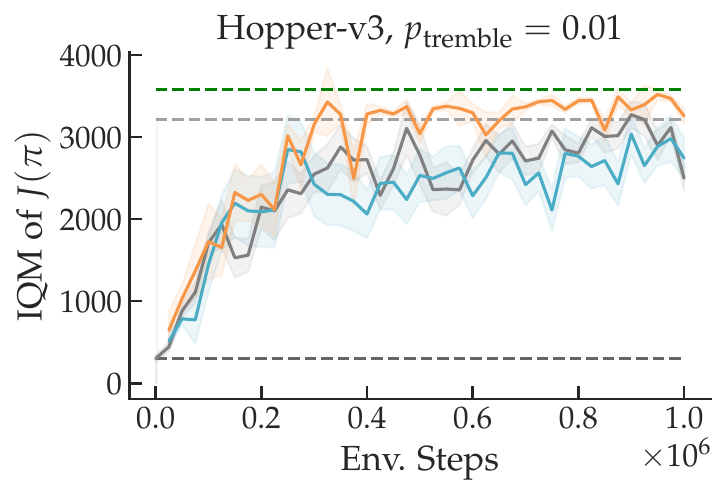}
        \includegraphics[width=0.32\textwidth]{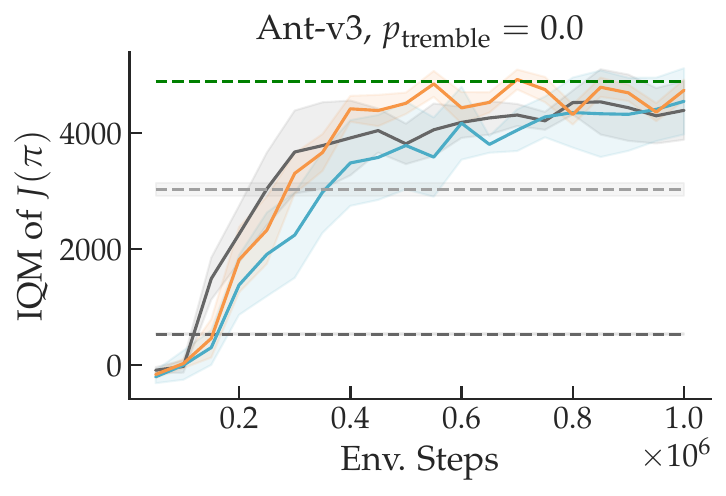}
        \includegraphics[width=0.32\textwidth]{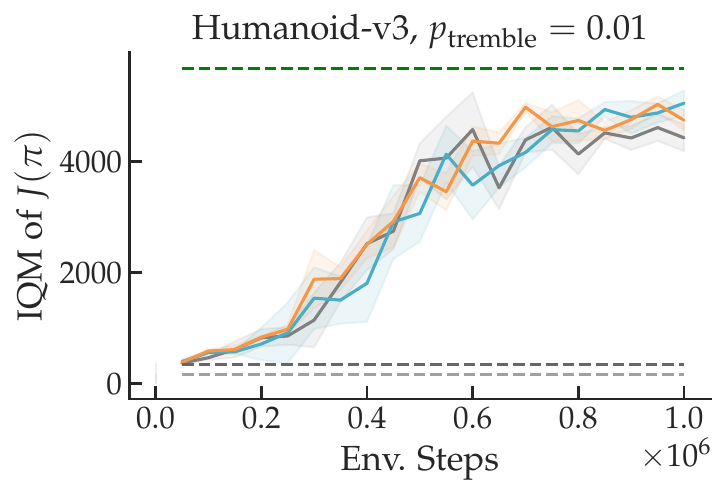}
    \end{minipage}
    
    \begin{minipage}[t]{\textwidth} %
        \centering
        \includegraphics[width=0.99\textwidth]{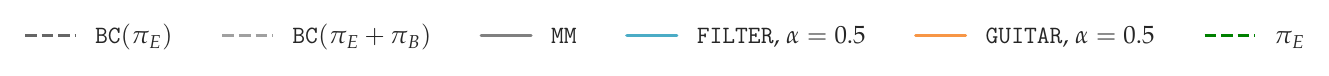}
    \end{minipage}
    \caption{\textbf{Environment without arbitrary reset access.} Standard errors are computed across 5 seeds. Expert data is a partial trajectory (i.e. a subset of one full trajectory).}
    \label{fig:mujoco-exps}
\end{figure}

\newpage
\section{Useful Lemmas}
\begin{theorem}[Hoeffding's Inequality]\label{cor:hoeffding}
    If $Z_1, \ldots, Z_M$ are independent with $P(a \leq Z_i \leq b) = 1$ and common mean $\mu$, then, with probability at least $1 - \delta$,
    \begin{equation}
        |\mean{Z}_M - \mu| \leq \sqrt{\frac{c}{2M}\ln \frac{2}{\delta}}
    \end{equation}
    where $c = \frac{1}{M}\sum_{i=1}^M(b_i - a_i)^2$.
\end{theorem}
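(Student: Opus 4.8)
The plan is to obtain the claimed high-probability bound by first establishing the standard two-sided sub-Gaussian tail bound for $\mean{Z}_M - \mu$ via the Chernoff method, and then inverting it in $\delta$. First I would fix $t > 0$ and control the upper tail. For any $s > 0$, Markov's inequality applied to $e^{s(\mean{Z}_M - \mu)}$ gives $\PP(\mean{Z}_M - \mu \ge t) \le e^{-st}\,\E\big[e^{s(\mean{Z}_M - \mu)}\big]$, and since the $Z_i$ are independent and $\E[\mean{Z}_M] = \mu$, the moment generating function factorizes as $\E[e^{s(\mean{Z}_M - \mu)}] = \prod_{i=1}^M \E\big[e^{(s/M)(Z_i - \E Z_i)}\big]$, where each $Z_i$ is supported on $[a_i, b_i]$ (the per-coordinate ranges entering the definition of $c$).

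The crucial ingredient is Hoeffding's lemma: if $Y$ is a mean-zero random variable taking values in an interval $[\alpha,\beta]$, then $\E[e^{\lambda Y}] \le \exp\!\big(\lambda^2(\beta-\alpha)^2/8\big)$ for all $\lambda \in \R$. I would prove this by setting $\psi(\lambda) = \log \E[e^{\lambda Y}]$, observing $\psi(0) = \psi'(0) = 0$, and bounding $\psi''(\lambda) = \Var_{\mathbb{Q}_\lambda}(Y) \le (\beta-\alpha)^2/4$, where $\mathbb{Q}_\lambda$ is the exponentially tilted law $d\mathbb{Q}_\lambda \propto e^{\lambda Y}\,d\mathbb{P}$, which is still supported on $[\alpha,\beta]$ so Popoviciu's inequality on variances applies; a second-order Taylor expansion with remainder then yields the bound. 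Applying this with $Y = Z_i - \E Z_i$, supported on an interval of width $b_i - a_i$, and $\lambda = s/M$, the product above is at most $\exp\!\big(\tfrac{s^2}{8M^2}\sum_{i=1}^M (b_i - a_i)^2\big) = \exp\!\big(\tfrac{s^2 c}{8M}\big)$ by the definition $c = \tfrac1M\sum_i (b_i - a_i)^2$.

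Putting these together, $\PP(\mean{Z}_M - \mu \ge t) \le \exp\!\big(-st + \tfrac{s^2 c}{8M}\big)$; minimizing the exponent over $s>0$ at $s = 4Mt/c$ gives $\PP(\mean{Z}_M - \mu \ge t) \le \exp(-2Mt^2/c)$. Repeating the argument with $Z_i$ replaced by $-Z_i$ bounds the lower tail by the same quantity, so a union bound gives $\PP(|\mean{Z}_M - \mu| \ge t) \le 2\exp(-2Mt^2/c)$. Finally I would set the right-hand side equal to $\delta$ and solve: $2\exp(-2Mt^2/c) = \delta$ is equivalent to $t = \sqrt{\tfrac{c}{2M}\ln\tfrac{2}{\delta}}$, which is exactly the stated bound.

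The only step I expect to require genuine care is Hoeffding's lemma — in particular, justifying that $\psi$ is twice differentiable with $\psi''$ equal to the variance under the tilted measure, and controlling that variance by $(\beta-\alpha)^2/4$; the remaining steps are routine Chernoff bounding, optimization over $s$, a union bound, and algebraic inversion.
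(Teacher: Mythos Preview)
Your proof is correct and follows the standard Chernoff--Hoeffding argument. The paper, however, does not prove this statement at all: it is listed in the appendix under ``Useful Lemmas'' as a known concentration result and is simply invoked where needed, so there is no paper proof to compare against.
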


\begin{lemma}[Online Mirror Descent Regret]
\label{lemma:mirror-regret}
    Regret is defined as
    \begin{equation}
        \reg_N = \frac{1}{N} \sum_{t=1}^N \ell(\hat{\mathbf{y}}_t, z_t) 
        - \inf_{\mathbf{f} \in \mathcal{F}} \frac{1}{N} \sum_{t=1}^N \ell(\mathbf{f}, z_t).
    \end{equation}

    Given $\mathcal{F} = \Delta(\mathcal{F}')$ and $\langle \mathbf{f}, \nabla_t \rangle = \E_{f' \sim \mathbf{f}} [\ell (f', (x_t, y_t))]$, where $\sup_{\nabla \in \mathcal{D}} \lVert \nabla \rVert_\infty \leq B$, let $R$ be any 1-strongly convex function. If we use the Mirror descent algorithm with 
    $\eta = 
        \sqrt{
        \frac{2 \sup_{\mathbf{f} \in \mathcal{F}} R (\mathbf{f})}
        {N B^2}}$,
    then,
    \begin{equation}
        \reg_n \leq 
        \sqrt{
            \frac{2 B^2 \sup_{\mathbf{f} \in \mathcal{F}} R (\mathbf{f})}
            {N}}.
    \end{equation}
    If $R$ is the negative entropy function,
    then $\sup_{\mathbf{f} \in \mathcal{F}} R(\mathbf{f}) \leq \log |\mathcal{F}'|$.
\end{lemma}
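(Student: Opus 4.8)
The plan is to run the classical potential-function analysis of online mirror descent \citep{nemirovskij1983problem, beck2003mirror}. First, note that the per-round loss $\ell(\mathbf{f}, z_t) = \langle \mathbf{f}, \nabla_t\rangle$ is \emph{linear} in the mixture weights $\mathbf{f}$ by construction, so there is no linearization gap: for any fixed comparator $\mathbf{f}^\star \in \mathcal{F}$ the regret equals $\frac{1}{N}\sum_{t=1}^{N}\langle \hat{\mathbf{y}}_t - \mathbf{f}^\star, \nabla_t\rangle$, and it suffices to bound this sum uniformly over $\mathbf{f}^\star$.

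Second, write the update as $\hat{\mathbf{y}}_{t+1} = \argmin_{\mathbf{y}\in\mathcal{F}}\{\eta\langle \mathbf{y},\nabla_t\rangle + D_R(\mathbf{y},\hat{\mathbf{y}}_t)\}$, where $D_R$ is the Bregman divergence induced by $R$, and establish the standard mirror-descent one-step inequality
\[
\langle \hat{\mathbf{y}}_t - \mathbf{f}^\star, \nabla_t\rangle \;\le\; \frac{1}{\eta}\bigl(D_R(\mathbf{f}^\star,\hat{\mathbf{y}}_t) - D_R(\mathbf{f}^\star,\hat{\mathbf{y}}_{t+1})\bigr) + \frac{\eta}{2}\lVert\nabla_t\rVert_\infty^2 .
\]
This comes from the first-order optimality condition for the update (which yields the three-point inequality $\eta\langle \hat{\mathbf{y}}_{t+1}-\mathbf{f}^\star,\nabla_t\rangle \le D_R(\mathbf{f}^\star,\hat{\mathbf{y}}_t)-D_R(\mathbf{f}^\star,\hat{\mathbf{y}}_{t+1})-D_R(\hat{\mathbf{y}}_{t+1},\hat{\mathbf{y}}_t)$), followed by adding $\eta\langle \hat{\mathbf{y}}_t-\hat{\mathbf{y}}_{t+1},\nabla_t\rangle$ to both sides and using $1$-strong convexity of $R$ (so $D_R(a,b)\ge\frac{1}{2}\lVert a-b\rVert^2$) together with duality of norms to absorb the residual into $\frac{\eta}{2}\lVert\nabla_t\rVert_\infty^2$.

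Third, sum over $t=1,\dots,N$: the Bregman terms telescope to at most $\frac{1}{\eta}D_R(\mathbf{f}^\star,\hat{\mathbf{y}}_1)$, and initializing $\hat{\mathbf{y}}_1 = \argmin_{\mathbf{y}\in\mathcal{F}} R(\mathbf{y})$ bounds this by $\frac{1}{\eta}\sup_{\mathbf{f}\in\mathcal{F}}R(\mathbf{f})$ (absorbing the free additive constant in $R$), while $\lVert\nabla_t\rVert_\infty\le B$ handles the quadratic term. Dividing by $N$ gives $\reg_N \le \frac{1}{N}\bigl(\eta^{-1}\sup_{\mathbf{f}}R(\mathbf{f}) + \tfrac{1}{2}\eta N B^2\bigr)$, and substituting $\eta = \sqrt{2\sup_{\mathbf{f}}R(\mathbf{f})/(NB^2)}$ balances the two terms by AM--GM, yielding exactly $\reg_N \le \sqrt{2B^2\sup_{\mathbf{f}\in\mathcal{F}}R(\mathbf{f})/N}$. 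For the entropic choice $R(\mathbf{f})=\sum_{f'\in\mathcal{F}'}\mathbf{f}(f')\log\mathbf{f}(f')$ on $\Delta(\mathcal{F}')$, taking $\hat{\mathbf{y}}_1$ uniform gives $D_R(\mathbf{f}^\star,\hat{\mathbf{y}}_1)=\KL(\mathbf{f}^\star\,\|\,\mathrm{Unif})\le\log|\mathcal{F}'|$, so the bound holds with $\sup_{\mathbf{f}}R(\mathbf{f})$ replaced by $\log|\mathcal{F}'|$.

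The only step that is not pure bookkeeping is the one-step inequality in the second paragraph: one has to extract the three-point inequality correctly from the \emph{constrained} minimization (equivalently, view the update as an unconstrained dual step followed by a Bregman projection onto $\Delta(\mathcal{F}')$ and invoke the generalized Pythagorean inequality so that the projection only shrinks $D_R(\mathbf{f}^\star,\cdot)$), and then control the ``overshoot'' $\eta\langle\hat{\mathbf{y}}_t-\hat{\mathbf{y}}_{t+1},\nabla_t\rangle-D_R(\hat{\mathbf{y}}_{t+1},\hat{\mathbf{y}}_t)$ via strong convexity and the $\ell_1$--$\ell_\infty$ duality. Once that lemma is in hand, the remainder is the telescoping sum and a one-variable optimization over the step size $\eta$.
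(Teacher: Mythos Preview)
Your argument is the standard potential-function proof of the OMD regret bound and is correct. Note, however, that the paper does not actually prove this lemma: it is listed in the ``Useful Lemmas'' appendix as a cited result (the OMD references \citet{nemirovskij1983problem, beck2003mirror, srebro2011universality} appear in the body), with no accompanying derivation. So there is no paper proof to compare against; you have simply supplied the textbook derivation that the authors chose to cite rather than reproduce. Your handling of the one subtle point---that ``$\sup_{\mathbf f} R(\mathbf f)$'' in the statement really stands in for the Bregman radius $D_R(\mathbf f^\star,\hat{\mathbf y}_1)$ from the entropy minimizer, and is made literal only after shifting $R$ by an additive constant---is exactly right and matches how the lemma is meant to be read.
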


\end{document}